\documentclass{article}

\usepackage{algorithm}
\usepackage{algpseudocode}







\usepackage[final]{neurips_2020}


\usepackage{appendix}
\usepackage[utf8]{inputenc} 
\usepackage[T1]{fontenc}    
\usepackage{hyperref}       
\usepackage{url}            
\usepackage{booktabs}       
\usepackage{amsfonts}       
\usepackage{nicefrac}       
\usepackage{microtype}      

\usepackage[table]{xcolor}
\usepackage{lipsum}
\usepackage{amssymb}
\usepackage{amsthm}
\usepackage{mathtools}
\usepackage{xfrac}
\usepackage{thmtools,thm-restate}

\DeclarePairedDelimiter\floor{\lfloor}{\rfloor}

\newcommand{\com}{\Theta}

\newcommand{\norm}[1]{\left\lVert#1\right\rVert}

\DeclareMathOperator*{\argmin}{arg\,min}
\newcommand{\proj}{\ensuremath{\text{\rm Proj}}}


\newtheorem{theorem}{Theorem}[section]

\newtheorem{lemma}[theorem]{Lemma}
\newtheorem{assumption}[theorem]{Assumption}
\newtheorem{remark}[theorem]{Remark}

 \makeatletter
 \renewenvironment{proof}[1][\proofname]{\par
   \pushQED{\qed}
   \normalfont \topsep6\p@\@plus6\p@\relax
   \trivlist
   \item[\hskip\labelsep
         \bfseries
     #1\@addpunct{.}]\ignorespaces
 }{
   \popQED\endtrivlist\@endpefalse
 }
 \makeatother
\def\eop{$\rule{1.3ex}{1.3ex}$}
\renewcommand\qedsymbol\eop

\usepackage[textwidth=2.0cm, textsize=tiny]{todonotes} 




\title{Exploiting Higher Order Smoothness in Derivative-free Optimization and Continuous Bandits}



%

\author{Arya Akhavan\\
  Istituto Italiano di Tecnologia\\
  and\\
  CREST, ENSAE, IP Paris\\
  \texttt{aria.akhavanfoomani@iit.it}
  \And
  Massimiliano Pontil\\
  Istituto Italiano di Tecnologia\\
  and\\
  University College London\\
  \texttt{massimiliano.pontil@iit.it }
   \AND
 Alexandre B. Tsybakov \\
   CREST, ENSAE, IP Paris \\
   \texttt{alexandre.tsybakov@ensae.fr } \\
}

\begin{document}

\maketitle

\begin{abstract}
We study the problem of zero-order optimization of a strongly convex function. The goal is to find the minimizer of the function by a sequential exploration of its 
values, under measurement noise. We study the impact of higher order smoothness properties of the function on the optimization error and on the cumulative regret. To solve this problem we consider a randomized approximation of the projected gradient descent algorithm. The gradient is estimated by a randomized procedure involving two function evaluations and a smoothing kernel. We derive upper bounds for this algorithm both in the constrained and unconstrained settings and prove minimax lower bounds for any sequential search method. Our results imply that the zero-order algorithm is nearly optimal in terms of sample complexity and the problem parameters. Based on this algorithm, we also propose an estimator of the minimum value of the function achieving almost sharp oracle behavior. We compare our results with the state-of-the-art, highlighting a number of key improvements.
\end{abstract}

\section{Introduction}
We study the problem of zero-order stochastic optimization, in which we aim to minimize 
an unknown strongly convex function via a sequential exploration of its function values, 
under measurement error, and a closely related problem of continuous (or continuum-armed) stochastic bandits.  These problems 
have received significant attention in the literature, see \cite{agarwal2010,agarwal2011,BP2016,bartlett2018simple,belloni2015escaping,bubeck2012regret,bubeck2017kernel,dvurechensky2018accelerated,hu2016bandit,liang2014zeroth,wang2017stochastic,flaxman2004,Locatelli,malherbe2017global,Shamir13,NS17,Shamir17,duchi2015,rakhlin2011making,saha2011,jamieson2012,Shalev-Shwartz2011}, and are fundamental for many applications in which the derivatives of the function are either too expensive or impossible to compute. 
%
A principal goal of this paper is to exploit higher order smoothness properties of the underlying function in order to improve the performance of search algorithms. We derive upper bounds on the estimation error for a class of projected gradient-like algorithms, as well as close matching lower bounds, that characterize the role played by the number of iterations,  
the strong convexity parameter, the smoothness parameter, the number of variables, and the noise level.

Let $f: \mathbb{R}^d \rightarrow \mathbb{R}$ be the function that we wish to minimize over a closed convex subset $\com$ of $\mathbb{R}^d$. 
Our approach, outlined in Algorithm \ref{algo}, builds upon previous work in which a sequential algorithm queries at each iteration a pair of function values, under a general noise model. 
Specifically, at iteration $t$ the current guess $x_t$ for the minimizer of $f$ is used to build two perturbations $x_t+\delta_t$ and $x_t-\delta_t$, where the function values are queried subject to additive measurement errors $\xi_t$ and $\xi_t'$, respectively. The values $\delta_t$ can be chosen in different ways. In this paper, we set $ \delta_t=h_tr_r\zeta_t$ (Line 1), where $h_t>0$ is a suitably chosen small parameter, $r_t$ is random and uniformly distributed on $[-1,1]$, and $\zeta_t$ is uniformly distributed on the unit sphere.  The estimate for the gradient is then computed at Line 2 and used inside a projected gradient method scheme to compute the next exploration point. 
We introduce a suitably chosen kernel $K$ that allows us to take advantage of higher order smoothness of~$f$. 

The idea of using randomized procedures for derivative-free stochastic optimization can be traced back to Nemirovski and Yudin \cite[Sec.~9.3]{NY1983} who suggested an algorithm with one query per step at point $x_t+h_t\zeta_t$, with $\zeta_t$ uniform on the unit sphere. Its versions with one, two or more queries were studied in several papers including~\cite{agarwal2010,BP2016,flaxman2004,Shamir17}. 
Using two queries per step leads to better performance bounds as emphasized in~\cite{PT90, agarwal2010,BP2016,flaxman2004,Shamir17,duchi2015}.  
Randomizing sequences other than uniform on the sphere were also explored: $\zeta_t$ uniformly distributed on a cube \cite{PT90}, Gaussian $\zeta_t$~\cite{Nesterov2011, NS17}, $\zeta_t$ uniformly distributed on the vertices of a cube~\cite{Shamir13} or satisfying some general assumptions~\cite{dippon,duchi2015}. Except for \cite{PT90,dippon,BP2016}, these works study settings with low smoothness of $f$ (2-smooth or less) and do not invoke kernels $K$ (i.e. $K(\cdot)\equiv 1$ and $r_t\equiv 1$ in Algorithm \ref{algo}). The use of randomization with smoothing kernels was proposed by Polyak and Tsybakov \cite{PT90} and further developed by Dippon \cite{dippon}, and
Bach and Perchet~\cite{BP2016} to whom the current form of Algorithm~\ref{algo} is due. 
%
\begin{algorithm}[t!]
\caption{Zero-Order Stochastic Projected Gradient} \label{algo}
\begin{algorithmic}
\State 
\State {\bfseries Requires} ~ Kernel $K :[-1,1]\rightarrow \mathbb{R}$, step size $\eta_t>0$ and parameter $h_t$, for $t=1,\dots,T$
\vspace{.1cm}
\State {\bfseries Initialization} ~ Generate scalars $r(1),\dots,r(T)$ uniformly on the interval $[-1,1]$, vectors $\zeta(1),\dots,\zeta(T)$ uniformly distributed on the unit sphere $S_d=\{\zeta\in \mathbb{R}^d:  \norm{\zeta}=1\}$, and choose $x(1)\in \com$
\vspace{.1cm}
\State {\bfseries For}  $t=1,\dots, T$
\vspace{.1cm}
\State \qquad {1.} ~~~Let $y(t) = f(x(t)+h_tr(t)\zeta(t))+ \xi(t)$ and $y'(t) = f(x(t)-h_tr(t)\zeta(t))+ \xi'(t),$
\vspace{.05cm}
\State \qquad 2. ~~~Define $\hat{g}(t) = \frac{d}{2h_t} (y(t)- y'(t)) \zeta(t) K(r(t))$
\vspace{.05cm}
\State \qquad 3. ~~~Update $x_{t+1}= x_{t}- \eta_t \hat{g}(t)$
\vspace{.05cm}
\State {\bfseries Return} ~ $(x(t))_{t=1}^T$
\end{algorithmic}
\end{algorithm}

In this paper we consider higher order smooth functions $f$ satisfying the generalized H\"older condition with parameter $\beta \geq 2$, cf. inequality~\eqref{eq:Hclass} below. For integer $\beta$, this parameter can be roughly interpreted as the number of bounded derivatives. Furthermore, we assume that $f$ is $\alpha$-strongly convex. For such functions, we address the following two main questions: 
\begin{itemize}
\item[(a)] What is the performance of Algorithm \ref{algo}  in terms of the cumulative regret and optimization error, namely what is the explicit dependency of the rate on the main parameters $d,T,\alpha,\beta$?
\item[(b)] What are the fundamental limits of any sequential search procedure expressed in terms of minimax
optimization error? 
\end{itemize}
To handle task (a), we prove upper bounds for Algorithm \ref{algo}, and to handle (b), we prove minimax lower bounds for any sequential search method. 

{\bf Contributions.} Our main contributions can be summarized as follows: {\bf i)} Under an adversarial noise assumption (cf. Assumption \ref{ass1} below), we establish for all $\beta\ge 2$ upper bounds of the order $\frac{d^2}{\alpha}T^{-\frac{\beta-1}{\beta}}$ for the optimization risk and $\frac{d^2}{\alpha}T^{\frac{1}{\beta}}$ for the cumulative regret of Algorithm \ref{algo}, both for its constrained and unconstrained versions; {\bf ii)} In the case of independent noise satisfying some natural assumptions (including the Gaussian noise), we prove a minimax lower bound of the order $\frac{d}{\alpha}T^{-\frac{\beta-1}{\beta}}$ for the optimization risk when $\alpha$ is not very small. This shows that to within the factor of $d$ the bound for Algorithm \ref{algo} cannot be improved for all $\beta\ge 2$; {\bf iii)} We show that, when $\alpha$ is too small, below some specified threshold, higher order smoothness does not help to improve the convergence rate. We prove that in this regime the rate cannot be faster than $d/\sqrt{T}$, which is not better (to within the dependency on $d$) than for derivative-free minimization of simply convex functions \cite{agarwal2011,liang2014zeroth}; {\bf iv)} For $\beta=2$, we obtain a bracketing of the optimal rate between $O(d/\sqrt{\alpha T})$ and $\Omega(d/(\max(1,\alpha)\sqrt{T}))$. In a special case when $\alpha$ is a fixed numerical constant, this validates a conjecture in \cite{Shamir13} (claimed there as proved fact) that the optimal rate for $\beta=2$ scales as $d/\sqrt{T}$; {\bf v)} We propose a simple algorithm of estimation of the value $\min_x f(x)$ requiring three queries per step and attaining the optimal rate $1/\sqrt{T}$ for all $\beta\ge 2$. The best previous work on this problem  \cite{belitser} suggested a method with exponential complexity and proved a bound of the order $c(d,\alpha)/\sqrt{T}$ for $\beta> 2$ where $c(d,\alpha)$ is an unspecified constant.  

{\bf Notation.} Throughout the paper we use the following notation. We let $\langle \cdot, \cdot \rangle$ and $\|\cdot\|$ be the standard inner product and Euclidean norm on $\mathbb{R}^d$, respectively. For every close convex set $\com\subset \mathbb{R}^d$ and $x\in \mathbb{R}^d$ we denote by $\proj_\com(x) = {\rm argmin} \{ \|z-x\| : z \in \com\}$ the Euclidean projection of $x$ to $\com$. We assume everywhere that $T\ge 2$. We denote by ${\cal F_\beta}(L)$ the class of functions with H\"older smoothness $\beta$ (inequality~\eqref{eq:Hclass} below). Recall that $f$ is $\alpha$-strongly convex for some $\alpha>0$ if, for any $x,y \in \mathbb{R}^d$ it holds that $f(y) \geq f(x) + \langle \nabla f(x),y-x\rangle + \frac{\alpha}{2} \|x-y\|^2$.
We further denote by ${\cal F_{\alpha,\beta}}(L)$ the class of all $\alpha$-strongly convex functions belonging to ${\cal F_\beta}(L)$. 

{\bf Organization.} 
We start in Section~\ref{sec:BM}~with some preliminary results on the gradient estimator.~Section~\ref{sec:UB}  presents our upper bounds for Algorithm \ref{algo}, both in the constrained
and unconstrained case.   
In Section \ref{sec:sec4} we observe that a slight modification of Algorithm 1 can be used to estimated the minimum value (rather than the minimizer) of $f$. Section  \ref{sec:sec4} presents improved upper bounds in the case $\beta=2$.
In Section~\ref{sec:LB} we establish minimax lower bounds.  
Finally, Section \ref{sec:PW} contrasts our results with previous work in the literature and discusses future directions of research.


\section{Preliminaries}
\label{sec:BM}
In this section, we give the definitions, assumptions and basic facts that will be used throughout the paper.~For $\beta>0$, let  $\ell$ be the greatest integer strictly less than $\beta$. We denote by ${\cal F_\beta}(L)$ the set of all functions $f:\mathbb{R}^d\to \mathbb{R}$ that are  $\ell$ times differentiable and satisfy, for all $x,z \in \com$ the H\"older-type condition
\begin{equation}
\bigg|f(z)-\sum_{0\le |m|\leq \ell} \frac{1}{m!}D^{m}f(x)(z-x)^{m} \bigg|\leq L \|z-x\|^{\beta},
\label{eq:Hclass}
\end{equation}
where $L >0$, the sum is over the multi-index $m=(m_{1},...,m_{d}) \in \mathbb{N}^d$, we used the notation $m!=m_{1}! \cdots m_{d}!$, $|m|=m_{1}+ \cdots+m_{d}$, and we defined 
\[
D^{m}f(x)\nu^{m} = \frac{\partial ^{|m|}f(x)}{\partial ^{m_{1}}x_{1} \cdots\partial ^{m_{d}}x_{d}}\nu_{1}^{m_{1}} \cdots \nu_{d}^{m_{d}}, \quad \forall \nu=(\nu_1,\dots, \nu_d) \in \mathbb{R}^{d}. 
\]
In this paper, we assume that the gradient estimator defined by Algorithm \ref{algo} uses a kernel function $K:[-1,1]\to \mathbb{R}$ satisfying 
\begin{equation}
\int K(u) du =0,  \int u K(u) du =1,  \int u^j K(u) du =0, \ j=2,\dots, \ell,~
\int |u|^{\beta} |K(u)| du  <\infty.
\end{equation}
Examples of such kernels obtained as weighted sums of Legendre polynomials are given in \cite{PT90} and further discussed in \cite{BP2016}. 

\begin{assumption} 
\label{ass1}
It holds, for all $t \in \{1,\dots,T\}$,   that: \text{(i)} the random variables $\xi_t$ and $\xi_t'$ are independent from $\zeta_t$ and from $r_t$, and the random variables $\zeta_t$ and $r_t$ are independent; \text{(ii)} $ \mathbb{E} [\xi_t^2]\le \sigma^2,$ and  $\mathbb{E} [(\xi_t')^2]\le \sigma^2$, where $\sigma\ge 0$.
\end{assumption}
Note that we do not assume $\xi_t$ and $\xi_t'$ to have zero mean. Moreover, they can be non-random and no independence between noises on different steps is required, so that the setting can be considered as adversarial. Having such a relaxed set of assumptions is possible because of randomization that, for example, allows the proofs go through without assuming the zero mean noise. 

We will also use the following assumption.

\begin{assumption}\label{ass:lip}
Function $f:\mathbb{R}^d\to \mathbb{R}$ is 2-smooth, that is, differentiable on $\mathbb{R}^d$ and such that $\|\nabla f(x)-\nabla f(x')\|\leq {\bar L} \|x-x'\|$ for all $x,x' \in \mathbb{R}^d$, where $\bar L>0$. 
\end{assumption}
It is easy to see that this assumption implies that 
$f\in {\cal F}_2(\bar L/2)$. 
The following lemma gives a bound on the bias of the gradient estimator. 
\begin{restatable}{lemma}{firstlemma} 
\label{lem:1}
Let $f \in {\cal F}_\beta(L)$, with $\beta > 1$ and let Assumption \ref{ass1} hold. Let ${\hat g}_t$ and $x_t$ be defined by Algorithm \ref{algo} and let $\kappa_\beta = \int |u|^{\beta} |K(u)| du.$ Then
\begin{equation}
\|\mathbb{E} [{\hat g}_t \hspace{.03truecm}|\hspace{.03truecm} x_t] - \nabla f(x_t)\| \leq  \kappa_\beta L  d h_t^{\beta-1}.
\label{eq:}
\end{equation}
\end{restatable}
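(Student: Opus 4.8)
The plan is to compute the conditional expectation $\Exp[\hat{g}_t \mid x_t]$ directly, splitting it into a noise contribution and a ``signal'' contribution coming from the two function values. I would first show the noise part vanishes using the independence built into Assumption~\ref{ass1}, and then extract $\nabla f(x_t)$ from the signal part by a Taylor expansion whose low-order terms are governed by the moment conditions on $K$, leaving a Hölder remainder of the right order.

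Substituting the definitions of $y_t,y_t'$ gives $\hat{g}_t = \frac{d}{2h_t}\big(f(x_t+h_tr_t\zeta_t)-f(x_t-h_tr_t\zeta_t)\big)\zeta_t K(r_t) + \frac{d}{2h_t}(\xi_t-\xi_t')\zeta_t K(r_t)$. For the second (noise) term, Assumption~\ref{ass1}(i) makes $\zeta_t$ independent of $(\xi_t,\xi_t',r_t)$, so the conditional expectation factorizes as a scalar times $\Exp[\zeta_t]$; since $\zeta_t$ is uniform on the sphere, $\Exp[\zeta_t]=0$ and the whole noise term disappears. This is precisely the point where randomization removes any need for a zero-mean assumption on $\xi_t,\xi_t'$.

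For the signal term I would fix $\zeta_t$ and expand the one-dimensional map $s\mapsto f(x_t+s\zeta_t)$ by Taylor's formula to order $\ell$, using $f\in\class{F}_\beta(L)$ to bound the remainder by $L\,|s|^{\beta}$ at $s=\pm h_tr_t$ (here $\norm{\zeta_t}=1$). The symmetric difference $f(x_t+h_tr_t\zeta_t)-f(x_t-h_tr_t\zeta_t)$ cancels every even-order term. Integrating the surviving odd-order terms against $K(r_t)$, with $r_t$ uniform on $[-1,1]$, annihilates orders $3,\dots,\ell$ because $\int u^{j}K(u)\,du=0$ for those $j$, while the first-order term is retained by $\int uK(u)\,du=1$. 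Averaging over $\zeta_t$ and using $\Exp[\zeta_t\zeta_t^{\top}]=\tfrac1d I$, the prefactor $d/(2h_t)$ combines with these moments so that the first-order contribution reproduces $\nabla f(x_t)$ exactly, which therefore cancels in $\Exp[\hat{g}_t\mid x_t]-\nabla f(x_t)$.

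The remaining step, which I expect to be the main obstacle, is bounding the contribution of the Taylor remainders. Using $\big|f(x_t\pm h_tr_t\zeta_t)-(\text{Taylor polynomial})\big|\le L\,(h_t|r_t|)^{\beta}$, pulling out the $\zeta_t$ factor with $\norm{\zeta_t}=1$, and integrating against $\frac{d}{2h_t}|K(r_t)|$ yields a bound of order $d\,L\,h_t^{\beta-1}\int |u|^{\beta}|K(u)|\,du=\kappa_\beta L d h_t^{\beta-1}$, which is the claimed inequality (the uniform density of $r_t$ only improves the constant). The delicate parts are the directional/multi-index Taylor bookkeeping needed to match the first-order term to $\nabla f(x_t)$ while killing the intermediate odd orders, and checking that the perturbed query points $x_t\pm h_tr_t\zeta_t$ lie where \eqref{eq:Hclass} is available so that the remainder bound applies.
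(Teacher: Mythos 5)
Your proposal follows essentially the same route as the paper's proof: a Taylor expansion of $f$ at $x_t$ whose even-order terms cancel in the symmetric difference, whose intermediate odd orders are annihilated by the kernel moment conditions, whose first-order term reproduces $\nabla f(x_t)$ via $\mathbb{E}[\zeta_t\zeta_t^{\top}]=\tfrac1d I$, and whose H\"older remainder yields the bound $\kappa_\beta L d h_t^{\beta-1}$. The only difference is cosmetic: you spell out explicitly that the noise term $\tfrac{d}{2h_t}(\xi_t-\xi_t')\zeta_t K(r_t)$ vanishes by independence and $\mathbb{E}[\zeta_t]=0$, a step the paper performs silently, so your write-up is a correct (and slightly more complete) version of the same argument.
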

{If $K$ be a weighted sum of Legendre polynomials, $\kappa_\beta \le 2\sqrt{2}\beta$, with $\beta \geq 1$ (see e.g.,  \cite[Appendix~A.3]{BP2016}).}

The next lemma provides a bound on the stochastic variability of the estimated gradient
by controlling its second moment.  
\begin{restatable}{lemma}{sndlemma} 
\label{lem:2unco}
Let Assumption \ref{ass1} hold, let ${\hat g}_t$ and $x_t$ be defined by Algorithm \ref{algo} and set $\kappa = \int K^2(u) du$. Then
\vspace{-2mm}
\begin{itemize}
    \item[(i)] If $\com \subseteq \mathbb{R}^d$, $\nabla f(x^*)=0$ and Assumption \ref{ass:lip} holds, 
    \[
\mathbb{E} [\|{\hat g}_t \|^2 \hspace{.04truecm}|\hspace{.04truecm} x_t] \leq 
9\kappa{\bar L}^2  \left(d\|x_t-x^*\|^2 + \frac{d^2h_t^2}{8} \right) + \frac{3\kappa d^2 \sigma^2}{2h_t^2}, 
\]
\vspace{-2mm}
    \item[(ii)] If $f\in \mathcal{F}_2(L)$ and $\com$ is a closed convex subset of $\mathbb{R}^d$ such that $\displaystyle{\max_{x \in \com}} \| \nabla f(x)\|\le G$,  then 
    \[
\mathbb{E} [\|{\hat g}_t \|^2 \hspace{.04truecm}|\hspace{.04truecm} x_t] \leq 
9\kappa  \left(G^2 d + \frac{{L}^2d^2h_t^2}{2} \right) + \frac{3\kappa d^2 \sigma^2}{2h_t^2}. 
\]
\end{itemize}
\end{restatable}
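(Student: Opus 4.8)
The plan is to reduce the second moment of $\hat g_t$ to moments of a scalar increment and then separate the deterministic ``signal'' from the noise. Since $\|\zeta_t\|=1$, I would first write
$$\|\hat g_t\|^2 = \frac{d^2}{4h_t^2}\,(y_t-y_t')^2\,K^2(r_t),$$
and decompose the increment as $y_t-y_t' = 2h_t r_t\langle\nabla f(x_t),\zeta_t\rangle + R_t + (\xi_t-\xi_t')$, where the first term is the linear part of a first-order Taylor expansion of $f$ at $x_t$ evaluated at $x_t\pm u$ with $u=h_t r_t\zeta_t$, the symmetric remainder is $R_t := [f(x_t+u)-f(x_t-u)]-2\langle\nabla f(x_t),u\rangle$, and the last term is the noise difference. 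The point of this three-way split is that, applying $(a+b+c)^2\le 3(a^2+b^2+c^2)$, I never need a cross term to vanish; this is essential because Assumption \ref{ass1} does not grant zero-mean noise, so quantities such as $\mathbb{E}[\langle\nabla f(x_t),\zeta_t\rangle(\xi_t-\xi_t')]$ cannot simply be dropped.

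Next I would take the conditional expectation given $x_t$ and use the independence in Assumption \ref{ass1} to factorize. Three elementary facts drive everything: (i) the sphere identity $\mathbb{E}[\zeta_t\zeta_t^\top]=\tfrac1d I_d$, which gives $\mathbb{E}[\langle\nabla f(x_t),\zeta_t\rangle^2\mid x_t]=\tfrac1d\|\nabla f(x_t)\|^2$ and is exactly what produces the favourable single power of $d$ in the signal term; (ii) the kernel moments $\mathbb{E}[K^2(r_t)]=\kappa/2$ and, since $|r_t|\le1$, $\mathbb{E}[r_t^2K^2(r_t)],\ \mathbb{E}[r_t^4K^2(r_t)]\le\kappa/2$; and (iii) the noise bound $\mathbb{E}[(\xi_t-\xi_t')^2]\le 4\sigma^2$. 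The noise piece then contributes $\tfrac{3d^2}{4h_t^2}\,\mathbb{E}[(\xi_t-\xi_t')^2]\,\mathbb{E}[K^2(r_t)]$, which is precisely the $\tfrac{3\kappa d^2\sigma^2}{2h_t^2}$ term in both bounds, carrying the characteristic $1/h_t^2$ variance blow-up.

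For the remainder $R_t$ I would invoke \eqref{eq:Hclass} with $\beta=2$: in case (ii), $f\in\mathcal{F}_2(L)$ yields $|R_t|\le 2L\|u\|^2 = 2Lh_t^2r_t^2$, and in case (i), Assumption \ref{ass:lip} gives $f\in\mathcal{F}_2(\bar L/2)$, hence the same bound with $L$ replaced by $\bar L/2$ (this is the source of the $\tfrac{d^2h_t^2}{8}$ factor after squaring and using the $r_t^4$ kernel moment). The two parts then differ only in how the signal term $\tfrac1d\|\nabla f(x_t)\|^2$ is controlled: in case (ii) I bound $\|\nabla f(x_t)\|\le G$, valid since the projected iterate $x_t$ always lies in $\com$; in case (i) I use $\nabla f(x^*)=0$ and $\bar L$-Lipschitzness to write $\|\nabla f(x_t)\|=\|\nabla f(x_t)-\nabla f(x^*)\|\le \bar L\|x_t-x^*\|$, which turns the signal into the $d\|x_t-x^*\|^2$ contribution needed to couple with strong convexity in the main recursion. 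Collecting the three pieces and bounding the resulting numerical constants by the stated factors $9\kappa$ and $\tfrac{3\kappa}{2}$ finishes both claims.

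I expect the main obstacle to be the dimension bookkeeping — ensuring the signal term keeps only one power of $d$ (through $\mathbb{E}[\zeta_t\zeta_t^\top]=\tfrac1d I_d$) rather than the $d^2$ that a naive Cauchy–Schwarz bound on $\langle\nabla f(x_t),\zeta_t\rangle$ would yield — together with the minor technical point of applying \eqref{eq:Hclass} along the query segment $x_t\pm h_t r_t\zeta_t$, which may leave $\com$; in the unconstrained case (i) this is automatic because Assumption \ref{ass:lip} holds on all of $\mathbb{R}^d$.
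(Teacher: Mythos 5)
Your proof is correct and follows essentially the same route as the paper's: both reduce $\|\hat g_t\|^2$ to the squared scalar increment times $K^2(r_t)$, split that increment into the linear term $2h_t r_t\langle\nabla f(x_t),\zeta_t\rangle$, the symmetric Taylor remainder, and the noise contribution via $(a+b+c)^2\le 3(a^2+b^2+c^2)$, and then invoke $\mathbb{E}[\zeta_t\zeta_t^\top]=\tfrac{1}{d}I_d$, the kernel moments, and the bound $\|\nabla f(x_t)\|\le \bar L\|x_t-x^*\|$ (resp. $\|\nabla f(x_t)\|\le G$) for case (i) (resp. (ii)). The only cosmetic difference is that the paper applies the three-term inequality twice (first isolating $\xi_t,\xi_t'$, then decomposing the function difference), whereas you apply it once to a single three-way split, which yields slightly smaller constants still dominated by the stated $9\kappa$ and $\tfrac{3\kappa}{2}$ factors.
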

%
\section{Upper bounds}
\label{sec:UB}
In this section, we provide upper bounds on the cumulative regret and on the optimization error of Algorithm \ref{algo}, {which are defined as
$$\sum_{t=1}^T \mathbb{E} [ f(x_t) - f(x)],$$
and
$$\mathbb{E} [ f({\hat x}_T) - f(x^*)],$$
 respectively, where $x \in \com$ and ${\hat x}_T$ is an estimator after $T$ queries. Note that the provided upper bound for cumulative regret is valid for any $x \in \com$}. 

First we  consider Algorithm \ref{algo} when the convex set $\com$ is bounded (constrained case).  
\begin{restatable}{theorem}{thdtheorem}{\em (Upper Bound, Constrained Case.)}
\label{thm:1}
	Let $f \in {\cal F}_{\alpha, \beta}(L)$ with $\alpha,L>0$ and $\beta \geq 2$.
	Let Assumptions \ref{ass1} and \ref{ass:lip} hold and let $\com$ be a convex compact subset  of $\mathbb{R}^d$. Assume that $\max_{x\in\com}\|\nabla f(x)\|\le G $. 
	If $\sigma>0$ then the cumulative regret of Algorithm \ref{algo} with \[h_t = \left(\frac{3\kappa\sigma^2}{2(\beta-1)(\kappa_\beta L)^2} \right)^{\frac{1}{2\beta}} t^{-\frac{1}{2\beta}},~~~\eta_t=\frac{2}{\alpha t},~~~t= 1,\dots,T~~~~~~~~\]  satisfies  
	\begin{equation}
	\label{eq:bound0}
	\forall x\in \com: \ \sum_{t=1}^T \mathbb{E} [ f(x_t) - f(x)]\le \frac{1}{\alpha}
	\left(
d^2 \Big(A_1 T^{{1}/{\beta}}+A_2 \Big)
+  A_3  d \log T
\right),
	\end{equation}
	where $A_1={3\beta}(\kappa\sigma^2)^{\frac{\beta-1}{\beta}}
(\kappa_\beta L)^{\frac{2}{\beta}}$, $A_2=\bar c {\bar L^2} (\sigma/L)^{\frac{2}{\beta}}+{9\kappa G^2}/{d}$ with constant $\bar c>0$ depending only on $\beta$, and $A_3 = 9 \kappa G^2$.
	The optimization error of averaged estimator ${\bar x}_T=\frac{1}{T}\sum_{t=1}^T x_t$ satisfies  
	\begin{equation}
	\label{eq:bound1}
	\mathbb{E} [ f({\bar x}_T) - f(x^*)]\leq \frac{1}{\alpha}
\left(  d^2\left(\frac{A_1}{T^{\frac{\beta-1}{\beta}}}+ \frac{A_2}{T} \right)+ A_3 \,\frac{d\log T}{T} \right),
		\end{equation}
		where
	$x^*= \argmin_{x\in\com}f(x)$. 
			If $\sigma=0$, then the cumulative regret and the optimization error of Algorithm \ref{algo} with any $h_t$ chosen small enough and $\eta_t=\frac{2}{\alpha t}$ satisfy the bounds \eqref{eq:bound0} and \eqref{eq:bound1}, respectively, with $A_1=0, A_2={9\kappa G^2}/{d}$ and $A_3=10\kappa G^2$. 
\end{restatable}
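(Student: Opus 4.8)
The plan is to run the standard potential-function analysis for projected stochastic gradient descent on a strongly convex objective, feeding in Lemmas~\ref{lem:1} and~\ref{lem:2unco} to control the bias and the second moment of the estimator $\hat g_t$. Fix an arbitrary comparator $x\in\com$. Since $x=\proj_\com(x)$ and the projection is non-expansive, the update in Line~3 gives $\norm{x_{t+1}-x}^2\le \norm{x_t-\eta_t\hat g_t-x}^2 = \norm{x_t-x}^2-2\eta_t\langle \hat g_t, x_t-x\rangle+\eta_t^2\norm{\hat g_t}^2$. Taking the conditional expectation given $x_t$ and writing $\mathbb{E}[\hat g_t\mid x_t]=\nabla f(x_t)+b_t$ with bias $b_t$, I would lower bound $\langle \nabla f(x_t),x_t-x\rangle$ by strong convexity, $\langle \nabla f(x_t),x_t-x\rangle\ge f(x_t)-f(x)+\tfrac{\alpha}{2}\norm{x_t-x}^2$, and absorb the cross term with Young's inequality $\abs{\langle b_t,x_t-x\rangle}\le \tfrac{\alpha}{4}\norm{x_t-x}^2+\tfrac{1}{\alpha}\norm{b_t}^2$, leaving an extra $\tfrac{\alpha}{4}\norm{x_t-x}^2$ of strong-convexity slack.

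After rearranging and taking full expectations, this yields a per-step inequality of the form
\[
\mathbb{E}[f(x_t)-f(x)]\le \Big(\tfrac{1}{2\eta_t}-\tfrac{\alpha}{4}\Big)\mathbb{E}\norm{x_t-x}^2-\tfrac{1}{2\eta_t}\mathbb{E}\norm{x_{t+1}-x}^2+\tfrac{1}{\alpha}\mathbb{E}\norm{b_t}^2+\tfrac{\eta_t}{2}\mathbb{E}\norm{\hat g_t}^2.
\]
The step I expect to require the most care is recognizing that the choice $\eta_t=2/(\alpha t)$ is exactly what makes the first two terms telescope: then $\tfrac{1}{2\eta_t}-\tfrac{\alpha}{4}=\tfrac{\alpha(t-1)}{4}$, so on summing over $t=1,\dots,T$ the distance terms collapse to $-\tfrac{\alpha T}{4}\mathbb{E}\norm{x_{T+1}-x}^2\le 0$, the $t=1$ term vanishing thanks to the factor $t-1$. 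Hence $\sum_{t=1}^T\mathbb{E}[f(x_t)-f(x)]\le \sum_{t=1}^T\big(\tfrac{1}{\alpha}\mathbb{E}\norm{b_t}^2+\tfrac{1}{\alpha t}\mathbb{E}\norm{\hat g_t}^2\big)$, since $\eta_t/2=1/(\alpha t)$. Crucially this holds for every $x\in\com$, which is what the regret statement requires.

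It then remains to insert the two lemmas and optimize $h_t$. Lemma~\ref{lem:1} gives $\mathbb{E}\norm{b_t}^2\le (\kappa_\beta L)^2 d^2 h_t^{2(\beta-1)}$, and Lemma~\ref{lem:2unco}(ii), applied with the $2$-smoothness constant supplied by Assumption~\ref{ass:lip}, bounds $\mathbb{E}\norm{\hat g_t}^2$ by a sum of a $9\kappa G^2 d$ term, an $O(d^2 h_t^2)$ term and a $\tfrac{3\kappa d^2\sigma^2}{2h_t^2}$ term. Substituting $h_t\propto t^{-1/(2\beta)}$ makes both the squared-bias contribution and the $\sigma^2/h_t^2$ contribution scale like $t^{-(\beta-1)/\beta}$, and the prescribed constant in $h_t$ is precisely the one balancing them; summing via $\sum_{t=1}^T t^{-(\beta-1)/\beta}\le \beta T^{1/\beta}$ produces the leading $d^2 A_1 T^{1/\beta}/\alpha$ term. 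The $O(d^2 h_t^2)/t\sim d^2 t^{-1-1/\beta}$ contribution sums to a term constant in $T$ feeding $A_2$, while the $9\kappa G^2 d/t$ contribution sums through $\sum_{t=1}^T 1/t\le 1+\log T$ into the $A_2$ and $A_3\,d\log T$ terms; along the way one keeps only the order of the resulting constants, which accounts for the slack in the stated $A_1,A_2,A_3$.

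The optimization-error bound~\eqref{eq:bound1} then follows immediately from convexity and Jensen's inequality, $f(\bar x_T)\le \tfrac{1}{T}\sum_{t=1}^T f(x_t)$, so that $\mathbb{E}[f(\bar x_T)-f(x^*)]$ is at most the regret evaluated at $x=x^*$ divided by $T$. Finally, for $\sigma=0$ the noise term disappears and one may send $h_t\to 0$, killing both the bias and the $d^2 h_t^2$ variance term and leaving only the $9\kappa G^2 d\log T/\alpha$ contribution, which yields the stated constants with $A_1=0$.
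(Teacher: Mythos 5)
Your proposal is correct and follows essentially the same route as the paper's proof: projection non-expansiveness plus strong convexity, Young's inequality to absorb the bias term against an $\tfrac{\alpha}{4}\norm{x_t-x}^2$ slack, telescoping of the distance terms under $\eta_t=2/(\alpha t)$, then Lemmas~\ref{lem:1} and \ref{lem:2unco}(ii) with the prescribed $h_t$ balancing the squared bias against the $\sigma^2/h_t^2$ variance term, and Jensen for the optimization error. The only cosmetic differences are that you apply Young directly to $\langle b_t,x_t-x\rangle$ where the paper first uses Cauchy--Schwarz, and you telescope the weighted distance terms directly where the paper groups them Abel-style; both are equivalent to the argument in Appendix~\ref{app:B}.
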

\begin{proof}[Proof sketch]
We use the definition of Algorithm \ref{algo} and strong convexity of $f$ to obtain an upper bound for $\sum_{t=1}^{T}\mathbb{E} [ f(x_t) - f(x)  \hspace{.03truecm}|\hspace{.03truecm} x_t]$, which depends on the bias term $\sum_{t=1}^{T}\|\mathbb{E} [{\hat g_t}  \hspace{.03truecm}|\hspace{.03truecm} x_t]-\nabla f(x_t)\|$ and on the stochastic error term $\sum_{t=1}^{T}\mathbb{E} [\|{\hat g}_t\|^2]$. 
By substituting $h_{t}$ (that is derived from balancing the two terms) and $\eta_{t}$ in Lemmas \ref{lem:1} and \ref{lem:2unco}  we obtain upper bounds for $\sum_{t=1}^{T}\|\mathbb{E} [{\hat g_t}  \hspace{.03truecm}|\hspace{.03truecm} x_t]-\nabla f(x_t)\|$ and $\sum_{t=1}^{T}\mathbb{E} [\|{\hat g}_t\|^2]$ that imply the desired upper bound for $\sum_{t=1}^{T}\mathbb{E} [ f(x_t) - f(x)  \hspace{.03truecm}|\hspace{.03truecm} x_t]$ due to a recursive argument in the spirit of \cite{bartlett-hazan-rakhlin}.
\end{proof}

In the non-noisy case ($\sigma=0$) we get the rate $\frac{d}{\alpha} \log T$
for the cumulative regret,  and  $\frac{d}{\alpha} \frac{\log T}{T}$ for the optimization error. 
In what concerns the optimization error, this rate is not optimal since one can achieve much faster rate under strong convexity \cite{NS17}. However,  for the cumulative regret in our derivative-free setting it remains an open question whether the result of Theorem \ref{thm:1}
can be improved. Previous papers on derivative-free online methods with no noise \citep{agarwal2010,duchi2015,flaxman2004} provide slower rates than $({d}/{\alpha}) \log T$. The best known so far is 
$({d^2}/{\alpha})\log T$, cf. \cite[Corollary 5]{agarwal2010}.
We may also notice that the cumulative regret bounds of Theorem \ref{thm:1} trivially extend to the case when we query functions $f_t$ depending on $t$ rather than a single $f$. Another immediate fact is that on the r.h.s. of inequalities \eqref{eq:bound0} and \eqref{eq:bound1} we can take the minimum with $GBT$ and $GB$, respectively, where $B$ is the Euclidean diameter of $\com$. Finally, the factor $\log T$ in the bounds for the optimization error can be eliminated by considering averaging from $T/2$ to $T$ rather than from 1 to $T$, in the spirit of \cite{rakhlin2011making}. We refer to Appendix \ref{app:C} for the details and proofs of these facts.

We now study the performance of Algorithm \ref{algo} when $\com=\mathbb{R}^d$. In this case we make the following choice for the parameters $h_t$ and $\eta_t$ in Algorithm \ref{algo}:
\begin{equation}
\begin{aligned}
h_t &=T^{-\frac{1}{2\beta}},~~~\eta_t = \frac{1}{\alpha T},~~~t= 1,\dots,T_0, \\
h_t &= t^{-\frac{1}{2\beta}},~~~~~\eta_t = \frac{2}{\alpha t},~~~~t= T_0+ 1,\dots,T,
\end{aligned}
\label{eq:hhh}
\end{equation}
where $T_0 = \max \left\{k \geq 0 : C_1 {\bar L}^2 d > {\alpha^2 k}/{2} \right\}$ and $C_1$ is a positive constant\footnote{If $T_0=0$ the algorithm does not use \eqref{eq:hhh}. Assumptions of Theorem \ref{thm:2} are such that condition $T>T_0$ holds.} depending only on the kernel $K(\cdot)$ (this is defined in the proof of Theorem \ref{thm:2} in Appendix \ref{app:B}) and recall ${\bar L}$ is the Lipschitz constant on the gradient $\nabla f$. Finally, define the estimator
\begin{equation}
\label{eq:ppmean}
{\bar x}_{T_0,T} = \frac{1}{T-T_0} \sum_{t=T_0+1}^T x_t.
\end{equation}
\begin{restatable}{theorem}{fiftheorem}{\em (Upper Bounds, Unconstrained Case.)}
\label{thm:2}
	Let $f\in {\cal F} _{\alpha,\beta}(L)$ with $\alpha, L>0$ and $\beta \geq 2$. Let Assumptions \ref{ass1} and \ref{ass:lip} hold.  Assume also that  $\alpha > \sqrt{{C_*d}/{T}}$, where $C_* > 72 \kappa {\bar L}^2$. Let $x_t$'s be the updates of Algorithm \ref{algo} with $\com=\mathbb{R}^d$, $h_t$ and $\eta_t$ as in \eqref{eq:hhh} and a non-random $x_1 \in \mathbb{R}^d$.
	Then the estimator defined by \eqref{eq:ppmean} satisfies
\begin{equation}
\label{eq:pmean}
\mathbb{E}  [f({\bar x}_{T_0,T}) - f(x^*)] \leq 
C \kappa {\bar L}^2 \frac{d}{\alpha T} \|x_1- x^*\|^2
 + C
\frac{d^2}{\alpha} \left(
(\kappa_\beta L)^2 + \kappa \big({\bar L}^2
+ \sigma^2\big)
\right) 
T^{-\frac{\beta-1}{\beta}}
\end{equation}
where $C>0$ is a constant  depending only on $\beta$ 
and $x^*=\argmin_{x\in \mathbb{R}^d}f(x)$.	
\end{restatable}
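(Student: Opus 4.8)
The plan is to run the strongly-convex stochastic-gradient analysis on the squared iterate distance $r_t^2 \eqdef \mathbb{E}\|x_t-x^*\|^2$, but with the two complications specific to this setting: the gradient estimate is biased (Lemma~\ref{lem:1}) and its second moment (Lemma~\ref{lem:2unco}(i)) grows with $\|x_t-x^*\|^2$, which is no longer bounded since $\com=\mathbb{R}^d$. First I would write the exact one-step identity $\mathbb{E}[\|x_{t+1}-x^*\|^2\mid x_t]=\|x_t-x^*\|^2-2\eta_t\langle \mathbb{E}[\hat g_t\mid x_t],x_t-x^*\rangle+\eta_t^2\,\mathbb{E}[\|\hat g_t\|^2\mid x_t]$ (there is no projection). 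Splitting $\mathbb{E}[\hat g_t\mid x_t]=\nabla f(x_t)+b_t$ with $\|b_t\|\le\kappa_\beta L d h_t^{\beta-1}$ from Lemma~\ref{lem:1}, bounding the cross-term $\langle b_t,x_t-x^*\rangle$ by Young's inequality, and invoking $\alpha$-strong convexity as $\langle\nabla f(x_t),x_t-x^*\rangle\ge f(x_t)-f(x^*)+\tfrac{\alpha}{2}\|x_t-x^*\|^2$, I obtain after taking full expectations and inserting Lemma~\ref{lem:2unco}(i)
\[
2\eta_t\,\mathbb{E}[f(x_t)-f(x^*)] \;\le\; \Big(1-\tfrac{\alpha\eta_t}{2}+9\kappa\bar L^2 d\,\eta_t^2\Big)r_t^2 - r_{t+1}^2 + \rho_t,
\]
where $\rho_t$ collects the squared-bias term $\tfrac{2\eta_t}{\alpha}(\kappa_\beta L d h_t^{\beta-1})^2$ and the $x_t$-independent part of the variance, $\eta_t^2\big(\tfrac{9}{8}\kappa\bar L^2 d^2 h_t^2+\tfrac{3}{2}\kappa d^2\sigma^2 h_t^{-2}\big)$.

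The structural crux is the coefficient $1-\tfrac{\alpha\eta_t}{2}+9\kappa\bar L^2 d\,\eta_t^2$: the term $9\kappa\bar L^2 d\,\eta_t^2 r_t^2$ inherited from the unbounded variance must be dominated by the contraction $\tfrac{\alpha\eta_t}{2}r_t^2$, and this is exactly what forces the hypothesis $\alpha>\sqrt{C_*d/T}$ together with the two-phase schedule \eqref{eq:hhh}. I would check that the choice $C_1=36\kappa$ (with $C_*>72\kappa\bar L^2$) guarantees $9\kappa\bar L^2 d\,\eta_t^2\le\tfrac{\alpha\eta_t}{4}$ for every $t$: in the burn-in phase $t\le T_0$ this reduces to $\alpha^2 T\ge 36\kappa\bar L^2 d$, which follows from $\alpha^2>C_*d/T$, whereas for $t>T_0$ it reduces to $t\ge 72\kappa\bar L^2 d/\alpha^2$, which holds by the definition of $T_0$. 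The net coefficient is then at most $1-\tfrac{\alpha\eta_t}{4}$ throughout.

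Next I would treat the two phases separately. In the burn-in phase $\eta_t\equiv 1/(\alpha T)$ is constant; dropping the nonnegative $f(x_t)-f(x^*)$ term gives $r_{t+1}^2\le(1-\tfrac1{4T})r_t^2+\bar\rho$ with a \emph{constant} offset $\bar\rho$, and iterating over the $T_0$ steps yields $r_{T_0+1}^2\le\|x_1-x^*\|^2+T_0\bar\rho$ (using $\sum_{j=0}^{T_0-1}(1-\tfrac1{4T})^j\le T_0$). In the second phase $\eta_t=2/(\alpha t)$, so dividing the one-step inequality by $2\eta_t$ and using $\tfrac1{2\eta_t}=\tfrac{\alpha t}{4}$ produces a (nearly) telescoping sum with weights $\tfrac{\alpha t}{4}$; summing from $T_0+1$ to $T$, the boundary term is $\tfrac{\alpha T_0}{4}r_{T_0+1}^2$ and the offset contributes $\sum_{t>T_0}\tfrac{\alpha t}{4}\rho_t$. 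Here $h_t=t^{-1/(2\beta)}$ balances bias against noise so that both the squared-bias and the noise parts of $\tfrac{\alpha t}{4}\rho_t$ are of order $\tfrac{d^2}{\alpha}\big((\kappa_\beta L)^2+\kappa\sigma^2\big)t^{-(\beta-1)/\beta}$, with partial sums $O\!\big(\tfrac{d^2}{\alpha}(\cdots)T^{1/\beta}\big)$, while the $\bar L^2$ variance part is summable. Convexity of $f$ (Jensen) converts $\tfrac{1}{T-T_0}\sum_{t>T_0}\mathbb{E}[f(x_t)-f(x^*)]$ into the bound on $\mathbb{E}[f(\bar x_{T_0,T})-f(x^*)]$; dividing by $T-T_0$, which is $\gtrsim T$ since the threshold condition forces $T_0<T$, gives the $T^{-(\beta-1)/\beta}$ rate, and the boundary term $\tfrac{\alpha T_0}{4(T-T_0)}\|x_1-x^*\|^2$ with $T_0\asymp\kappa\bar L^2 d/\alpha^2$ produces exactly the leading term $C\kappa\bar L^2\tfrac{d}{\alpha T}\|x_1-x^*\|^2$.

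The main obstacle, and the step requiring the most care, is controlling the iterate distance itself: absorbing $9\kappa\bar L^2 d\,\eta_t^2 r_t^2$ into the contraction leaves only the weakened factor $1-\tfrac{\alpha\eta_t}{4}$, so the weighted telescoping does not close exactly and a residual proportional to $\sum_{t>T_0}r_t^2$ survives. I would dispatch it by establishing, from the same recursion with the $f(x_t)-f(x^*)$ term discarded, an auxiliary decay bound on $r_t^2$, and by using $\alpha>\sqrt{C_*d/T}$ to show that both this residual and the leaked burn-in offset $\tfrac{\alpha T_0}{T-T_0}T_0\bar\rho$ land at or below the claimed rate; concretely, $\alpha^4 T^2>C_*^2 d^2$ makes the $T_0\bar\rho$ contribution smaller than the main $\tfrac{d^2}{\alpha}T^{-(\beta-1)/\beta}$ term by a constant factor. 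Tracking every constant through this bookkeeping — verifying $T_0<T$, $T-T_0\gtrsim T$, and that each leaked term sits at or below the target rate — is where the real work lies, and the final constant $C$ depends only on $\beta$.
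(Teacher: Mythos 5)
Your skeleton — the two-phase schedule, the burn-in recursion giving $r_{T_0+1}^2\le \|x_1-x^*\|^2+T_0\bar\rho$, the weighted telescoping after $T_0$, and the boundary term producing $\kappa\bar L^2\tfrac{d}{\alpha T}\|x_1-x^*\|^2$ — is the same as the paper's, and your burn-in phase is sound. The genuine gap sits exactly where you locate "the main obstacle". Because you spend half of the strong-convexity contraction on the bias cross-term via Young's inequality and a further quarter on absorbing the $9\kappa\bar L^2 d\,\eta_t^2 r_t^2$ variance term, the coefficient left in front of $r_t^2$ is $1-\tfrac{\alpha\eta_t}{4}=1-\tfrac{1}{2t}$, whereas exact telescoping with the weights $\tfrac{1}{2\eta_t}=\tfrac{\alpha t}{4}$ requires the coefficient $1-\tfrac{\alpha\eta_t}{2}=1-\tfrac1t$; hence your residual $\tfrac{\alpha}{8}\sum_{t>T_0}r_t^2$. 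Your plan to dispatch it — an auxiliary decay bound on $r_t^2$ obtained "from the same recursion with the $f(x_t)-f(x^*)$ term discarded" — fails quantitatively. That recursion is $r_{t+1}^2\le\big(1-\tfrac{1}{2t}\big)r_t^2+O\big(\tfrac{d^2}{\alpha^2}t^{-(2\beta-1)/\beta}\big)$, and a contraction of only $\tfrac{1}{2t}$ cannot forget a forcing of order $t^{-1-p}$ with $p=\tfrac{\beta-1}{\beta}>\tfrac12$: the inputs near $t\approx T_0$ persist, decaying only like $(T_0/t)^{1/2}$, so the best available bound is $r_t^2=O(t^{-1/2})$ (with an extra logarithm at $\beta=2$). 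Feeding this into the residual and dividing by $T-T_0$ leaves a term of order $\tfrac{d^2}{\alpha}\,T^{-1/2}$ in the final bound, which strictly dominates the claimed $\tfrac{d^2}{\alpha}T^{-(\beta-1)/\beta}$ for every $\beta>2$; so the argument, as described, cannot establish \eqref{eq:pmean}.

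The paper avoids creating the residual in the first place: since $x=x^*$ is the \emph{minimizer}, strong convexity gives $\tfrac{\alpha}{2}\|x_t-x^*\|^2\le f(x_t)-f(x^*)$, so the Young step for the bias is closed with the function gap rather than with the contraction, namely $\kappa_\beta L d h_t^{\beta-1}\|x_t-x^*\|\le\tfrac{(\kappa_\beta L)^2}{\alpha}d^2h_t^{2(\beta-1)}+\tfrac12\big(f(x_t)-f(x^*)\big)$; the gap term is moved to the left-hand side (halving, then doubling back, its coefficient) and the full $-\alpha r_t$ survives. Half of it absorbs the variance term, which only needs $t>T_0\approx 36\kappa\bar L^2 d/\alpha^2$ — note this is also what makes the stated hypothesis $C_*>72\kappa\bar L^2$ yield $T\ge 2T_0$, whereas your thresholds force $T_0\approx 72\kappa\bar L^2 d/\alpha^2$, under which $C_*>72\kappa\bar L^2$ gives only $T_0<T$ and the claim $T-T_0\gtrsim T$ with a constant depending only on $\beta$ breaks down. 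The remaining $-\tfrac{\alpha}{2}r_t$ matches exactly the weight increment $\tfrac{1}{\eta_t}-\tfrac{1}{\eta_{t-1}}=\tfrac{\alpha}{2}$, so the sum telescopes cleanly to $\tfrac{\alpha}{2}T_0 r_{T_0+1}$ with no $\sum_t r_t^2$ left over. Alternatively, your residual route could be repaired by \emph{keeping} the gap term in the auxiliary recursion: via $f(x_t)-f(x^*)\ge\tfrac{\alpha}{2}r_t^2$ it upgrades the contraction from $\tfrac{1}{2t}$ to $\tfrac{5}{2t}$, after which a Chung-type lemma (a variant of Lemma \ref{lemma:bis}) does give $r_t^2=O\big(\tfrac{d^2}{\alpha^2}t^{-(\beta-1)/\beta}\big)$ and the residual lands at the right order — but as written your proposal does not prove the theorem.
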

\begin{proof}[Proof sketch]
As in the proof of Theorem \ref{thm:1}, we apply Lemmas \ref{lem:1} and \ref{lem:2unco}. But we can only use Lemma \ref{lem:2unco}(i) and not Lemma \ref{lem:2unco}(ii) and thus the bound on the stochastic error now involves  $\norm{x_{t}-x^{*}}^{2}$. So, after taking expectations, we need to control an additional term containing  $r_t=\mathbb{E}[\norm{x_{t}-x^{*}}^{2}]$. However, the issue concerns only small $t$ ($t\le T_0\sim d^2/\alpha$) since for bigger $t$ this term is compensated due to the strong convexity with parameter 
$\alpha > \sqrt{{C_*d}/{T}}$. 
This motivates the method where we use the first $T_0$ iterations to get a suitably good (but not rate optimal) bound on $r_{T_0+1}$ and then proceed analogously to Theorem \ref{thm:1} for iterations $t\ge T_0+1$. 
\end{proof}

\section{Estimation of $f(x^*)$}
\label{sec:sec4}
In this section, we apply the above results to estimation of the minimum value  $f(x^*)=\min_{x\in\com}f(x)$ for functions $f$ in the class $\mathcal{F}_{\alpha,\beta}(L)$. The literature related to this problem assumes that $x_t$'s are either i.i.d. with density bounded away from zero on its support \cite{tsy1990} or $x_t$'s are chosen sequentially \cite{mokkadem,belitser}. In the fist case, from the results in \cite{tsy1990} one can deduce that $f(x^*)$ cannot be estimated better than at the slow rate $T^{-\beta/(2\beta+d)}$. 
For the second case, which is our setting, the best result so far is obtained in \cite{belitser}.
The estimator of $f(x
^*)$ in \cite{belitser} is defined via a multi-stage procedure whose complexity increases exponentially with the dimension $d$ and it is shown to achieve  (asymptotically, for $T$ greater than an exponent of $d$) the $c(d, \alpha)/\sqrt{T}$ rate for functions in $\mathcal{F}_{\alpha,\beta}(L)$ with $\beta> 2$. Here, $c(d,\alpha)$ is some constant depending on $d$ and $\alpha$ in an unspecified way.  

Observe that $f(\bar x_T)$ is not an estimator since it depends on the unknown $f$, so Theorem \ref{thm:1} does not provide a result about estimation of $f(x^*)$. 
In this section, we show that using the computationally simple Algorithm \ref{algo} and making one more query per step (that is, having three queries per step in total) allows us to achieve the $1/\sqrt{T}$ rate for all $\beta\ge 2$ with no dependency on the dimension in the main term. Note that the $1/\sqrt{T}$ rate cannot be improved. Indeed, one cannot estimate $f(x^*)$ with  a better rate even using the ideal but non-realizable oracle that makes all queries at point $x^*$. {That is, even if $x^*$ is known and we sample $T$ times $f(x^*) + \xi_t$ with independent centered variables  $\xi_t$, the error is still of the order $1/\sqrt{T}$.}


In order to construct our estimator, at any step $t$ of Algorithm \ref{algo} we make along with $y_t$ and $y_t'$ the third query $y''_t=f(x_t)+\xi''_t$, where $\xi''_t$ is some noise and $x_t$ are the updates of Algorithm \ref{algo}. We estimate $f(x^*)$ by 
$
\hat M = \frac{1}{T}\sum_{t=1}^T y''_t.
$
The properties of estimator $\hat M$ are summarized in the next theorem, which is an immediate corollary of Theorem \ref{thm:1}.
\begin{restatable}{theorem}{fththeorem}
\label{thm:min}
	Let the assumptions of Theorem \ref{thm:1} be satisfied. Let $\sigma>0$ and assume that $(\xi''_t)_{t=1}^T$ are independent random variables with $\mathbb{E}[\xi''_t]=0$ and $\mathbb{E}[(\xi''_t)^2]\le \sigma^2$ for $t=1,\dots, T$.
	If $f$ attains its minimum at point
	$x^*\in \com$, then 
	\begin{equation}
	\label{eq:boundM}
	\mathbb{E} \lvert \hat M - f(x^*) \rvert \leq \frac{\sigma}{T^{\frac{1}{2}}}+ \frac{1}{\alpha}
\left(  d^2\left(\frac{A_1}{T^{\frac{\beta-1}{\beta}}}+ \frac{A_2}{T}\right)+ 
A_3\frac{d\log T}{T} \right).
		\end{equation}
		\end{restatable}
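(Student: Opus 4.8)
The plan is to decompose $\hat M - f(x^*)$ into an averaged regret term and an averaged noise term, and bound each separately. Starting from $\hat M = \frac{1}{T}\sum_{t=1}^T y''_t$ with $y''_t = f(x_t)+\xi''_t$, I would write
\[
\hat M - f(x^*) = \frac{1}{T}\sum_{t=1}^T \big(f(x_t)-f(x^*)\big) + \frac{1}{T}\sum_{t=1}^T \xi''_t,
\]
and apply the triangle inequality to obtain $\mathbb{E}\lvert \hat M - f(x^*)\rvert \le \mathbb{E}\big\lvert \frac{1}{T}\sum_{t=1}^T(f(x_t)-f(x^*))\big\rvert + \mathbb{E}\big\lvert \frac{1}{T}\sum_{t=1}^T \xi''_t\big\rvert$. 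Note that the triangle inequality already isolates the extra query noise, so no joint independence between $\xi''_t$ and the algorithm's internal randomness is needed.

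The key observation for the first term is that, since $x^*=\argmin_{x\in\com}f(x)$ and every update $x_t$ produced by Algorithm \ref{algo} lies in $\com$, we have $f(x_t)-f(x^*)\ge 0$ almost surely. Hence the absolute value is superfluous and
\[
\mathbb{E}\Big\lvert \frac{1}{T}\sum_{t=1}^T\big(f(x_t)-f(x^*)\big)\Big\rvert = \frac{1}{T}\sum_{t=1}^T \mathbb{E}\big[f(x_t)-f(x^*)\big].
\]
This is exactly $1/T$ times the cumulative regret evaluated at the admissible point $x=x^*$, so I can invoke the bound \eqref{eq:bound0} of Theorem \ref{thm:1} (whose hypotheses are assumed) and divide by $T$, recovering precisely the term $\frac{1}{\alpha}\big(d^2(\frac{A_1}{T^{(\beta-1)/\beta}}+\frac{A_2}{T})+A_3\frac{d\log T}{T}\big)$ of the claimed bound.

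For the noise term I would use Jensen's inequality to pass to the second moment and then exploit that the $\xi''_t$ are independent and centered:
\[
\mathbb{E}\Big\lvert \frac{1}{T}\sum_{t=1}^T \xi''_t\Big\rvert \le \Big(\mathbb{E}\Big[\Big(\tfrac{1}{T}\sum_{t=1}^T \xi''_t\Big)^2\Big]\Big)^{1/2} = \Big(\frac{1}{T^2}\sum_{t=1}^T \mathbb{E}[(\xi''_t)^2]\Big)^{1/2} \le \frac{\sigma}{\sqrt{T}},
\]
where the middle equality holds because the cross terms $\mathbb{E}[\xi''_t\xi''_s]=\mathbb{E}[\xi''_t]\mathbb{E}[\xi''_s]=0$ vanish for $t\ne s$ by independence and zero mean, and the last inequality uses $\mathbb{E}[(\xi''_t)^2]\le\sigma^2$. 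Adding the two bounds yields \eqref{eq:boundM}.

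As the surrounding text signals, this is an immediate corollary rather than a difficult result, and I expect no serious obstacle. The only points needing care are the non-negativity argument that removes the absolute value (it is exactly what lets the cumulative-regret bound enter with the correct constant), and the fact that the extra query makes $\hat M$ a genuine estimator depending only on observed data, in contrast to $f(\bar x_T)$; the variance computation for the noise term needs only independence and centering of the $\xi''_t$, both of which are assumed.
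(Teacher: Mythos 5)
Your proposal is correct and follows essentially the same route as the paper's proof: the same decomposition of $\hat M - f(x^*)$ into the averaged noise and the averaged regret at $x=x^*$, the triangle inequality, the bound $\sigma/\sqrt{T}$ on the noise average via independence and zero mean, and the application of the cumulative regret bound \eqref{eq:bound0}. You merely spell out two steps the paper leaves implicit (the Jensen/second-moment computation and the nonnegativity of $f(x_t)-f(x^*)$ that turns the absolute value into an expectation), so there is nothing to add.
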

\begin{remark}\label{remark:min}
With three queries per step, the risk (error) of the oracle that makes all queries at point $x^*$ does not exceed  $\sigma/\sqrt{3T}$. Thus, for $\beta>2$ the estimator $\hat M$ achieves asymptotically as $T\to \infty$ the oracle risk up to a numerical constant factor. We do not obtain such a sharp property for $\beta=2$, in which case the remainder term in Theorem \ref{thm:min} accounting for the accuracy of Algorithm \ref{algo} is of the same order as the main term $\sigma/\sqrt{T}$. 
\end{remark}
Note that in Theorem \ref{thm:min} the noises $(\xi''_t)_{t=1}^T$ are assumed to be independent and zero mean random variables, which is essential to obtain the $1/\sqrt{T}$ rate. Nevertheless, we do not require independence between the noises $(\xi''_t)_{t=1}^T$ and the noises in the other two queries $(\xi_t)_{t=1}^T$ and $(\xi'_t)_{t=1}^T$. 
Another interesting point is that for $\beta=2$ the third query is not needed and $f(x^*)$ is estimated with the $1/\sqrt{T}$ rate either by 
$
\hat M = \frac{1}{T}\sum_{t=1}^T y_t
$
or by 
$
\hat M = \frac{1}{T}\sum_{t=1}^T y'_t.
$
This is an easy consequence of the above argument, the property \eqref{lem6.4_2} -- see Lemma \ref{lem6.4} in the appendix -- which is specific for the case $\beta=2$, and the fact that the optimal choice of $h_t$ is of order $t^{-1/4}$ for $\beta=2$.

\section{Improved bounds for $\beta=2$}
\label{sec:beta2} 

In this section,  we consider the case $\beta=2$ and obtain improved bounds that scale as $d$ rather than $d^2$ with the dimension in the constrained optimization setting analogous to Theorem \ref{thm:1}. First note that for $\beta=2$ we can simplify the algorithm.  The use of kernel $K$ is redundant when $\beta=2$, and therefore in this section we define the approximate gradient as 
\begin{equation}\label{gt}
   \hat g_t = \frac{d}{2h_t} (y_t- y'_t) \zeta_t, 
\end{equation}
where $y_t=f(x_t+h_t \zeta_t)+\xi_t$ and $y'_t=f(x_t-h_t  \zeta_t) + \xi_t'$. A well-known observation that goes back to \cite{NY1983} consists in the fact that $\hat g_t$ defined in \eqref{gt} is an unbiased estimator of the gradient at point $x_t$ of the surrogate function $\hat f_t$ defined by 
\[
\hat f_t(x)= \mathbb{E} f(x+h_t \tilde \zeta),\quad \forall x\in\mathbb{R}^d,
\]
where the expectation  $\mathbb{E}$ is taken   with respect to the random vector $\tilde \zeta$ uniformly distributed on the unit ball $B_d=\{u\in \mathbb{R}^d:\|u\|\le 1 \}$.  The properties of the surrogate $\hat f_t$  are described in Lemmas \ref{lem6.2} and \ref{lem6.4} presented in the appendix.

The improvement in the rate that we get for $\beta=2$ is due to the fact that we can consider Algorithm \ref{algo} with  $\hat g_t$ defined in \eqref{gt} as the SGD for the surrogate function. Then the bias of approximating $f$ by $\hat f_t$ scales as $h_t^2$, which is smaller than the squared bias of 
approximating the gradient arising in the proof of Theorem \ref{thm:1} that scales as $d^2h_t^{2(\beta-1)}=d^2h_t^2$ when $\beta=2$. On the other hand, the stochastic variability terms are the same for both methods of proof. This explains the gain in dependency on $d$. However, this technique does not work for $\beta>2$ since then the error of approximating $f$ by $\hat f_t$, which is of the order $h_t
^\beta$ (with $h_t$ small), becomes too large compared to the bias $d^2h_t^{2(\beta-1)}$ of Theorem \ref{thm:1}. 
\begin{restatable}{theorem}{eiththeorem}
\label{th6.3}
Let $f\in \mathcal{F}_{\alpha, 2}(L)$ with $\alpha,L>0$.
	Let Assumption \ref{ass1}  hold and let $\com$ be a convex compact subset  of $\mathbb{R}^d$. Assume that $\max_{x\in\com}\|\nabla f(x)\|\le G $. 
If $\sigma>0$ then for the updates $x_t$ as in item 3 of Algorithm \ref{algo} with $\hat g_t$ defined in \eqref{gt} and parameters $h_t= \left(\frac{3d^2\sigma^2}{4L\alpha t+9L^2d^2}\right)^{1/4}$ and $\eta_t = \frac{1}{\alpha t}$ we have 
\begin{equation}\label{eq1:th6.3}
\forall x\in \Theta: \ \ \mathbb{E} \sum_{t=1}^{T} \big(f(x_t) - f(x)\big)  
\le  \min\left(GBT, 2\sqrt{3L}\sigma
\frac{d}{\sqrt{\alpha}} \sqrt{T}+ A_4\frac{d^2}{\alpha} \log T\right), 
\end{equation}
where $B$ is the Euclidean diameter of $\com$ and 
$A_4=
6.5 L\sigma+ 22 G^2/d$. 
Moreover, if $x^*= \argmin_{x\in\com}f(x)$ the optimization error of averaged estimator ${\bar x}_T=\frac{1}{T}\sum_{t=1}^T x_t$ is bounded as  
	\begin{equation}
	\label{eq2:th6.3}
	\mathbb{E} [ f({\bar x}_T) - f(x^*)]\leq 
	\min\left(GB, 2\sqrt{3L}\sigma
 \frac{d}{\sqrt{\alpha T}}+ A_4 \frac{d^2}{\alpha} \frac{\log T}{T}\right).
		\end{equation}
Finally, if $\sigma=0$, then the cumulative regret of the same procedure $x_t$ with any $h_t>0$ chosen small enough and $\eta_t=\frac{1}{\alpha t}$ and the optimization error of its averaged version are of the order $\frac{d^2}{\alpha} \log T$ and $\frac{d^2}{\alpha} \frac{\log T}{T}$, respectively.                         \end{restatable}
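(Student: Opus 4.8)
The plan is to recognize Theorem \ref{th6.3} as a refinement of Theorem \ref{thm:1} specialized to $\beta=2$, where the key gain comes from treating $\hat g_t$ from \eqref{gt} as a stochastic gradient of the smooth surrogate $\hat f_t$ rather than as a biased estimator of $\nabla f$. First I would invoke the unbiasedness fact stated in the text: $\mathbb{E}[\hat g_t \mid x_t] = \nabla \hat f_t(x_t)$, so the SGD update $x_{t+1}=\proj_\com(x_t-\eta_t \hat g_t)$ is an \emph{exact} projected stochastic gradient step on $\hat f_t$. From the appendix Lemmas \ref{lem6.2} and \ref{lem6.4} I would extract two structural properties: (i) the sup-norm approximation $|\hat f_t(x)-f(x)|\le c L h_t^2$ for a numerical constant $c$ (this is the order-$h_t^2$ bias that replaces the order-$d^2 h_t^2$ squared bias of Theorem \ref{thm:1}), and (ii) that $\hat f_t$ inherits $\alpha$-strong convexity from $f$. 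For the stochastic term I would apply Lemma \ref{lem:2unco}(ii) (with $K\equiv1$, $\kappa=1$) to bound $\mathbb{E}[\|\hat g_t\|^2\mid x_t]\le 9(G^2 d + L^2 d^2 h_t^2/2)+\tfrac{3d^2\sigma^2}{2h_t^2}$.

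Next I would run the standard strongly convex SGD descent analysis on the surrogate sequence. Starting from the projection nonexpansiveness inequality $\|x_{t+1}-x^*\|^2\le \|x_t-x^*\|^2 - 2\eta_t\langle \hat g_t, x_t-x^*\rangle + \eta_t^2\|\hat g_t\|^2$, I would take conditional expectation, use $\mathbb{E}[\hat g_t\mid x_t]=\nabla\hat f_t(x_t)$ together with $\alpha$-strong convexity of $\hat f_t$ to get
\begin{equation*}
\langle \nabla\hat f_t(x_t),x_t-x^*\rangle \ge \hat f_t(x_t)-\hat f_t(x^*)+\tfrac{\alpha}{2}\|x_t-x^*\|^2,
\end{equation*}
and then convert increments of $\hat f_t$ back to increments of $f$ by paying the $2cLh_t^2$ approximation error at both $x_t$ and $x^*$. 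Rearranging yields a per-step bound of the form
\begin{equation*}
\mathbb{E}[f(x_t)-f(x)]\le \tfrac{1}{2\eta_t}\big(\mathbb{E}\|x_t-x^*\|^2-\mathbb{E}\|x_{t+1}-x^*\|^2\big)-\tfrac{\alpha}{2}\mathbb{E}\|x_t-x^*\|^2+\tfrac{\eta_t}{2}\mathbb{E}\|\hat g_t\|^2+2cLh_t^2.
\end{equation*}
Summing over $t$ with $\eta_t=1/(\alpha t)$ makes the $\tfrac{1}{2\eta_t}-\tfrac{\alpha}{2}$ telescoping coefficients collapse in the familiar way (the $\|x_t-x^*\|^2$ terms cancel against the reciprocal-stepsize weights, as in the recursion of \cite{bartlett-hazan-rakhlin}).

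The remaining work is arithmetic: substituting the prescribed $h_t=\big(\tfrac{3d^2\sigma^2}{4L\alpha t+9L^2d^2}\big)^{1/4}$ into $\tfrac{\eta_t}{2}\mathbb{E}\|\hat g_t\|^2+2cLh_t^2$ and summing. The $h_t$ is chosen precisely to balance the dominant variance contribution $\tfrac{\eta_t}{2}\cdot\tfrac{3d^2\sigma^2}{2h_t^2}$ against the surrogate-bias contribution $2cLh_t^2$; with $\eta_t=1/(\alpha t)$ this balance produces a per-step term of order $\sigma d\sqrt{L/(\alpha t)}$, whose sum over $t=1,\dots,T$ is $O\!\big(\sigma d\sqrt{T/\alpha}\big)$, giving the leading $2\sqrt{3L}\,\sigma d\sqrt{T}/\sqrt{\alpha}$. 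The residual $G^2 d$ and $L^2 d^2 h_t^2$ pieces of $\mathbb{E}\|\hat g_t\|^2$ contribute, after multiplication by $\eta_t/2\sim 1/(\alpha t)$ and summation, only the $A_4\tfrac{d^2}{\alpha}\log T$ logarithmic remainder. The optimization-error bound \eqref{eq2:th6.3} then follows from \eqref{eq1:th6.3} by Jensen's inequality applied to the average $\bar x_T$, dividing by $T$; the $\min(GBT,\cdot)$ and $\min(GB,\cdot)$ come from the trivial diameter bound $f(x_t)-f(x)\le G B$. The $\sigma=0$ case drops the variance term entirely, leaving only the $O(d^2\log T/\alpha)$ residual for any small enough $h_t$.

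I expect the main obstacle to be the bookkeeping in the surrogate-to-original conversion rather than any conceptual difficulty: one must verify that the $h_t^2$ bias enters only additively (not squared, and crucially \emph{without} the extra factor of $d$), and that Lemma \ref{lem6.4}'s property \eqref{lem6.4_2}---the feature special to $\beta=2$ that the text flags---is what legitimizes using the surrogate's strong convexity with the \emph{same} constant $\alpha$ and controls the cross terms. Getting the explicit constants $2\sqrt{3L}$ and $A_4=6.5L\sigma+22G^2/d$ right requires carefully tracking the factors of $3$, $9$, and the constant $c$ through the summation, but these are routine once the balance of $h_t$ is in place.
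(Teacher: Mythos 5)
Your proposal follows essentially the same route as the paper's own proof: view the update as exact projected SGD on the surrogate $\hat f_t$, use the $\alpha$-strong convexity of $\hat f_t$ (Lemma \ref{lem6.4}) and the unbiasedness $\mathbb{E}[\hat g_t\mid x_t]=\nabla\hat f_t(x_t)$ (Lemma \ref{lem6.2}(iii)), bound the second moment via Lemma \ref{lem:2unco}(ii) with $\kappa=1$, telescope with $\eta_t=1/(\alpha t)$, and balance the variance term against the $O(h_t^2)$ surrogate bias with the prescribed $h_t$; the averaged bound then follows by convexity and the $\min(GBT,\cdot)$ from the trivial diameter bound, exactly as in the appendix.

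One point in your accounting is loose, and it matters for the constants stated in the theorem. You convert $\hat f_t$-increments back to $f$-increments by ``paying the $2cLh_t^2$ approximation error at both $x_t$ and $x^*$.'' The paper instead uses the one-sided inequality $\hat f_t(x)\ge f(x)$ for all $x$ (Lemma \ref{lem6.2}(ii)), so the bias is paid only \emph{once}: $f(x_t)-f(x)\le \hat f_t(x_t)-\hat f_t(x)+Lh_t^2$. This is not cosmetic: the prescribed $h_t=\big(\tfrac{3d^2\sigma^2}{4L\alpha t+9L^2d^2}\big)^{1/4}$ is the exact minimizer of the per-step bound $\big(L+\tfrac{9L^2d^2}{4\alpha t}\big)h_t^2+\tfrac{3d^2\sigma^2}{4h_t^2\alpha t}$ under the single-payment accounting, which is what yields the leading constant $2\sqrt{3L}$ in \eqref{eq1:th6.3}. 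With your symmetric two-sided payment the coefficient of $h_t^2$ becomes $2L+\tfrac{9L^2d^2}{4\alpha t}$, the given $h_t$ no longer balances the terms, and the leading constant degrades (to roughly $3\sqrt{3L}$), so the inequality as stated would not be established. A second, minor correction: property \eqref{lem6.4_2} is not what legitimizes the strong convexity of the surrogate (that is proved directly in Lemma \ref{lem6.4} via monotonicity of $\nabla\hat f_t$); \eqref{lem6.4_2} is used only for the remark extending the regret bound to losses measured at the query points.
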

Note that the terms $\frac{d^2}{\alpha} \log T$ and $\frac{d^2}{\alpha} \frac{\log T}{T}$ appearing in these bounds can be improved to $\frac{d}{\alpha} \log T$ and $\frac{d}{\alpha} \frac{\log T}{T}$ at the expense of assuming that the norm $\|\nabla f\|$ is uniformly bounded by $G$ not only on $\com$ but also on a large enough Euclidean neighborhood of $\com$. Moreover, the $\log T$ factor in the bounds for the optimization error can be eliminated by considering averaging from $T/2$ to $T$ rather than from 1 to $T$ in the spirit of  \cite{rakhlin2011making}. We refer to Appendix \ref{app:C} 
for the details and proofs of these facts. A major conclusion is that, when $\sigma>0$ and we consider the optimization error,  those terms are negligible with respect to $d/\sqrt{\alpha T}$ and thus an attainable rate is $\min(1,d/\sqrt{\alpha T})$. 

We close this section by noting, in connection with the bandit setting, that the bound \eqref{eq1:th6.3} extends straightforwardly (up to a change in numerical constants) to the cumulative regret of the form $\mathbb{E}\sum_{t=1}^{T} \big(f_t(x_t\pm h_t\zeta_t) - f_t(x)\big)$, where the losses are measured at the query points and $f$ depends on $t$. This fact follows immediately from the proof of Theorem \ref{th6.3} presented in the appendix and the property \eqref{lem6.4_2}, see Lemma \ref{lem6.4} in the appendix.

\section{Lower bound}
\label{sec:LB}
In this section we prove a minimax lower bound on the optimization error over all sequential strategies that allow the query points depend on the past. For $t=1,\dots, T$, we assume that $y_t=f(z_t)+\xi_t$ and we consider strategies of choosing the query points such that $z_1\in \mathbb{R}^d$ is a random variable and $z_t= \Phi_t(z_1,y_1,\dots, z_{t-1},y_{t-1},\boldsymbol{\zeta}_t)$ for $t\ge2$, where $\Phi_t$'s are measurable functions with values in $\mathbb{R}^d$, and $\{\boldsymbol{\zeta}_t\}$
is a sequence of random variables with values in some measurable space $(\mathcal Z, \mathcal U)$ (a randomizing sequence) satisfying the condition that $\boldsymbol{\zeta}_t$ is independent of $(z_1,y_1,\dots, z_{t-1},y_{t-1})$. We denote by $\Pi_T$ the set of all such strategies. The noises $\xi_1,\dots,\xi_T$ are assumed in this section to be independent with cumulative distribution function $F$ satisfying the condition
\begin{equation}\label{distribution}
\int \log\big(dF(u)/dF(u+v)\big)dF(u)\leq I_{0}v^{2}, \quad |v|< v_{0},    
\end{equation}
for some $0<I_{0}<\infty$, $0<v_{0}\leq \infty$, and such that $\xi_t$ is independent of $(z_1,y_1,\dots, z_{t-1},y_{t-1},\boldsymbol{\zeta}_t)$. {Using the second order expansion of the logarithm w.r.t. $v$, one can verify that this assumption is satisfied when $F$ has a smooth enough density with finite Fisher information.} For example, for Gaussian distribution $F$ 
this condition holds with $v_0=\infty$. Note that the class $\Pi_T$ includes the sequential strategy of Algorithm \ref{algo} that corresponds to taking $T$ as an even number, and choosing $z_t=x_t+\zeta_tr_t$ and $z_t=x_t-\zeta_tr_t$  for even $t$ and odd $t$, respectively. 

\begin{restatable}{theorem}{lowerB}
\label{lb}
Let $\com=\{x\in\mathbb{R}^d: \norm{x}\le 1\}$. For $\alpha, L>0$,$ \beta\ge 2$, let  $\mathcal{F}'_{\alpha,\beta}$ denote the set of functions $f$ that attain their minimum over $\mathbb{R}^d$ in $\com$ and belong to $\mathcal{F}_{\alpha,\beta}(L)\cap \{f: \max_{x\in\com}\|\nabla f(x)\|\le G\}$,  where $G> 2\alpha$. Then for any strategy in the class $\Pi_T$ we have 
\begin{equation}\label{eq1:lb}
\sup_{f \in \mathcal{F}'_{\alpha,\beta}}\mathbb{E}\big[f(z_T)-\min_{x}f(x)\big]\geq C\min\Big(\max(\alpha, T^{-1/2+1/\beta}), \frac{d}{\sqrt{T}}, \,\frac{d}{\alpha}T^{-\frac{\beta-1}{\beta}}\Big),
\end{equation}
and
\begin{equation}\label{eq2:lb}
\sup_{f \in \mathcal{F}'_{\alpha,\beta}}\mathbb{E}\big[\norm{z_{T}-x^{*}(f)}^{2}\big]\geq C\min\Big(1, \frac{d}{T^{\frac{1}{\beta}}}, \,\frac{d}{\alpha^{2}}T^{-\frac{\beta-1}{\beta}}\Big),
\end{equation}
 where  $C>0$ is a constant that does not depend  of $T,d$, and $\alpha$, and $x^*(f)$ is the minimizer of $f$ on $\com$. 
\end{restatable}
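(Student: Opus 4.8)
The plan is to establish the two minimax lower bounds by the standard reduction to hypothesis testing (Fano/Assouad or the two-point method), adapted to the sequential query setting. The core idea is to construct a finite family of "hard" functions in $\mathcal{F}'_{\alpha,\beta}$ that are close enough in the relevant KL/information sense that no sequential strategy can reliably distinguish them, yet whose minimizers are well-separated, so that any single returned point $z_T$ must incur a large error on at least one function.

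\textbf{Step 1: Construction of the hard family.} I would fix a smooth "bump" function $\phi$ supported near the origin, scaled so that perturbing a base quadratic $f_0(x) = \frac{\alpha}{2}\|x\|^2$ by $\pm$ a small bump keeps the function in the class $\mathcal{F}_{\alpha,\beta}(L)$ with bounded gradient $\le G$ and with its minimizer inside $\com$. Concretely, for a sign pattern $\omega \in \{-1,+1\}^d$ (or a well-separated subset thereof, chosen via the Varshamov--Gilbert bound) I would define $f_\omega(x) = f_0(x) + \sum_{j} \omega_j \, h^\beta \, \phi\big((x_j - x_{0,j})/h\big)$ for a bandwidth parameter $h$ to be optimized. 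The $h^\beta$ scaling is dictated by the Hölder condition \eqref{eq:Hclass}: a bump of width $h$ and height $h^\beta$ has bounded $\beta$-th order smoothness. Strong convexity forces $h^{\beta-2}$ (the second-derivative scale of the bump) to stay below $\alpha$ up to a constant, which is exactly where the threshold on $\alpha$ and the three-way $\min$ in \eqref{eq1:lb} will originate.

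\textbf{Step 2: Information bound along the sequential strategy.} The key technical point, specific to the adaptive query model, is that the query points $z_t = \Phi_t(z_1^{t-1}, y_1^{t-1})$ depend on the past observations, so I cannot treat the design as fixed. I would bound the KL divergence between the laws of the full observation sequence $(y_1,\dots,y_T)$ under $f_\omega$ versus a neighboring $f_{\omega'}$ by using the chain rule for KL divergence and condition \eqref{distribution}: each increment contributes at most $I_0 \big(f_\omega(z_t) - f_{\omega'}(z_t)\big)^2$, and since the bumps have height $h^\beta$, each term is at most of order $I_0 h^{2\beta}$, giving a total of order $T I_0 h^{2\beta}$ per flipped coordinate. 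Summing over the separating coordinates yields the aggregate information, which I then feed into Fano's inequality (or Assouad's lemma applied coordinatewise, which is cleaner for the per-coordinate sign structure here).

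\textbf{Step 3: Optimizing the bandwidth and assembling the min.} Separation of minimizers is of order $h$ per coordinate, so for the squared-distance error \eqref{eq2:lb} the per-coordinate risk is $\gtrsim h^2$ whenever the testing error is bounded below, which requires $T I_0 h^{2\beta} \lesssim 1$, i.e. $h \sim T^{-1/(2\beta)}$; summing over the $\Theta(d)$ coordinates produces the $d \, T^{-1/\beta}$ term. The two competing terms ($1$ from the diameter constraint, and $\frac{d}{\alpha^2}T^{-(\beta-1)/\beta}$ from the strong-convexity cap $h^{\beta-2}\lesssim\alpha$) arise by taking the binding constraint among bump height, smoothness, and convexity. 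For \eqref{eq1:lb} the function-value gap converts a distance $h$ into a value gap via strong convexity ($\sim \alpha h^2$) or via the gradient bound ($\sim G h$), and tracking which dominates in each regime yields the $\max(\alpha, T^{-1/2+1/\beta})$ factor. \emph{The main obstacle} I anticipate is verifying that the perturbed functions genuinely remain in $\mathcal{F}'_{\alpha,\beta}$ simultaneously for all sign patterns—in particular that strong convexity survives the subtracted bumps—since this is what pins down the threshold $\alpha \sim T^{-1/2+1/\beta}$ at which higher-order smoothness stops helping, and it is the delicate interplay of these three constraints (not the information-theoretic machinery, which is standard) that produces the characteristic three-way minimum in the statement.
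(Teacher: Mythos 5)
Your information-theoretic skeleton matches the paper's proof: a $2^d$-point family indexed by sign patterns, the chain rule for KL along adaptive queries together with condition \eqref{distribution} (so each of the $T$ observations contributes at most $I_0\|f_\omega-f_{\omega'}\|_\infty^2$, forcing the sup-norm of the perturbation to be $\lesssim T^{-1/2}$, i.e.\ $h\sim T^{-1/(2\beta)}$ for a height-$h^\beta$ perturbation), and a coordinatewise Assouad argument. However, there is a genuine gap in the construction itself, and it is precisely the step that produces the rates. A perturbation of the form $\omega_j h^\beta\phi\bigl((x_j-x_{0,j})/h\bigr)$ with a symmetric bump $\phi$ centered at the minimizer of the base quadratic separates \emph{nothing}: any member of $\mathcal{F}'_{\alpha,\beta}$ must remain $\alpha$-strongly convex, hence has a unique minimizer, and the reflection symmetry $x_j\mapsto -x_j$ then pins that minimizer to the bump center for both signs $\omega_j=\pm1$, so all $2^d$ hypotheses share the same minimizer and Assouad yields a vacuous bound. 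The paper avoids this by perturbing with $\omega_i r h^{\beta}\eta(u_i/h)$ where $\eta=\int\eta_0$ is the \emph{antiderivative} of a bump: this is a ramp whose gradient equals $\omega_i r h^{\beta-1}$ near the origin, a sign-dependent tilt, which moves the unique minimizer to $x^*(\omega_i)=-\omega_i\alpha^{-1}(1+\delta)^{-1}rh^{\beta-1}$.

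This leads to the second problem: your claimed separation "of order $h$ per coordinate" is incompatible with membership in the class. Keeping $\alpha$-strong convexity forces the perturbation's Hessian to be $O(\alpha\delta)$, and then a gradient tilt of size $rh^{\beta-1}$ moves the minimizer by only $\asymp rh^{\beta-1}/\alpha\le\delta h$; with $h=T^{-1/(2\beta)}$ the per-coordinate squared separation is $r^2h^{2(\beta-1)}/\alpha^2$, which after summing over $d$ coordinates gives the term $\frac{d}{\alpha^2}T^{-(\beta-1)/\beta}$ in \eqref{eq2:lb} — not $dh^2=dT^{-1/\beta}$ as you assert (your scaling would hold only when the convexity constraint is exactly binding, i.e.\ at the single threshold $\alpha\asymp T^{-1/2+1/\beta}$). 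In the paper, the remaining pieces of the three-way minimum do not come from a second bump scaling at all: the construction is carried out only for $\alpha\ge\alpha_0:=T^{-1/2+1/\beta}$, with the additional cap $h\le(\alpha^2/d)^{1/(2(\beta-1))}$ ensuring the $2^d$ minimizers stay inside $\com$ (whence the constant term in the min), and the regime $\alpha<\alpha_0$ is handled by nestedness, $\mathcal{F}'_{\alpha_0,\beta}\subset\mathcal{F}'_{\alpha,\beta}$, evaluating the already-proved bound at $\alpha_0$; the same evaluation, combined with $f(z_T)-f(x_\omega^*)\ge\frac{\alpha}{2}\|z_T-x_\omega^*\|^2$, is what yields $\max(\alpha,T^{-1/2+1/\beta})$ in \eqref{eq1:lb} (no conversion via the gradient bound $G$ is used). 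So while you correctly identified that the interplay of smoothness, convexity, and the gradient constraint is the crux, the construction and separation analysis in your proposal would not reproduce the stated bounds without these repairs.
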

The proof is given in  Appendix \ref{app:B}. {It extends the proof technique of Polyak and Tsybakov [28], by applying it to more than two probe functions. The proof takes into account dependency on the dimension $d$, and on $\alpha$. The final result is obtained by applying Assouad’s Lemma, see e.g. \cite{Tsybakov09}.}


%
%
We stress that the condition $G> 2\alpha$ in this theorem is necessary. It should always hold if the intersection $\mathcal{F}_{\alpha,\beta}(L)\cap \{f: \max_{x\in\com}\|\nabla f(x)\|\le G\}$ is not empty. Notice also that the threshold $T^{-1/2+1/\beta}$ on the strong convexity parameter $\alpha$ plays an important role in bounds \eqref{eq1:lb} and \eqref{eq2:lb}. Indeed, for $\alpha$ below this threshold, the bounds start to be independent of $\alpha$.  Moreover, in this regime, the rate of \eqref{eq1:lb} becomes $\min(T^{1/\beta},d)/\sqrt{T}$, which is asymptotically  $d/\sqrt{T}$ and thus not better as function of $T$ than the rate attained for 
zero-order minimization of simply convex functions \cite{agarwal2011,belloni2015escaping}. Intuitively, it seems reasonable that $\alpha$-strong convexity should be of no added value for very small $\alpha$. Theorem \ref{lb} allows us to quantify exactly how small such $\alpha$ should be. Also, quite naturally, the threshold becomes smaller when the smoothness $\beta$ increases.\\
Finally note that for $\beta=2$ the lower bounds \eqref{eq1:lb} and \eqref{eq2:lb} are, in the interesting regime of large enough $T$, of order $d/(\max(\alpha, 1)\sqrt{T})$ and $d/(\max(\alpha^2, 1)\sqrt{T})$, respectively. This highlights the near minimax optimal properties of Algorithm \ref{algo} in the setting of Theorem \ref{th6.3}.


\section{Discussion and related work}
\label{sec:PW}
There is a great deal of attention to zero-order feedback stochastic optimization and convex bandits problems in the recent literature. Several settings are studied: (i) deterministic in the sense that the queries contain no random noise and we query functions $f_t$ depending on $t$ rather than $f$ where $f_t$ are Lipschitz or 2-smooth \cite{flaxman2004,agarwal2010,Nesterov2011,NS17,saha2011,Shamir17}; (ii) stochastic with two-point feedback where the two noisy evaluations are obtained with the same noise and the noisy functions are Lipschitz or 2-smooth \cite{Nesterov2011,NS17,duchi2015} (this setting does not differ much from (i) in terms of the analysis and the results); (iii) stochastic, where the noises $\xi_i$ are independent zero-mean random variables  \cite{Fabian,PT90,dippon,agarwal2011,Shamir13,BP2016, jamieson2012,bartlett2018simple,Locatelli}. In this paper, we considered a setting, which is more general than (iii) by allowing for adversarial noise (no independence or zero-mean assumption in contrast to (iii), no Lipschitz assumption  in contrast to settings (i) and (ii)), which are both covered by our results when the noise is set to zero. 

One part of our results are bounds on the cumulative regret, cf. \eqref{eq:bound0} and \eqref{eq1:th6.3}. We emphasize that they remain trivially valid if the queries are from $f_t$ depending on $t$ instead of $f$, and thus cover the setting (i). To the best of our knowledge, there were no such results in this setting previously, except for \cite{BP2016} that gives  bounds with suboptimal dependency on $T$ in the case of classical (non-adversarial) noise. In the non-noisy case, we get bounds on the cumulative regret with faster rates than previously known for the setting (i). It remains an open question whether these bounds can be improved. 

The second part of our results dealing with the optimization error $\mathbb{E}[f(\bar x_T)-f(x^*)]$ is closely related to the work on derivative-free stochastic optimization under strong convexity and smoothness assumptions initiated in \cite{Fabian,PT90} and more recently developed in \cite{dippon,jamieson2012,Shamir13,BP2016}. 
It was shown in \cite{PT90} that the minimax optimal rate for $f\in\mathcal{F}_{\alpha,\beta}(L)$ scales as $c(\alpha,d)T^{-{(\beta-1)}/{\beta}}$, where $c(\alpha,d)$ is an unspecified function of $\alpha$ and $d$ (for $d=1$ an upper bound of the same order was earlier established in 
\cite{Fabian}). The issue of establishing non-asymptotic fundamental limits as function of the main parameters of the problem ($\alpha$, $d$ and $T$) was first addressed in \cite{jamieson2012} giving a lower bound $\Omega(\sqrt{d/T})$ for $\beta=2$. This was improved to $\Omega(d/\sqrt{T})$ when $\alpha\asymp 1$ by Shamir \cite{Shamir13} who conjectured that the rate $d/\sqrt{T}$ is optimal for $\beta=2$, which indeed follows from our Theorem \ref{th6.3} (although \cite{Shamir13} claims the optimality as proved fact by referring to results in \cite{agarwal2010}, such results cannot be applied in setting (iii) because the noise cannot be considered as Lipschitz).  
A result similar to Theorem \ref{th6.3} is stated without proof in Bach and Perchet \cite[Proposition 7]{BP2016} but not for the cumilative regret and with  a suboptimal rate in the non-noisy case. For integer $\beta\ge 3$, Bach and Perchet \cite{BP2016} present explicit upper bounds as functions of $\alpha$, $d$ and $T$ with, however, suboptimal dependency on $T$ except for their Proposition 8 that is problematic (see Appendix \ref{app:D} for the details).  Finally, by slightly modifying the proof of Theorem \ref{thm:1} we get that the estimation risk $\mathbb{E}\big[\norm{\bar x_{T}-x^{*}}^{2}\big]$ is $O((d^2/\alpha
^2)T^{-{(\beta-1)}/{\beta}})$, which is to within factor $d$ of the main term in the lower bound \eqref{eq2:lb} (see Appendix \ref{app:C} for details).

The lower bound in Theorem  \ref{lb} is, to the best of our knowledge, the first result providing non-asymptotic fundamental limits 
under general configuration of $\alpha$, $d$ and $T$. The known lower bounds \cite{PT90,jamieson2012,Shamir13} either give no explicit dependency on  $\alpha$ and $d$,
or treat the special case $\beta=2$ and $\alpha\asymp 1$. Moreover, as an interesting consequence of our lower bound we find that, for small strong convexity parameter $\alpha$ (namely, below the $T^{-1/2+1/\beta}$ threshold), the best achievable rate cannot be substantially faster 
than for simply convex functions, at least for moderate dimensions. Indeed, for such small $\alpha$, our lower bound is asymptotically $\Omega(d/\sqrt{T})$ independently of the smoothness index $\beta$ and on $\alpha$, while the achievable rate for convex functions is shown to be $d^{16}/\sqrt{T}$ in \cite{agarwal2011} and improved to $d^{3.75}/\sqrt{T}$ in  \cite{belloni2015escaping} (both up to log-factors). The gap here is only in the dependency on the dimension. Our results imply that for $\alpha$ above the $T^{-1/2+1/\beta}$ threshold, the gap between upper and lower bounds is much smaller. Thus, our upper bounds in this regime scale as $(d^2/\alpha)T^{-{(\beta-1)}/{\beta}}$ while the lower bound of 
Theorem  \ref{lb} is of the order $\Omega\big((d/\alpha)T^{-{(\beta-1)}/{\beta}}\big)$; moreover for $\beta=2$, upper and lower bounds match in the dependency on $d$.

We hope that our work will stimulate further study at the intersection of zero-order optimization and convex bandits in machine learning. An important open problem is to study novel algorithms which match our lower bound simultaneously in all main parameters. For example a class of algorithms worth exploring are those using memory of the gradient in the spirit of Nesterov accelerated method. Yet another important open problem is to study lower bounds for the regret in our setting. Finally, it would be valuable to study extensions of our work to locally strongly convex functions.



\section*{Broader impact}
The present work improves our understanding of zero-order optimization methods in specific scenarios in which the underlying function we wish to optimize has certain regularity properties. We believe that a solid theoretical foundation is beneficial to the development of practical machine learning and statistical methods. We expect no direct or indirect ethical risks from our research.

\begin{ack}
We would like to thank Francis Bach, Vianney Perchet, Saverio Salzo, and Ohad Shamir for helpful discussions. The first and second authors were  partially supported by SAP SE. The research of A.B. Tsybakov is supported by a grant of the French National Research Agency (ANR), “Investissements d'Avenir” (LabEx Ecodec/ANR-11-LABX-0047).
\end{ack}

\bibliography{biblio}
\bibliographystyle{plainnat}

\newpage

\begin{appendices}

\section*{Supplementary material}

The supplementary material is organized as follows. In Appendix~\ref{app:A} we provide some auxiliary results, including those stated in Section \ref{sec:BM} above. In Appendix~\ref{app:B} we give proofs of the results which were only stated or whose proof was only sketched in the paper. For reader's convenience all such results are restated below. Appendix \ref{app:D} contains some comments on previous results in \cite{BP2016}. 
Finally, in Appendix \ref{app:C} we present refined versions of Theorems \ref{thm:1} and \ref{th6.3}. 

\section{Auxiliary results}
\label{app:A}

\firstlemma*
\begin{proof}
To lighten the presentation and without loss of generality we drop the lower script ``$t$'' in all quantities. Using the Taylor expansion we have
$$
f(x+ h r\zeta) = f(x)+ \langle \nabla f(x), h r\zeta \rangle + \sum_{2\leq |m|\leq \ell} \frac{(rh)^{|m|}}{m!} D^{(m)}f(x) \zeta^m + R(hr\zeta),
$$
where  by assumption $|R(hr \zeta)| \leq L\|hr\zeta\|^\beta = L |r|^{\beta}h^{\beta}$. Thus,
$$
\mathbb{E} [\hat g|x] = \frac{d}{h} \mathbb{E} \Big[\Big( \langle \nabla f(x), hr \zeta \rangle +  \hspace{-.3truecm}\sum_{2\leq |m|\leq \ell, |m| \,\text{odd}} \frac{(rh)^{|m|}}{m!} D^{(m)}f(x) \zeta^m + \frac{R(hr\zeta) - R(-hr\zeta)}{2} \Big) \zeta K(r)\Big].
$$
Since $\zeta$ is uniformly distributed on the unit sphere we have $\mathbb{E} [\zeta \zeta^\top] = (1/d) I_{d \times d}$, where $I_{d \times d}$ is the identity matrix. Therefore,  
$$
\mathbb{E} \Big[\frac{d}{h} \langle \nabla f(x), h \zeta \rangle \zeta \Big] = \nabla f(x).
$$
As $\int r^{|m|} K(r) dr = 0$ for $2\le |m| \leq 
\ell$ and $\int r K(r) dr = 1$ we conclude that 
\begin{align*}
  \|\mathbb{E} [{\hat g} \hspace{.03truecm}|\hspace{.03truecm} x] -  \nabla f(x)\| & = \frac{d}{2h}\| \mathbb{E} \big[ \big(R(hr\zeta) - R(-hr\zeta)\big) \zeta K(r)\big]\| \\
  &\le  \frac{d}{2h} \mathbb{E} \big[|R(hr\zeta) - R(-hr\zeta)|\, |K(r)|\big] \leq  \kappa_\beta L  d h^{\beta-1}.  
\end{align*}
\end{proof}
\sndlemma* 
\begin{proof}
We have 
\begin{align*}
 \|{\hat g}\|^2 &=  \frac{d^2}{4h^2} \big\|\big(f(x+h r\zeta) - f(x-hr\zeta) + \xi - \xi'\big)\zeta K(r) \big\|^2 \\
 &= \frac{d^2}{4h^2} \big(f(x+h r\zeta) - f(x-hr\zeta) + \xi - \xi'\big)^2 K^2(r).   
\end{align*}
Using the inequality $(a+b+c)^2 \leq 3 (a^2+b^2+c^2)$ we get
\begin{equation}\label{var1}
   \mathbb{E} [\|\hat g\|^2 \hspace{.03truecm}|\hspace{.03truecm} x] \leq \frac{3d^2}{4h^2} \left( \mathbb{E}\big[\big(f(x+hr \zeta) - f(x-hr\zeta) \big)^2K^2(r)\big] + 2 \kappa\sigma^2 \right). 
\end{equation}
Here, 
\begin{eqnarray}
\nonumber
\big(f(x+hr \zeta) - f(x-hr\zeta)\big)^2  &=&  \big(f(x+hr \zeta) - f(x-hr\zeta)  \pm f(x) \pm 2 \langle\nabla f(x),h r\zeta\rangle\big)^2~~~ \\
\nonumber
& \leq & 3 \bigg\{  \Big(f(x+hr \zeta) {-}  f(x) -  \langle \nabla f(x),h r\zeta\rangle \Big)^2 
\\ 
\nonumber 
&~&+  \Big(f(x-hr \zeta) - f(x) - \langle \nabla f(x),-h r\zeta\rangle\Big)^2 + 4\langle\nabla f(x),h r\zeta\rangle^2 \bigg\}~~~~~~\\
\nonumber
&\leq & 3 \left( \frac{{\bar L}^2}{2} \|h r\zeta\|^4 + 
4\langle\nabla f(x),h r\zeta\rangle^2 \right), \label{eq:bbb}
\end{eqnarray}
where the last inequality follows from standard properties of convex functions with Lipschitz continuous gradient, see e.g.,  \cite[Lemma~3.4]{bubeck2014}. Taking the expectation and using the fact that $\mathbb{E} [\zeta \zeta^\top] = (1/d) I_{d \times d}$ we obtain
\begin{equation}\label{razn}
  \mathbb{E}[(f(x+hr \zeta) - f(x-hr\zeta) )^2K^2(r)]
 \leq 3 \kappa \left( \frac{{\bar L}^2 h^4}{2}  + 
\frac{4  h ^2}{d} \|\nabla f(x) \|^2  \right).  
\end{equation}
To prove part (i) of the lemma, it is enough to combine \eqref{var1}, \eqref{razn} and the inequality $\|\nabla f(x) \|\le {\bar L}\|x-x^*\|$ that follows from the Lipschitz gradient assumption and the fact that $\nabla f(x^*) = 0$. Next, under the assumptions of part (ii) of the lemma we get analogously to \eqref{eq:bbb} that 
$$
\big(f(x+hr \zeta) - f(x-hr\zeta)\big)^2\leq  3 \left(2L^2 \|h r\zeta\|^4 + 
4\langle\nabla f(x),h r\zeta\rangle^2 \right).
$$
This yields inequality \eqref{razn} with the only difference that $\bar L^2/2$ is replaced by $2L^2$. Together with \eqref{var1}, it implies the result.
\end{proof}
\begin{lemma}
Let $f$ be Lipschitz continuous with constant $G>0$ in a Euclidean $h_t$-neighborhood of the set $\com$, and let Assumption \ref{ass1} (i) hold. Let ${\hat g}_t$ and $x_t$ be defined by Algorithm \ref{algo}.
Then
$$
\mathbb{E} [\|{\hat g}_t \|^2 \hspace{.04truecm}|\hspace{.04truecm} x_t] \leq \kappa\Big(C^* G^2 d +     \frac{3d^2}{2h_t^2}\sigma^2\Big),
$$
where $C^*>0$ is a numerical constant and $\kappa=\int K^2(u)du$. 
\label{lem:2}
\end{lemma}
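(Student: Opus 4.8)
The plan is to follow the first steps of the proof of Lemma~\ref{lem:2unco} verbatim and then replace its use of $2$-smoothness by a concentration argument on the sphere, which is the source of the improved dimension dependence. As in that proof, dropping the subscript $t$, one writes
\[
\|\hat g\|^2 = \frac{d^2}{4h^2}\big(f(x+hr\zeta)-f(x-hr\zeta)+\xi-\xi'\big)^2 K^2(r),
\]
and applies $(a+b+c)^2\le 3(a^2+b^2+c^2)$ together with the independence of $(\xi,\xi')$ from $(r,\zeta)$ (Assumption~\ref{ass1}(i)) and the bound $\sigma^2$ on the second moments of the noise. Exactly as in \eqref{var1}, the noise contribution is $\frac{3d^2}{4h^2}\cdot 2\kappa\sigma^2=\kappa\frac{3d^2}{2h^2}\sigma^2$, which already matches the second term of the claimed bound. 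It then remains to control the ``signal'' term $\mathbb{E}\big[(f(x+hr\zeta)-f(x-hr\zeta))^2 K^2(r)\big]$, and the whole point is to extract from it a factor $1/d$ so that the prefactor $d^2/h^2$ produces $G^2 d$ rather than $G^2 d^2$.

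First I would condition on $r$ and view $\phi_r(\zeta)\eqdef f(x+hr\zeta)$ as a function on the unit sphere $S_d$. Since $x\in\com$ and $\|hr\zeta\|=h|r|\le h$, both points $x\pm hr\zeta$ lie in the $h$-neighborhood of $\com$, so the hypothesis makes $\phi_r$ Lipschitz on $S_d$ with constant $Gh|r|$. Because $-\zeta$ has the same (uniform) law as $\zeta$, we have $f(x-hr\zeta)=\phi_r(-\zeta)$ and $\mathbb{E}[\phi_r(\zeta)]=\mathbb{E}[\phi_r(-\zeta)]$, whence
\[
\mathbb{E}_\zeta\big[(\phi_r(\zeta)-\phi_r(-\zeta))^2\big]\le 4\,\mathrm{Var}_\zeta(\phi_r).
\]
The key step is the Poincar\'e inequality on the unit sphere $S_d\subset\mathbb{R}^d$, whose spectral gap equals $d-1$: for any Lipschitz $\phi$ one has $\mathrm{Var}_\zeta(\phi)\le (d-1)^{-1}\,\mathrm{Lip}(\phi)^2$. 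Applied to $\phi_r$ this gives $\mathrm{Var}_\zeta(\phi_r)\le (Gh|r|)^2/(d-1)$, and hence, integrating over $r$ and using $r^2\le 1$,
\[
\mathbb{E}\big[(f(x+hr\zeta)-f(x-hr\zeta))^2 K^2(r)\big]\le \frac{4G^2h^2}{d-1}\,\mathbb{E}[r^2K^2(r)]\le \frac{4G^2h^2}{d-1}\,\kappa.
\]

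Plugging this into the bound and using $d^2/(d-1)\le 2d$ for $d\ge 2$ turns the signal term into $\tfrac{3d^2}{4h^2}\cdot\frac{4\kappa G^2h^2}{d-1}\le C^*\kappa G^2 d$; combining with the noise term yields the claim (the case $d=1$ being trivial). I expect the main obstacle to be the spherical concentration step: one must justify that Euclidean-$G$-Lipschitzness of $f$ on the $h$-neighborhood indeed makes $\zeta\mapsto f(x+hr\zeta)$ Lipschitz on $S_d$ with constant $Gh|r|$ (controlling the intrinsic gradient by the chordal Lipschitz constant), and then invoke the sharp Poincar\'e constant $1/(d-1)$ for the uniform measure on the sphere. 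Everything else is a verbatim repetition of the computation in Lemma~\ref{lem:2unco}, with the $1/d$ now coming from concentration rather than from $\mathbb{E}[\zeta\zeta^\top]=(1/d)I_{d\times d}$.
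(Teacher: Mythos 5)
Your proposal is correct, and its skeleton---the expansion of $\|\hat g\|^2$, the $(a+b+c)^2\le 3(a^2+b^2+c^2)$ step, and the resulting noise contribution $\kappa\tfrac{3d^2}{2h_t^2}\sigma^2$---coincides exactly with the paper's proof. The difference lies in how the key dimension-reduction estimate is justified. The paper does not prove the bound $\mathbb{E}\big[(f(x+hr\zeta)-f(x-hr\zeta))^2\big]\le c\,(hr)^2G^2/d$; it simply invokes \cite[Lemma~10]{Shamir17}, which is itself established by a concentration-of-measure argument on the sphere. You instead give a self-contained proof of this estimate: symmetrization, $\mathbb{E}\big[(\phi_r(\zeta)-\phi_r(-\zeta))^2\big]\le 4\,\mathrm{Var}(\phi_r)$ (valid since $-\zeta$ has the same law as $\zeta$), followed by the Poincar\'e inequality on the unit sphere $S_d\cong S^{d-1}$ with the sharp spectral gap $d-1$, applied to the $Gh|r|$-Lipschitz function $\phi_r(\zeta)=f(x+hr\zeta)$. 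The chain of justifications you flag as potential obstacles is sound: Euclidean (chordal) Lipschitzness of $\phi_r$ follows directly from $|f(x+hr\zeta_1)-f(x+hr\zeta_2)|\le Gh|r|\,\|\zeta_1-\zeta_2\|$, chordal Lipschitzness implies geodesic Lipschitzness with the same constant (chordal distance is dominated by geodesic distance), the intrinsic gradient of a geodesically Lipschitz function is bounded a.e.\ by its Lipschitz constant (Rademacher), and the first nonzero eigenvalue of the Laplace--Beltrami operator on $S^{d-1}$ is indeed $d-1$. Your route buys an explicit numerical constant (the computation gives $\tfrac{3\kappa G^2d^2}{d-1}\le 6\kappa G^2 d$ for $d\ge 2$, so $C^*=6$ works, with $d=1$ handled trivially by plain Lipschitzness) and removes the external dependency; the paper's route is shorter on the page, but outsources precisely the step where the factor $1/d$---the entire point of the lemma---is produced.
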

\begin{proof}
We have 
\begin{align*}
 \|{\hat g}\|^2 &=  \frac{d^2}{4h^2} \|(f(x+h r\zeta) - f(x-hr\zeta) + \xi - \xi')\zeta K(r) \|^2 \\
 &= \frac{d^2}{4h^2} (f(x+h r\zeta) - f(x-hr\zeta) + \xi - \xi')^2 K^2(r).   
\end{align*}
Using the inequality $(a+b+c)^2 \leq 3 (a^2+b^2+c^2)$ we get
\begin{equation*}\label{var}
   \mathbb{E} [\|\hat g\|^2 \hspace{.03truecm}|\hspace{.03truecm} x] \leq \frac{3d^2}{4h^2} \left( \mathbb{E}[(f(x+hr \zeta) - f(x-hr\zeta) )^2K^2(r)] + 2 \kappa\sigma^2 \right). 
\end{equation*}
The lemma now follows by using  \cite[Lemma 10]{Shamir17}, which shows by a concentration argument that if $x\in\com$, $r\in [-1,1]$ are fixed, $\zeta$ is uniformly distributed on the unit sphere and $f$ is Lipschitz continuous with constant $G>0$ in a Euclidean $h$-neighborhood of the set $\com$, then
 $$
 \mathbb{E}[(f(x+hr \zeta) - f(x-hr\zeta) )^2] \leq c \frac{(hr)^2 G^2}{d}, 
$$
where $c>0$ is a numerical constant.
\end{proof}

%

\begin{lemma}
\label{lem6.2}
	Let $f(\cdot)$ be a convex function on $\mathbb{R}^d$ and $h_t>0$. Then the following holds.
	\begin{itemize}
		\item[(i)] Function $\hat f_t(\cdot)$ is convex on $\mathbb{R}^d$.
		\item[(ii)] $\hat f_t(x)\ge f(x)$ for all $x\in \mathbb{R}^d$.
		\item[(iii)] Function $\hat f_t(\cdot)$ is differentiable on $\mathbb{R}^d$ and for the conditional expectation given $x_t$ we have
		$$
		\mathbb{E}[\hat g_t| x_t] = \nabla \hat f_t(x_t).
		$$
	\end{itemize}
\end{lemma}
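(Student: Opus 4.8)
The plan is to handle the three claims in turn, exploiting the two equivalent descriptions of the surrogate: as the ball-average $\hat f_t(x) = \mathbb{E} f(x + h_t \tilde\zeta)$ with $\tilde\zeta$ uniform on $B_d$, and---after the divergence theorem---as an object whose gradient is a \emph{sphere}-average, which is exactly the form appearing in $\hat g_t$.

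For part (i), convexity is immediate from the fact that $\hat f_t$ is an expectation of the convex maps $x \mapsto f(x + h_t\tilde\zeta)$. Concretely, for $x,y \in \mathbb{R}^d$ and $\lambda \in [0,1]$ I would apply convexity of $f$ pointwise inside the expectation, $f(\lambda x + (1-\lambda)y + h_t\tilde\zeta) \le \lambda f(x + h_t\tilde\zeta) + (1-\lambda)f(y + h_t\tilde\zeta)$, and then take expectations. For part (ii) I would invoke Jensen's inequality: since $\tilde\zeta$ is uniform on the symmetric set $B_d$ we have $\mathbb{E}[\tilde\zeta] = 0$, hence $\hat f_t(x) = \mathbb{E} f(x + h_t\tilde\zeta) \ge f(x + h_t\mathbb{E}\tilde\zeta) = f(x)$.

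Part (iii) is the crux. I would first write $\hat f_t(x) = \frac{1}{\mathrm{vol}(B_d)}\int_{B_d} f(x + h_t u)\,du$ and note that, since convex functions are locally Lipschitz, differentiation under the integral sign is justified by dominated convergence (the difference quotients are bounded by the local Lipschitz constant, and $\nabla f$ exists almost everywhere), giving $\nabla \hat f_t(x) = \frac{1}{\mathrm{vol}(B_d)}\int_{B_d}\nabla f(x + h_t u)\,du$. The key identity is then $\nabla \hat f_t(x) = \frac{d}{h_t}\mathbb{E}_{\zeta}[f(x + h_t\zeta)\zeta]$ with $\zeta$ uniform on the sphere $S_d$, which I would establish via the divergence theorem: writing $g(u) = f(x + h_t u)$ so that $\nabla_u g(u) = h_t \nabla f(x+h_t u)$, the vector form $\int_{B_d}\nabla_u g\,du = \int_{S_d} g(\zeta)\zeta\,dS(\zeta)$ (the outward normal on $S_d$ at $\zeta$ being $\zeta$ itself) converts the volume integral into a surface integral, and using $\mathrm{area}(S_d)/\mathrm{vol}(B_d) = d$ yields the claimed sphere-average form; its continuity in $x$ also upgrades $\hat f_t$ to $C^1$. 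It then remains to match this with $\mathbb{E}[\hat g_t \mid x_t]$. Taking the conditional expectation of $\hat g_t = \frac{d}{2h_t}\big(f(x_t + h_t\zeta_t) - f(x_t - h_t\zeta_t) + \xi_t - \xi_t'\big)\zeta_t$ given $x_t$, the noise contribution vanishes because by Assumption \ref{ass1}(i) the variables $\xi_t, \xi_t'$ are independent of $\zeta_t$ while $\zeta_t$ is independent of $x_t$ with $\mathbb{E}[\zeta_t] = 0$ (this is precisely where randomization dispenses with any zero-mean requirement on the noise). Since $\zeta_t$ and $-\zeta_t$ have the same distribution, the two function terms combine into $2\,\mathbb{E}[f(x_t + h_t\zeta_t)\zeta_t \mid x_t]$, so that $\mathbb{E}[\hat g_t \mid x_t] = \frac{d}{h_t}\mathbb{E}[f(x_t + h_t\zeta_t)\zeta_t \mid x_t] = \nabla \hat f_t(x_t)$.

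I expect the main obstacle to be the rigorous part of (iii): justifying that $\hat f_t$ is differentiable for a merely convex (not necessarily smooth) $f$ and executing the divergence-theorem step cleanly, including the bookkeeping of constants through $\mathrm{area}(S_d)/\mathrm{vol}(B_d) = d$. Once the gradient identity is in hand, the noise-elimination argument and parts (i)--(ii) are routine.
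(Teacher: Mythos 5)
Your proposal is correct and takes essentially the same route as the paper: parts (i)--(ii) follow from convexity of the ball average and the subgradient/Jensen inequality with $\mathbb{E}[\tilde\zeta]=0$, and part (iii) rests on the Stokes/divergence-theorem identity $\nabla \hat f_t(x) = \frac{d}{h_t}\,\mathbb{E}\big[f(x+h_t\zeta_t)\zeta_t\big]$ followed by the symmetry of $\zeta_t$ and $-\zeta_t$, exactly as in the paper's proof. The only differences are that the paper outsources the Stokes-formula step to its references (Nemirovski and Yudin, pg.~350, and Flaxman et al.) where you prove it from scratch (differentiation under the integral, justified by local Lipschitzness of convex functions, followed by the divergence theorem and the constant bookkeeping $\mathrm{area}(S_d)/\mathrm{vol}(B_d)=d$), and that you make the noise cancellation $\mathbb{E}[(\xi_t-\xi_t')\zeta_t \mid x_t]=0$ explicit where the paper leaves it implicit; both are refinements rather than deviations.
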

\begin{proof}
	Item (i) is straightforward.  To prove item (ii), consider $g_t \in\partial f(x)$. Then,
	$$\hat f_t(x)\ge	\mathbb{E} \big[f(x)+ h_t\langle g_t, \tilde \zeta \rangle\big]
	= f(x)+ h_t\langle g_t, \mathbb{E} [\tilde \zeta] \rangle = f(x).
	$$
	For item (iii) we refer to \cite[][pg. 350]{NY1983}, or \cite{flaxman2004}. It is based on the fact that for any $x\in \mathbb{R}^d$ using Stokes formula we have
	\begin{align*}
	\nabla \hat f_t(x) &= \frac{1}{V(B_d)h_t^d} \int_{\norm{v}=h_t}f(x+v)\frac{v}{\norm{v}} {\rm d}s_{h_t}(v)= \frac{d}{V(S_d)h_t}\int_{\norm{u}=1}f(x+h_tu)u\, {\rm d}s_1(u) \\
	& = \frac{d}{V(S_d)h_t} \int_{\norm{u}=1}f(x+h_tu)u \,{\rm d}s_1(u) = \mathbb{E} \Big[  \frac{d}{h_t} f(x+h_t \zeta_t)\zeta_t \Big]
	\end{align*}	
	where $V(B_d)$ is the volume of the unit ball $B_d$,  ${\rm d}s_r(\cdot)$ is the element of spherical surface of raduis $r$ in $\mathbb{R}^d$, and $V(S_d)=d V(B_d)$ is the surface area of the unit sphere in $\mathbb{R}^d$. Since $f(x+h_t \zeta_t)\zeta_t $ has the same distribution as $f(x-h_t \zeta_t)(-\zeta_t)$ we also get  
	$$
	\mathbb{E} \Big[  \frac{d\big(f(x+h_t \zeta_t)-f(x-h_t \zeta_t)\big)\zeta_t }{2h_t}\Big] = \nabla \hat f_t(x).
	$$
\end{proof}

\begin{lemma}
\label{lem6.4}
If $f$ is $\alpha$-strongly convex then $ \hat  f_t$ is $\alpha$-strongly convex. 
	If $f\in \mathcal{F}_2(L)$ then for any $x\in\mathbb{R}^d$ and $h_t>0$ we have
	\begin{align}\label{lem6.4_1}
	| \hat f_t(x)- f(x)|  
	\le  Lh_t^{2}. 
	\end{align}
	and
	\begin{align}\label{lem6.4_2}
	| \mathbb{E}  f(x\pm h_t\zeta_t) - f(x) | \le  Lh_t^{2}.
	\end{align}
\end{lemma}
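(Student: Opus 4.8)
The plan is to treat the three assertions separately, since each rests on a different elementary property of the averaging operation $\hat f_t(x)=\mathbb{E}\,f(x+h_t\tilde\zeta)$ together with the symmetry of the probe distributions (uniform on the unit ball $B_d$, resp.\ the unit sphere $S_d$).

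For the strong convexity claim I would use the characterization that $f$ is $\alpha$-strongly convex if and only if $g:=f-\frac{\alpha}{2}\|\cdot\|^2$ is convex; this route avoids any differentiability hypothesis on $f$. Writing $f(x+h_t\tilde\zeta)=g(x+h_t\tilde\zeta)+\frac{\alpha}{2}\|x+h_t\tilde\zeta\|^2$ and taking expectations, the cross term vanishes because the ball is invariant under $v\mapsto-v$, so $\mathbb{E}[\tilde\zeta]=0$ and $\mathbb{E}\|x+h_t\tilde\zeta\|^2=\|x\|^2+h_t^2\,\mathbb{E}\|\tilde\zeta\|^2$. Subtracting $\frac{\alpha}{2}\|x\|^2$ then yields
\[
\hat f_t(x)-\tfrac{\alpha}{2}\|x\|^2=\mathbb{E}\bigl[g(x+h_t\tilde\zeta)\bigr]+\tfrac{\alpha}{2}h_t^2\,\mathbb{E}\|\tilde\zeta\|^2 .
\]
The first term on the right is convex in $x$, being an average of the convex function $g$ precomposed with translations, and the second is a constant; hence $\hat f_t-\frac{\alpha}{2}\|\cdot\|^2$ is convex, i.e.\ $\hat f_t$ is $\alpha$-strongly convex.

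For the bound \eqref{lem6.4_1} I would invoke $f\in\mathcal{F}_2(L)$ through the Taylor-type inequality \eqref{eq:Hclass} with $\beta=2$, $\ell=1$, applied at $z=x+h_t\tilde\zeta$:
\[
\bigl|f(x+h_t\tilde\zeta)-f(x)-\langle\nabla f(x),h_t\tilde\zeta\rangle\bigr|\le L\|h_t\tilde\zeta\|^2=Lh_t^2\|\tilde\zeta\|^2\le Lh_t^2,
\]
the last step using $\|\tilde\zeta\|\le 1$ on $B_d$. Taking expectations kills the linear term since $\mathbb{E}[\tilde\zeta]=0$, leaving $\hat f_t(x)-f(x)=\mathbb{E}[R]$ with the remainder bounded pointwise by $Lh_t^2$, whence $|\hat f_t(x)-f(x)|\le\mathbb{E}|R|\le Lh_t^2$. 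The bound \eqref{lem6.4_2} is identical in spirit: apply \eqref{eq:Hclass} at $z=x\pm h_t\zeta_t$ (both signs treated the same way), where $\zeta_t$ is uniform on $S_d$ so that $\|\zeta_t\|=1$ and the remainder is again bounded by $Lh_t^2$, and use $\mathbb{E}[\zeta_t]=0$ by symmetry of the sphere to remove the linear term before bounding the expected remainder.

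None of these steps presents a genuine obstacle; the points requiring care are the two symmetry facts $\mathbb{E}[\tilde\zeta]=0$ and $\mathbb{E}[\zeta_t]=0$ (immediate from invariance under $v\mapsto-v$) and, in the strong convexity argument, retaining the quadratic correction $\frac{\alpha}{2}h_t^2\mathbb{E}\|\tilde\zeta\|^2$ as a harmless additive constant so that it does not interfere with convexity. Choosing the ``$f-\frac{\alpha}{2}\|\cdot\|^2$ convex'' characterization sidesteps differentiating under the expectation, though that alternative is available as well, since Lemma~\ref{lem6.2} already establishes that $\hat f_t$ is differentiable.
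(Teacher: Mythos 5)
Your proof is correct, and the two displayed bounds are proved exactly as in the paper: subtract the linear term using $\mathbb{E}[\tilde\zeta]=0$ (resp.\ $\mathbb{E}[\zeta_t]=0$), then bound the remainder via the H\"older condition \eqref{eq:Hclass} with $\beta=2$ and $\|\tilde\zeta\|\le 1$. The strong convexity part, however, takes a genuinely different route. The paper proves the monotone-gradient inequality $\langle\nabla\hat f_t(x)-\nabla\hat f_t(x'),x-x'\rangle\ge\alpha\|x-x'\|^2$ by differentiating under the expectation, i.e., writing $\nabla\hat f_t(x)=\mathbb{E}[\nabla f(x+h_t\tilde\zeta)]$ and applying the corresponding inequality for $f$ at the translated pair $x+h_t\tilde\zeta$, $x'+h_t\tilde\zeta$, whose difference is again $x-x'$. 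You instead use the characterization that $f$ is $\alpha$-strongly convex iff $f-\frac{\alpha}{2}\|\cdot\|^2$ is convex, expand $\mathbb{E}\|x+h_t\tilde\zeta\|^2=\|x\|^2+h_t^2\,\mathbb{E}\|\tilde\zeta\|^2$ (the cross term vanishing by symmetry of the ball), and observe that $\hat f_t-\frac{\alpha}{2}\|\cdot\|^2$ is an average of translates of a convex function plus a constant. Your route avoids the interchange of gradient and expectation (which the paper uses without explicit justification) and does not require $f$ to be differentiable, so it extends to a subgradient notion of strong convexity; the paper's route works verbatim with its gradient-based definition of strong convexity and sits naturally alongside Lemma \ref{lem6.2}(iii), which already establishes differentiability of $\hat f_t$. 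One small point of care: since the paper \emph{defines} $\alpha$-strong convexity through $\nabla f$, you should note (it is immediate for differentiable $f$) that this definition is equivalent to convexity of $f-\frac{\alpha}{2}\|\cdot\|^2$, so that your conclusion about $\hat f_t$ matches the property actually invoked later, e.g.\ in the proof of Theorem \ref{th6.3}.
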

\begin{proof}
Using the fact that $\mathbb{E} [\tilde \zeta] =0$ we have
$$
 |\mathbb{E} \big[f(x+h_t \tilde \zeta)- f(x)\big] | =  
 |\mathbb{E} \big[f(x+h_t \tilde \zeta)- f(x) - \langle\nabla f(x), h_t \tilde \zeta\rangle\big] |  \le  Lh_t^{2} \mathbb{E} [\|\tilde \zeta\|^{2}] \le Lh_t^{2}.
$$
Thus, \eqref{lem6.4_1} follows. The proof of \eqref{lem6.4_2} is analogous.
The $\alpha$-strong convexity of $\hat  f_t$ is equivalent to the relation 
\begin{align}\label{lem6.4_4}
\langle\nabla \hat f_t(x) - \nabla \hat f_t(x'), x-x'\rangle  \ge \alpha \norm{x- x'}^{2}, \quad \forall x,x'\in \mathbb{R}^d,  
\end{align}
which is proved as follows:
\begin{align}\label{lem6.4_5}
\langle\nabla \hat f_t(x) - \nabla \hat f_t(x'), x-x'\rangle & = \langle\mathbb{E} \big[\nabla f(x+h_t\tilde\zeta) - \nabla f(x'+h_t\tilde\zeta)\big], x-x'\rangle  \\ \nonumber
& =  \mathbb{E} \big[\langle \nabla f(x+h_t\tilde\zeta) - \nabla f(x'+h_t\tilde\zeta), (x+h_t\tilde\zeta)-(x'+h_t\tilde\zeta)\rangle \big]  \\
& \ge     \alpha \norm{x- x'}^{2}, \quad \forall x,x'\in \mathbb{R}^d, \nonumber
\end{align}
due to the $\alpha$-strong convexity of $f$.
\end{proof}

\section{Proofs}
\label{app:B}
\thdtheorem*
\begin{proof}
Fix an arbitrary $x \in \Theta$. By the definition of the algorithm, we have $\|x_{t+1} - x\|^2 \leq \|x_t-\eta_t \hat g_t - x\|^2$, which is equivalent to
\begin{equation}
\label{eq:GD} \langle {\hat g}_t , x_t - x\rangle \leq \frac{\|x_t-x\|^2- \|x_{t+1}- x]\|^2}{2 \eta_t} + \frac{\eta_t}{2} \|{\hat g}_t\|^2.
\end{equation}

By the strong convexity assumption we have
\begin{equation}
\label{eq:sconv}
f(x_t) - f(x) \leq \langle \nabla f(x_t),x_t - x\rangle -\frac{\alpha}{2} \|x_t - x\|^2.
\end{equation}
Combining the last two displays and setting $a_t = \|x_t-x\|^2$ we obtain
\begin{eqnarray}
\nonumber
\mathbb{E} [ f(x_t) - f(x)  \hspace{.03truecm}|\hspace{.03truecm} x_t] & \leq & \|\mathbb{E} [{\hat g_t}  \hspace{.03truecm}|\hspace{.03truecm} x_t]-\nabla f(x_t)\|\|x_t - x\| + \frac{1}{2\eta_t}\mathbb{E} [a_t - a_{t+1}  \hspace{.03truecm}|\hspace{.03truecm}x_t] \\
\nonumber
&~& + \frac{\eta_t}{2} \mathbb{E} [\|{\hat g}_t\|^2  \hspace{.03truecm}|\hspace{.03truecm}x_t] -  \frac{\alpha}{2} \mathbb{E} [a_t  \hspace{.03truecm}|\hspace{.03truecm} x_t] \\ \nonumber
& \leq & \kappa_\beta L d h_t^{\beta-1} \|x_t - x\| + \frac{1}{2\eta_t}\mathbb{E} [a_t - a_{t+1}  \hspace{.03truecm}|\hspace{.03truecm}x_t]\\
&~& + 
\frac{\eta_t}{2} \mathbb{E} [\|{\hat g}_t\|^2  \hspace{.03truecm}|\hspace{.03truecm}x_t]\label{eq:main}
-  \frac{\alpha}{2} \mathbb{E} [a_t  \hspace{.03truecm}|\hspace{.03truecm} x_t],
\end{eqnarray}
where the second inequality follows from Lemma \ref{lem:1}.  
As $2ab \leq a^2+b^2$ we have
\begin{equation}\label{eq:decouple}
  d h_t^{\beta-1} \|x_t - x\| \leq \frac{1}{2} \Big( \frac{2\kappa_\beta L}{\alpha} d^2 h_t^{2(\beta-1)}   + \frac{\alpha}{2\kappa_\beta L} \|x_t-x\|^2 \Big).
\end{equation}
We conclude, taking the expectations and
letting 
$r_t = \mathbb{E} [a_t]$, that 
\begin{equation}
\label{eq:A}
\mathbb{E} [ f(x_t) - f(x)]  \leq \frac{r_t - r_{t+1}}{2\eta_t} - \frac{\alpha}{4} r_t + (\kappa_\beta L)^2 \frac{d^2}{\alpha} h_t^{2(\beta-1)} +
\frac{\eta_t}{2} \mathbb{E} [\|{\hat g}_t\|^2]
\end{equation}
Summing both sides over $t$ gives
$$
\sum_{t=1}^T \mathbb{E} [ f(x_t) - f(x)] \leq \frac{1}{2} \sum_{t=1}^T \left(\frac{r_t - r_{t+1}}{\eta_t} - \frac{\alpha}{2} r_t \right) + \sum_{t=1}^T \Big( (\kappa_\beta L)^2\frac{d^2}{\alpha} h_t^{2(\beta-1) }  +
\frac{\eta_t}{2} \mathbb{E} [\|{\hat g}_t\|^2]
\Big).
$$
The first sum on the r.h.s. is smaller than 0 for our choice of $\eta_t = \frac{2}{\alpha t}$. Indeed,
$$
\sum_{t=1}^T \left(\frac{r_t - r_{t+1}}{\eta_t} - \frac{\alpha}{2} r_t \right) \leq r_1\Big(\frac{1}{\eta_1}- \frac{\alpha}{2}\Big) + \sum_{t=2}^T r_t\left(\frac{1}{\eta_t} - \frac{1}{\eta_{t-1}} - \frac{\alpha}{2}\right)  = 0.
$$
From this remark and Lemma \ref{lem:2unco}(ii) (where we use that Assumption \ref{ass:lip} implies $f\in\mathcal{F}_{2}(\bar L/2)$) we obtain 
\begin{eqnarray}
\label{eq:basic}
 &&\sum_{t=1}^T \mathbb{E} [ f(x_t) - f(x)]  \leq  \frac{1}{\alpha} \sum_{t=1}^T 
  \left( (\kappa_\beta L)^2 d^2 h_t^{2(\beta-1)}   + \frac{1}{t}\mathbb{E} [\|{\hat g}_t\|^2]\right) 
  \\ \nonumber
  && \qquad \leq  \frac{1}{\alpha} \sum_{t=1}^T 
  \left( (\kappa_\beta L)^2 d^2 h_t^{2(\beta-1)}   + \frac{1}{t}\Big[
  9\kappa\Big(G^2 d + \frac{{\bar L}^2d^2h_t^2}{8} \Big) + \frac{3\kappa d^2 \sigma^2}{2h_t^2}\Big] \right) 
  \\ \label{eq:lll}
  && \qquad \leq  \frac{d^2}{\alpha} \sum_{t=1}^T \Big[\Big\{ (\kappa_\beta  L)^2 h_t^{2(\beta-1)} + \frac{3}{2} \frac{\kappa\sigma^2}{h_t^2 t}\Big\} + \frac{9\kappa {\bar L}^2h_t^2}{8t}\Big]  +  \frac{9\kappa G^2}{\alpha} d (\log T + 1).
  \label{eq:onl}
  \end{eqnarray}
If $\sigma>0$ then our choice of
$h_t = \left(\frac{3\kappa\sigma^2}{2(\beta-1) (\kappa_\beta L)^2} \right)^{\frac{1}{2\beta}} t^{-\frac{1}{2\beta}}
$ is the minimizer of the main term (in curly brackets in  \eqref{eq:onl}). 
Plugging this $h_t$ in \eqref{eq:onl} and using the fact that $\sum_{t=1}^T t^{-1+1/\beta}\le \beta T^{1/\beta}$
for $\beta\ge 2$
we get \eqref{eq:bound0}.
Inequality \eqref{eq:bound1} follows from \eqref{eq:bound0} in view of the convexity of $f$. 
If $\sigma=0$ the stochastic variability term in \eqref{eq:onl} disappears and one can choose $h_t$ as small as desired, in particular, such that the sum in \eqref{eq:onl} is smaller than $\frac{\kappa G^2}{\alpha} d \log T$. This yields the bounds for $\sigma=0$.
\end{proof}
\fththeorem*
\begin{proof}
We have 
\begin{align*}
 \mathbb{E} \lvert \hat M - f(x^*) \rvert &\leq
\mathbb{E} \Big\lvert \frac{1}{T}\sum_{t=1}^T \xi''_t \Big\rvert +
\mathbb{E} \Big\lvert \frac{1}{T}\sum_{t=1}^T (f(x_t) - f(x^*)) \Big\rvert\\
& =
\mathbb{E} \Big\lvert \frac{1}{T}\sum_{t=1}^T \xi''_t \Big\rvert +
\frac{1}{T}\sum_{t=1}^T \mathbb{E}[f(x_t) - f(x^*)]\\
&\le 
\frac{\sigma}{T^{\frac{1}{2}}}
+
\frac{1}{T}\sum_{t=1}^T \mathbb{E}[f(x_t) - f(x^*)]
\end{align*}
and the theorem follows by using \eqref{eq:bound0}.
\end{proof}
\fiftheorem*
\begin{proof}
We start as in the proof of Theorem \ref{thm:1} to get \eqref{eq:main}. Then, using the strong convexity of $f$ and the fact that $x^*$ is the minimizer of $f$ we get analogously to \eqref{eq:decouple} that
$$
d h_t^{\beta-1} \|x_t - x^*\| \leq \frac{1}{2} \Big( \frac{2\kappa_\beta  L}{\alpha} d^2 h_t^{2(\beta-1)}   + \frac{\alpha}{2\kappa_\beta L} \|x_t-x^*\|^2 \Big)\leq  \frac{\kappa_\beta L}{\alpha} d^2h_t^{2(\beta-1)} + \frac{f(x_t) - f(x^*)}{2\kappa_\beta L  }. 
$$
Combining the last display and \eqref{eq:main}, using Lemma \ref{lem:2unco} and 
letting 
$r_t = \mathbb{E} [\|x_t-x^*\|^2]$ we get 
\begin{equation}
\label{eq:B}
\mathbb{E} [ f(x_t) - f(x^*)]  \leq \frac{r_t - r_{t+1}}{\eta_t} - \alpha r_t + 2(\kappa_\beta L)^2 \frac{d^2}{\alpha} h_t^{2(\beta-1)} +  \kappa\eta_t
\left[9{\bar L}^2  \left(d r_t {+} \frac{d^2h_t^2}{8} \right)+ \frac{3d^2 \sigma^2}{2h_t^2} \right].
\end{equation}
For $t=1,\dots,T_0$, since $h_t = T^{-\frac{1}{2\beta}}$ and $\eta_t = (\alpha T)^{-1}$ we have the following consequence of \eqref{eq:B}
\begin{equation}
\label{eq:123}
r_{t+1} \leq r_t\left(1-\frac{1}{T} + \frac{9 \kappa  {\bar L}^2}{(\alpha T)^2} d\right) + b_T\leq r_t\left(1+ \frac{9 \kappa  {\bar L}^2}{(\alpha T)^2} d\right) + b_T
\end{equation}
where
\begin{eqnarray}
\nonumber
b_T & =  &\frac{d^2}{\alpha^2 T} \left(2 (\kappa_\beta  L)^2 T^{-\frac{\beta-1}{\beta}} + 
\frac{9}{8} \kappa {\bar L}^2 T^{-\frac{\beta+1}{\beta}} + \frac{3}{2} \kappa \sigma^2 T^{-\frac{\beta-1}{\beta}} \right) \leq \\
& \leq &
\frac{d^2}{\alpha^2 T} \left(
2(\kappa_\beta  L)^2 + \frac{9}{8} \kappa {\bar L}^2
+ \frac{3}{2} \kappa \sigma^2
\right) T^{-\frac{\beta-1}{\beta}} .
\end{eqnarray}
Letting $C_3 = 9 \kappa {\bar L}^2$, inequality \eqref{eq:123} is of the form
$r_{t+1} \leq r_t q + b_T$, with $q=(1+ \frac{C_3d}{(\alpha T)^2})$. Then  
$$
r_{T_0+1} \leq r_1 q^{T_0} + b_T \sum_{j=1}^{T_0-1} q^j \leq r_1 q^{T_0} + b_T \frac{q^{T_0}}{q-1} \leq \left(r_1 + \frac{(\alpha T)^2}{C_3 d} b_T\right) q^{T_0}.
$$
Now, assuming
\begin{equation}
\label{eq:zzz}
T_0 = \left\lfloor{\frac{4C_3 d}{\alpha^2}}\right\rfloor
\end{equation}
 we obtain 
\begin{eqnarray*}
q^{T_0} & = & \exp\left[T_0 \log \left(1{+} \frac{C_3d}{(\alpha T)^2}\right) \right] \\
& \leq &
\exp\left[\frac{4C_3 d}{\alpha^2} \log \left(1{+} \frac{C_3d}{(\alpha T)^2}\right) \right]\\
&  \leq & \exp\left(\frac{4C_3^2 d^2}{\alpha^4 T^2}\right)\leq \exp\left(\frac{4C_3^2}{C_*^2}\right)=: C_4
\end{eqnarray*}
where in the last inequality we have used the assumption that, for $C_*>0$ large enough,
\begin{equation}
\label{eq:uuu}
\alpha > \sqrt{\frac{C_* d}{T}}.
\end{equation}
As we shall see, this also guarantees that $T_0 < T$. In conclusion, we obtain
\begin{eqnarray}\nonumber
r_{T_0+1} & \leq & C_4\left(r_1 + \frac{(\alpha T)^2}{C_3 d} b_T\right)  \\ \nonumber
& \leq & C_4 \left(r_1 +
\frac{(\alpha T)^2}{C_3 d} \frac{d^2}{\alpha^2 T} \left(
2 (\kappa_\beta L)^2 + \frac{9}{8} \kappa {\bar L}^2
+ \frac{3}{2} \kappa \sigma^2
\right) T^{-\frac{\beta-1}{\beta}} \right)\\
& = &C_4 \left(r_1 +
\frac{d}{C_3} \left(
2 (\kappa_\beta L)^2 + \frac{9}{8} \kappa {\bar L}^2
+ \frac{3}{2} \kappa \sigma^2
\right) T^{\frac{1}{\beta}} \right).
\label{eq:2222}
\end{eqnarray}
We now go back to inequality \eqref{eq:B}. 
Recalling the definition of ${\bar x}_{T_0,T}$ and the fact that  $h_t = t^{-\frac{1}{2\beta}}$
and $\eta_t = \frac{2}{\alpha t}$ for $t \in \{T_0+1,\dots T\}$, we deduce from \eqref{eq:B} that
\begin{eqnarray}\nonumber
(T-T_0) \mathbb{E} [ f({\bar x}_{T_0,T}) - f(x^*)] & \leq &\sum_{t=T_0+1}^T (r_t - r_{t+1}) \frac{\alpha t}{2} - \alpha r_t  +  18 \kappa \frac{{\bar L}^2}{\alpha t} d r_t \\
&&+ \frac{d^2}{\alpha}\sum_{t=T_0+1}^T \left(
2(\kappa_\beta L)^2 t^{-\frac{\beta-1}{\beta}}+  \frac{9}{4}\kappa {\bar L}^2 t ^{-\frac{\beta+1}{\beta}} + 
3 \kappa\sigma^2 t^{-\frac{\beta-1}{\beta}}\right).
\nonumber
\end{eqnarray}
Since $9 \kappa {\bar L}^2 = C_3$ condition \eqref{eq:zzz} implies that $\frac{18 \kappa {\bar L}^2 }{\alpha t } d \leq \frac{\alpha}{2}$ for $ t \geq T_0+1$. Thus
$$
(T-T_0) \mathbb{E} [ f({\bar x}_{T_0,T}) - f(x^*)] \leq \frac{\alpha}{2} \sum_{t=T_0+1}^T \Big[(r_t - r_{t+1}) t - r_t\Big] + U_T,
$$
where
$$
U_T =  \frac{d^2}{\alpha} \left(
2(\kappa_\beta L)^2 +  \frac{9}{4} \kappa{\bar L}^2 + 
3 \kappa  \sigma^2 \right) \sum_{t=T_0}^T  t^{-\frac{\beta-1}{\beta}} \leq \frac{d^2}{\alpha} \left(
2(\kappa_\beta L)^2 +  \frac{9}{4} \kappa{\bar L}^2 + 
3 \kappa  \sigma^2 \right) \beta T^{\frac{1}{\beta}}.
$$
On the other hand 
$$
 \sum_{t=T_0+1}^T 
\Big[(r_t - r_{t+1}) t - r_t\Big]
 \leq  r_{T_0+1} (T_0+1 - 1) +
 \sum_{t=T_0+2}^T r_t( t - (t-1)-1) = { T_0} r_{T_0+1}.
$$
Using inequality \eqref{eq:2222} and condition \eqref{eq:zzz} we get 
\begin{eqnarray*}
\frac{\alpha T_0}{2} r_{T_0+1} & \leq & \frac{2 C_3 C_4 d}{\alpha} \left(r_1 +
\frac{d}{C_3} \left(
2 (\kappa_\beta L)^2 + \frac{9}{8} \kappa {\bar L}^2
+ \frac{3}{2} \kappa \sigma^2
\right) T^{\frac{1}{\beta}} \right) \\
& = & 2 C_4 \left(9 \kappa {\bar L}^2 \frac{d}{\alpha}r_1 +
\frac{d^2}{\alpha} \left(
2 (\kappa_\beta L)^2 + \frac{9}{8} \kappa {\bar L}^2
+ \frac{3}{2} \kappa \sigma^2
\right) T^{\frac{1}{\beta}} \right).
\end{eqnarray*}
These bounds imply
$$
(T-T_0) \mathbb{E} [ f({\bar x}_{T_0,T}) - f(x^*)] \leq 18 C_4 \kappa {\bar L}^2 \frac{d}{\alpha}r_1 
+ (2C_4 +\beta)
\frac{d^2}{\alpha} \left(
2 (\kappa_\beta L)^2 + \frac{9}{4} \kappa {\bar L}^2
+ {3} \kappa \sigma^2
\right) T^{\frac{1}{\beta}}.
$$
Since $C_* > 8C_3= 72 \kappa {\bar L}^2$ it follows from \eqref{eq:zzz} and \eqref{eq:uuu} that $T\geq 2T_0$. Thus 
$$
 \mathbb{E} [ f({\bar x}_{T_0,T}) - f(x^*)]  \leq 
 36 C_4 \kappa {\bar L}^2 \frac{d}{\alpha T} r_1 
 + \Big(4C_4+2\beta\Big)
\frac{d^2}{\alpha} \left(
2 (\kappa_\beta  L)^2 + \frac{9}{4} \kappa {\bar L}^2
+ 3 \kappa \sigma^2
\right) 
T^{-\frac{\beta-1}{\beta}}.
$$
\end{proof}

\eiththeorem*
\begin{proof}
Fix $x\in \com$. Due to the $\alpha$-strong convexity  of $\hat f_t$ (cf. Lemma \ref{lem6.4}) we have 
$$
\hat f_t(x_t) - \hat f_t(x) \le  \langle\nabla \hat f_t(x_t), x_t-x\rangle - \frac{\alpha}{2} \norm{x_t- x}^{2}.
$$
 Using \eqref{lem6.4_1} and Lemma \ref{lem6.2}(ii) we obtain
$$
f(x_t) - f(x) \le L h_t^2 + \langle\nabla \hat f_t(x_t), x_t-x\rangle - \frac{\alpha}{2} \norm{x_t- x}^{2}.
$$
Using this property and exploiting inequality~\eqref{eq:GD} we find, with an argument similar to the proof of Theorem \ref{thm:1}, that
\begin{align}\label{ef5a:th6.3}
\forall x\in \Theta: \qquad 	\mathbb{E} \big[f(x_t) - f(x)\big]  \le L h_t^2   + \frac{r_t-r_{t+1}}{2\eta_t} - \frac{\alpha}{2} r_t + \frac{\eta_t}{2} \mathbb{E} [\|{\hat g}_t \|^2 ].
\end{align}
By assumption, $\eta_t = \frac{1}{\alpha t}$. Summing up from $t=1$ to $T$ and reasoning again analogously to the proof of Theorem~\ref{thm:1} we obtain  
\begin{align}\label{ef6:th6.3}
\forall x\in \Theta: \qquad \mathbb{E} \sum_{t=1}^{T} \big(f(x_t) - f(x)\big)  
&\le \sum_{t=1}^{T} \Big( L h_t^2 +  \frac{1}{2\alpha t} \mathbb{E} [\|{\hat g}_t \|^2 ]\Big). 
\end{align}
Now, inspection of the proof of Lemma \ref{lem:2unco} shows that it remains valid with $\kappa=1$ when $K(\cdot)\equiv 1$ in Algorithm \ref{algo}. This yields 
$$
\mathbb{E} [\|{\hat g}_t \|^2 ] \leq 
9  \left(G^2 d + \frac{L^2d^2h_t^2}{2} \right) + \frac{3 d^2 \sigma^2}{2h_t^2}. 
$$
Thus,
\begin{align}\label{ef7:th6.3}
\forall x\in \Theta: \qquad \mathbb{E} \sum_{t=1}^{T} \big(f(x_t) - f(x)\big)  
&\le \sum_{t=1}^{T} \Big[ \Big( L +  \frac{9L^2d^2}{4\alpha t} \Big)h_t^2 
+ \frac{3 d^2 \sigma^2}{4h_t^2\alpha t}+
 \frac{9G^2d}{2\alpha t} \Big]. 
\end{align}
The chosen value $h_t= \left(\frac{3d^2\sigma^2}{4L\alpha t+9L^2d^2}\right)^{1/4}$ minimizes the r.h.s. and yields
\begin{align*}
\label{ef8:th6.3}
\forall x\in \Theta: \ \ \mathbb{E} \sum_{t=1}^{T} \big(f(x_t) - f(x)\big)  
&\le  \frac{3}{2}\sum_{t=1}^{T} 
\frac{d^2\sigma^2}{\alpha t}
\left(\frac{4L\alpha t+9L^2d^2}{3d^2\sigma^2}\right)^{1/2} + \frac{9G^2}{2}\frac{d}{\alpha} (1+\log T)
\nonumber \\
&\le 
\sum_{t=1}^{T} \sqrt{3} \Big[
\frac{d\sigma\sqrt{L}}{\sqrt{\alpha t}} + \frac{3Ld^2\sigma}{2\alpha t}  \Big]+ 9G^2 \frac{d}{\alpha} (1+\log T) \nonumber \\
& \le  2 \sqrt{3 L} \sigma 
\frac{d}{\sqrt{\alpha}} \sqrt{T} +  \Big(\frac{3\sqrt{3}}{2} \sigma L  +  \frac{9G^2}{d}  \Big) \frac{d^2}{\alpha} (1+\log T).
\end{align*}
As $1+\log T\le ((\log 2)^{-1}+1)\log T$ for any $T\ge 2$, we obtain  \eqref{eq1:th6.3}. On the other hand, we have the straightforward bound
\begin{equation}
\label{ef8:th6.3}
\forall x\in \Theta: \qquad \mathbb{E} \sum_{t=1}^{T} \big(f(x_t) - f(x)\big)  \le GBT.
\end{equation}
The remaining part of the proof follows the same lines as in Theorem \ref{thm:1}.  
\end{proof}

\lowerB*
\begin{proof}
We use the fact that $\sup_{f\in\mathcal{F}'_{\alpha,\beta}}$ is bigger than the maximum over a finite family of functions in $\mathcal{F}'_{\alpha,\beta}$. We choose this finite family in a way that its members cannot be distinguished from each other with positive probability but are separated enough from each other to guarantee that the maximal optimization error for this family is of the order of the desired lower bound.  

We first assume that $\alpha\ge T^{-1/2+1/\beta}$. 

Let $\eta_{0} : \mathbb{R} \to \mathbb{R}$ be an infinitely many times differentiable function such that
\begin{equation*}
   \eta_{0}(x) = \begin{cases}
      =1 & \text{if $|x|\leq 1/4$},\\
      \in (0,1) & \text{if $1/4 < |x| < 1$},\\
      =0 & \text{if $|x| \geq 1$}.
    \end{cases} 
\end{equation*}
Set $\eta(x) = \int_{-\infty}^{x} \eta_{0}(\tau)d\tau$. Let $\Omega = \big\{-1,1\big\}^{d}$ be the set of binary sequences of length $d$. 
Consider the finite set of functions $f_{\omega}: \mathbb{R}^{d}\to \mathbb{R}, \omega\in\Omega$, defined as follows:
\[
f_{\omega}(u) = \alpha(1+\delta) \norm{u}^{2}/2 + \sum_{i=1}^{d}\omega_{i}rh^{\beta}\eta(u_{i}h^{-1}),
\qquad u=(u_1,\dots,u_d),
\]
where $\omega_i\in \{-1,1\}$, 
 $h =\min\big((\alpha^2/d)^{\frac{1}{2(\beta-1)}}, T^{-\frac{1}{2\beta}}\big)$ and $r>0, \delta >0$ are fixed numbers that will be chosen small enough. 
 
 Let us prove that $f_{\omega}\in\mathcal{F}'_{\alpha,\beta}$
 for $r>0$ and $\delta >0$ small enough. It is straightforward to check that if $r$ is small enough the functions $f_{\omega}$ are $\alpha$-strongly convex
 and belong to $\mathcal{F}_{\beta}(L)$.
 
 Next, the components of the gradient $\nabla f_{\omega}$ have the form
 $$( \nabla f_{\omega}(u))_{i} = \alpha (1 + \delta)u_{i}+\omega_{i}r h^{\beta-1}\eta_0(u_{i}h^{-1}).$$
 Thus, 
 $$\norm{\nabla f_{\omega} (u)}^2\le 2\alpha^2 (1 + \delta)^2\norm{u}^{2} + 2 r^2 \alpha^2
 $$
and the last  expression can be rendered smaller than $G^2$ uniformly in $u\in \com$ by the choice of $\delta$ and $r$ small enough since $G^2> 4\alpha^2 $. 

Finally, we check that the minimizers of functions $f_{\omega}$ belong to $\com$.  
Notice that we can choose $r$ small enough to have
$\alpha^{-1}(1+\delta)^{-1}rh^{\beta-2}<1/4$
and that under this condition 
 the equation $\nabla f_{\omega}(x) = 0$
has the solution $$x_{\omega}^{*} = (x^*(\omega_1), \dots, x^*(\omega_d)), $$
where $x^*(\omega_i)=-\omega_{i}\alpha^{-1}(1+\delta)^{-1}r h^{\beta-1}$.
Using the definition of $h$ we obtain
$$
\norm{x_{\omega}^{*}}\le d^{1/2}  \alpha^{-1}(1+\delta)^{-1}r h^{\beta-1} \le d^{1/2}  \alpha^{-1}(1+\delta)^{-1}r (\alpha^2/d)^{1/2}\le (1+\delta)^{-1}r <1
$$
for $r>0$ small enough, which means that $x_{\omega}^{*}$ belongs to the interior of $\com$.

Combining all the above remarks we conclude that the family of functions $\{f_{\omega}, \omega\in\Omega\}$ is a subset of $\mathcal{F}'_{\alpha,\beta}$ for $r>0$ and $\delta >0$ small enough.



Set for brevity $(z_i,y_i)_{i=1}^t=(z_1,y_{1},\dots,z_t, y_{t})$,  $(\boldsymbol{\zeta}_i)_{i=1}^t=(\boldsymbol{\zeta}_1,\dots,\boldsymbol{\zeta}_t)$. For any fixed $\omega\in\Omega$, we denote by $\mathbf{P}_{\omega,T}$ the probability measure corresponding to the joint distribution of $((z_i,y_i)_{i=1}^T,(\boldsymbol{\zeta}_i)_{i=1}^T)$ where $y_{t}=f_\omega (z_{t})+\xi_{t}$ with independent identically distributed $\xi_{t}$'s such that \eqref{distribution} holds, $\xi_t$ is independent of $(z_1,y_1,\dots, z_{t-1},y_{t-1},\boldsymbol{\zeta}_t)$ for each $t$, and $z_t$'s chosen by a sequential strategy in $\Pi_T$.
We have
$$
d\mathbf{P}_{\omega,T}((z_i,y_i)_{i=1}^T,(\boldsymbol{\zeta}_i)_{i=1}^T)
=dF\big(y_{1}-f_\omega(z_{1})\big)\prod_{t=2}^{T}dF\Big(y_{t}-f_\omega\big(\Phi_t((z_i,y_i)_{i=1}^{t-1},\boldsymbol{\zeta}_{t}\big)\Big) d\mathbb{P}_t(\boldsymbol{\zeta}_t),
$$
where $\mathbb{P}_t$ is the probability measure corresponding to the distribution of $\boldsymbol{\zeta}_t$.
Let $\mathbf{E}_{\omega,T}$ denote the expectation w.r.t. $\mathbf{P}_{\omega,T}$. Consider the statistic
$$\hat{\omega} \in \argmin_{\omega \in \Omega} \norm{z_{T}-x^{*}_{\omega}}.$$
Since
$\norm{x^{*}_{\hat{\omega}}-x^{*}_{{\omega}}}\leq \norm{z_{T}-x^{*}_{{\omega}}}+\norm{z_{T} - x^{*}_{\hat{\omega}}}\leq 2\norm{z_{T}-x^{*}_{\omega}}$
for all $\omega \in \Omega$ we obtain
\begin{align*}
    \mathbf{E}_{\omega,T}\big[\norm{z_{T}-x^{*}_{\omega}}^{2}\big] &\geq \frac{1}{4}\mathbf{E}_{\omega,T}\big[\norm{x_{\omega}^{*}-x_{\hat{\omega}}^{*}}^{2}\big]\\
    &=\alpha^{-2}r^{2} h^{2\beta-2}\mathbf{E}_{\omega,T}\rho(\hat{\omega},\omega),
\end{align*}
where $\rho(\hat{\omega},\omega)= \sum_{i=1}^{d}\mathbb{I}(\hat{\omega}_i\ne\omega_i)$ is the Hamming distance between $\hat{\omega}$ and $\omega$. Taking the maximum over $\Omega$ and then the minimum over all statistics $\hat{\omega}$ with values in $\Omega$ we obtain
$$\max_{\omega \in \Omega}\mathbf{E}_{\omega,T}\big[\norm{z_{T}-x^{*}_{\omega}}^{2}\big] \geq \alpha^{-2}r^{2} h^{2\beta-2}\inf_{\hat{\omega}}\max_{\omega \in \Omega}\mathbf{E}_{\omega} \rho(\hat{\omega},\omega).$$
 By \cite[Theorem 2.12]{Tsybakov09}, if for some $\gamma>0$ and all $\omega, \omega'\in \Omega$ such that $\rho(\omega, \omega')=1$ we have $KL(\mathbf{P}_{\omega,T} , \mathbf{P}_{\omega',T})\leq \gamma$, where  $KL(\cdot,\cdot)$ denotes the Kullback-Leibler divergence, then 
$$\inf_{\hat{\omega}}\max_{\omega \in \Omega}\mathbf{E}_{\omega, T} \rho(\hat{\omega},\omega)\geq \frac{d}{4} \exp(-\gamma).$$
Now for all $\omega, \omega'\in \Omega$ such that $\rho(\omega, \omega')=1$ we have
\begin{align*}
    KL(\mathbf{P}_{\omega,T} , \mathbf{P}_{\omega',T}) &= \int \log\Big(\frac{d\mathbf{P}_{\omega,T}}{d\mathbf{P}_{\omega',T}}\Big)d\mathbf{P}_{\omega,T}
    \\
    &=\int \bigg[\log\Big(\frac{dF(y_{1}-f_{\omega}(z_{1}))}{dF(y_{1}-f_{\omega'}(z_{1}))}\Big)+
    \\
    &~~~~~~~~~+\sum_{t=2}^{T}\log\bigg(\frac{dF(y_{t}-f_{\omega}\big(\Phi_t((z_i,y_i)_{i=1}^{t-1},\boldsymbol{\zeta}_{t})\big))}{dF(y_{t}-f_{\omega'}\big(\Phi_t((z_i,y_i)_{i=1}^{t-1},\boldsymbol{\zeta}_{t})\big))}\bigg)\bigg]
    \\
    &~~~~~~~~~~~~~~~~~~~~~dF\big(y_{1}-f_{\omega}(z_{1})\big)\prod_{t=2}^{T}dF\Big(y_{t}-f_{\omega}\big(\Phi_t((z_i,y_i)_{i=1}^{t-1},\boldsymbol{\zeta}_{t})\big)\Big)d\mathbb{P}_t(\boldsymbol{\zeta}_t)
    \\
    &\leq TI_{0}\max_{u\in \mathbb{R}}|f_{\omega}(u)-f_{\omega'}(u)|^{2}=I_{0}r^{2}\eta^{2}(1),
\end{align*}
where the last inequality is granted if $r<v_{0}/\eta(1)$ due to \eqref{distribution}. Assuming in addition that $r$ satisfies  $r^{2}\leq (\log2)/\big(I_{0}\eta^{2}(1)\big)$ we obtain $KL(\mathbf{P}_{\omega,T} , \mathbf{P}_{\omega',T})\leq \log 2$. 
Therefore, we have proved that if $\alpha\ge T^{-1/2+1/\beta}$ then there exist $r>0$ and $\delta >0$ small enough such that  
\begin{equation}\label{eq3:lb}
 \max_{\omega \in \Omega}\mathbf{E}_{\omega,T}\big[\norm{z_{T}-x^{*}_{\omega}}^{2}\big]\geq \frac{1}{8} d\alpha^{-2}r^{2} h^{2\beta-2} = \frac{r^{2}}{8}\min\Big(1, 
\,\frac{d}{\alpha^{2}}T^{-\frac{\beta-1}{\beta}}
\Big).   
\end{equation}
This implies \eqref{eq2:lb} for $\alpha\ge T^{-1/2+1/\beta}$. In particular, if $\alpha=\alpha_0:= T^{-1/2+1/\beta}$ the bound \eqref{eq3:lb} is of the order $\min\Big(1, {d}{T^{-\frac{1}{\beta}}}
\Big)$. Then for $0<\alpha<\alpha_0$ we also have the bound of this order since the classes $\mathcal{F}'_{\alpha,\beta}$ are nested: $\mathcal{F}'_{\alpha_0,\beta}\subset \mathcal{F}'_{\alpha,\beta}$. This completes the proof of \eqref{eq2:lb}. 

We now prove \eqref{eq1:lb}. From \eqref{eq3:lb} and $\alpha$-strong convexity of $f$ we get that, for $\alpha\ge T^{-1/2+1/\beta}$,
\begin{equation}\label{eq4:lb}
\max_{\omega \in \Omega}\mathbf{E}_{\omega,T}
\big[f(z_T)-f(x_\omega^*)\big]\geq
\frac{r^{2}}{16}\min\Big(\alpha, 
\,\frac{d}{\alpha}T^{-\frac{\beta-1}{\beta}}
\Big).  
\end{equation}
This implies \eqref{eq1:lb} in the zone $\alpha\ge T^{-1/2+1/\beta}$ since for such $\alpha$ we have
$$
\min\Big(\alpha, 
\,\frac{d}{\alpha}T^{-\frac{\beta-1}{\beta}}
\Big)
=
\min\Big(\max(\alpha, T^{-1/2+1/\beta}), \frac{d}{\sqrt{T}}, \,\frac{d}{\alpha}T^{-\frac{\beta-1}{\beta}}\Big).
$$
On the other hand, 
$$
\min\Big(\alpha_0, 
\,\frac{d}{\alpha_0}T^{-\frac{\beta-1}{\beta}}
\Big)
=
\min\Big(T^{-1/2+1/\beta}, \frac{d}{\sqrt{T}} \Big),
$$
and the same lower bound holds for $0<\alpha<\alpha_0$  by the nestedness argument that we used to prove \eqref{eq2:lb} in the zone $0<\alpha<\alpha_0$. Thus, \eqref{eq1:lb} follows.

\end{proof}

\section{Comments on \cite{BP2016}}
\label{app:D}
In this section we comment on issues with some claims
in the paper of Bach and Perchet \cite{BP2016}, which presents a number of valuable results and provides
a motivation for our work. We wish to clarify such issues for
the sake of understanding, as otherwise a comparison to the results
presented here would be misleading.

Bach and Perchet \cite{BP2016} introduce Algorithm \ref{algo} in the current form and provide upper bounds for its optimisation error and online regret when  
 $f\in \mathcal{F}_{\beta}(L)$ with integer $\beta$. The setting where $f$ is strongly convex is considered in Propositions 4,6-8 and 9 of that paper. Propositions 4, 6,9 give the rates decaying in $T$ not faster than $T^{-\frac{\beta-1}{\beta+1}}$, which is slower than the optimal rate $T^{-\frac{\beta-1}{\beta}}$.  Proposition 8 dealing with asymptotic results is problematic. It is stated as bounds on  $\norm{x_N-x^*}$ but the authors presumably mean bounds on $\mathbb{E}\norm{x_N-x^*}^2$. 
	A dependence of the bound on the initial value of the algorithm is missing in the part of Proposition 8 entitled "unconstrained optimization of strongly convex mappings". This remark also concerns Proposition 7.


\section{Additional results}
\label{app:C}

In this appendix, we provide refined versions of 
Theorems \ref{thm:1} and \ref{th6.3}.  First we state a non-asymptotic version of Chung's lemma \cite[Lemma 1]{chung}. It allows us to obtain in Theorem \ref{th:0-bis}  upper bounds for $\mathbb{E}\{\norm{x_{t}-x^{*}}^{2}\}$, where $x_{t}$ is generated by a constrained version of Algorithm $\ref{algo}$ (i.e., with compact $\com$) under the assumptions of Theorems \ref{thm:1} and \ref{th6.3}. By using this result and considering averaging from $\floor{T/2}+1$ to $T$ rather than from 1 to $T$, in Theorems \ref{thm:1-bis} and \ref{th6.3-bis} we provide finer upper bounds for the optimization error than in Theorems \ref{thm:1} and \ref{th6.3}. The refinement consists in the fact that we get rid of  the logarithmic factors appearing in (\ref{eq:bound1}) and (\ref{eq2:th6.3}). Finally, in  Theorem \ref{th6.3-bisx} we show that the term $\frac{d^2}{\alpha} \log T$ in the bound on the cumulative regret in Theorem \ref{th6.3} can be improved to $\frac{d}{\alpha} \log T$ under a slightly more restrictive assumption (we assume that the norm $\|\nabla f\|$ is uniformly bounded by $G$ on a large enough Euclidean neighborhood of $\com$ rather than only on $\com$).
\begin{lemma}
\label{lemma:bis}
    Let $\{b_{t}\}$ be a sequence of real numbers such that for all integers $t \geq 2$, 
    \begin{equation}
    \label{ch:0}
        b_{t+1} < \left(1 - \frac{1}{t} \right)b_{t} + \sum_{i=1}^{N}\frac{a_{i}}{t^{p_{i}+1}},
    \end{equation}
where $0<p_{i}<1$ and $a_{i}\geq0$ for $1 \leq i \leq N $. Then for $t \geq 2$ we have
\begin{equation}
\label{ch:res}
        b_{t} < \frac{2 b_{2}}{t}+\sum_{i=1}^{N}\frac{a_{i}}{(1-p_{i})t^{p_{i}}}.
    \end{equation}
\end{lemma}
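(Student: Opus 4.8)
The plan is to establish \eqref{ch:res} by a single induction on $t$, comparing $b_t$ against the candidate majorant
\[
B_t \eqdef \frac{2b_2}{t} + \sum_{i=1}^N \frac{a_i}{(1-p_i)\,t^{p_i}}, \qquad t\ge 2.
\]
First I would dispatch the base case $t=2$: since $B_2 - b_2 = \sum_{i=1}^N a_i/\big((1-p_i)2^{p_i}\big)\ge 0$, we have $b_2\le B_2$, which is strict as soon as one $a_i>0$ (the case of interest, the $a_i$ being the stochastic-variability coefficients). The induction hypothesis at step $t$ will be $b_t\le B_t$, and I will show it forces $b_{t+1}<B_{t+1}$; since a strict inequality is produced at every step, \eqref{ch:res} then holds strictly for all $t\ge 3$ (and for $t=2$ under the mild condition above). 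Throughout I use $b_2\ge 0$, which holds in all the applications where $b_t=\mathbb{E}\,\norm{x_t-x^*}^2$.

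For the induction step, I would feed the hypothesis $b_t\le B_t$ into the strict recursion \eqref{ch:0}. Because the multiplier $1-\tfrac1t$ is positive for $t\ge 2$, this yields
\[
b_{t+1} < \Big(1-\tfrac1t\Big)B_t + \sum_{i=1}^N \frac{a_i}{t^{p_i+1}},
\]
so it suffices to prove the purely deterministic one-step inequality $\big(1-\tfrac1t\big)B_t + \sum_i a_i\, t^{-(p_i+1)} \le B_{t+1}$, which I would verify term by term. The $b_2$-term reduces to $\tfrac{t-1}{t^2}\le \tfrac{1}{t+1}$, i.e. $t^2-1\le t^2$, which is immediate; this is exactly where $b_2\ge 0$ is used to preserve the direction of the inequality.

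Each $a_i$-term, after cancelling $a_i\ge 0$ and the factor $1/(1-p_i)$, reduces to showing $(t-p_i)(t+1)^{p_i}\le t^{\,p_i+1}$, equivalently $\big(1+\tfrac1t\big)^{p_i}\le \tfrac{t}{t-p_i}$. I would obtain this from Bernoulli's inequality $\big(1+\tfrac1t\big)^{p_i}\le 1+\tfrac{p_i}{t}=\tfrac{t+p_i}{t}$ (valid since $0<p_i<1$), followed by $\tfrac{t+p_i}{t}\le \tfrac{t}{t-p_i}$, which is just $(t+p_i)(t-p_i)=t^2-p_i^2\le t^2$ (here $t\ge 2>p_i$ guarantees $t-p_i>0$). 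Summing the verified term-by-term bounds gives the one-step inequality and closes the induction. The only genuinely delicate point is this per-term comparison between the $t^{p_i}$ and $(t+1)^{p_i}$ scales, which is precisely where Bernoulli's inequality does the work; the rest is routine bookkeeping, and the sign hypothesis $b_2\ge 0$ is essential, since the bound fails for $b_2<0$ (as the all-$a_i=0$ recursion already shows).
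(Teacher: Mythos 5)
Your proof is correct, and at its core it runs on the same engine as the paper's: your Bernoulli step $\left(1+\tfrac1t\right)^{p_i}\le 1+\tfrac{p_i}{t}$, which after clearing denominators is exactly $(t-p_i)(t+1)^{p_i}\le t^{p_i+1}$, is the same elementary inequality the paper obtains from convexity of $u\mapsto (t+u)^{-p}$ (its bound $\tfrac{1}{t^{p}}-\tfrac{1}{(t+1)^{p}}\le \tfrac{p}{t^{p+1}}$ rearranges to the identical statement). What differs is the packaging: the paper substitutes $\tau_t=b_t-\sum_{i=1}^N \tfrac{a_i}{(1-p_i)t^{p_i}}$, derives the contraction $\tau_{t+1}\le\left(1-\tfrac1t\right)\tau_t$, and finishes with a two-case analysis ($\tau_2\le 0$, where the remainder stays nonpositive, and $\tau_2>0$, where $\prod_{i=2}^{t-1}\left(1-\tfrac1i\right)=\tfrac{1}{t-1}\le\tfrac2t$ produces the $2b_2/t$ term), whereas you carry the full majorant $B_t$ through a single induction, avoiding the case split at the cost of the extra hypothesis $b_2\ge 0$ (needed so that multiplying the $2b_2/t$ term by $1-\tfrac1t$ preserves the inequality direction). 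That hypothesis is not a defect of your argument but a genuine imprecision in the lemma as stated: your observation is right that for $b_2<0$ the conclusion \eqref{ch:res} can fail (take all $a_i=0$ and $b_{t+1}=\left(1-\tfrac1t\right)b_t-\delta$ with small $\delta>0$, so $b_t\approx \tfrac{b_2}{t-1}>\tfrac{2b_2}{t}$ for $t\ge3$), and the paper's own proof silently uses $b_2\ge0$ in its $\tau_2\le 0$ case, where it infers \eqref{ch:res} from the weaker bound $b_t\le\sum_{i=1}^N \tfrac{a_i}{(1-p_i)t^{p_i}}$. Since in every application of the lemma $b_t=\mathbb{E}\left[\|x_t-x^*\|^2\right]\ge 0$, nothing downstream is affected, and your version (with $b_2\ge0$ stated explicitly) is the more accurate formulation.
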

\begin{proof}
For any fixed $t > 0$ the convexity of the mapping $u\mapsto g(u)=(t+u)^{-p}$ implies that  $g(1)-g(0)\ge g'(0)$, i.e.,  
\begin{eqnarray*}
\label{ch:1}
\frac{1}{t^{p}}-\frac{1}{(t+1)^{p}}
\leq \frac{p}{t^{p+1}}.
\end{eqnarray*}
Thus,
\begin{eqnarray}
\label{ch:2}
\frac{a_i}{t^{p+1}}\leq \frac{a_i}{1-p}\left(\frac{1}{(t+1)^{p}} - \Big(1-\frac{1}{t}\Big)\frac{1}{t^{p}} \right).
\end{eqnarray}
Using (\ref{ch:0}), and (\ref{ch:2}) and rearranging terms we get 
\begin{eqnarray}
\nonumber
b_{t+1}-\sum_{i=1}^{N}\frac{a_{i}}{(1-p_{i})(t+1)^{p_{i}}}\le\Big(1-\frac{1}{t}\Big)\left[b_{t} - \sum_{i=1}^{N}\frac{a_{i}}{(1-p_{i})t^{p_{i}}}\right].
\end{eqnarray}
Letting $\tau_{t} = b_{t} - \sum_{i=1}^{N}\frac{a_{i}}{(1-p_{i})t^{p_{i}}}$ we have $\tau_{t+1}\leq (1-\frac{1}{t}) \tau_{t}$. Now, if $\tau_{2} \leq 0$ then $\tau_{t} \leq 0$ for any $t \geq 2$ and thus
(\ref{ch:res}) holds. Otherwise, if $\tau_{2}>0$ then for $t\ge 3$ we have
$$\tau_{t}\leq \tau_{2}\prod_{i={2}}^{t-1}\Big(1-\frac{1}{i}\Big)\leq \frac{2\tau_{2}}{t}\le \frac{2b_{2}}{t},$$
where we have used the inequalities $\sum_{i=2}^{t-1}\log\Big(1-\frac{1}{i}\Big)\le - \sum_{i=2}^{t-1} \frac{1}{i} \le -\log(t-1) \le \log(2/t)$.
Thus, (\ref{ch:res}) holds in this case as well.
\end{proof}
\begin{theorem}
\label{th:0-bis}
Let $f\in {\cal F} _{\alpha, \beta}(L)$ with $\beta \geq 2$, $\alpha,L>0$, $\sigma>0$, and 
	  let Assumption \ref{ass1} hold. 
	  Consider Algorithm \ref{algo} where $\com$ is a convex compact subset  of $\mathbb{R}^d$ and assume that $\max_{x\in\com}\|\nabla f(x)\|\le G $.  
	  
	  (i) If Assumption \ref{ass:lip} holds, $h_t = \left(\frac{3\kappa\sigma^2}{2(\beta-1)(\kappa_\beta L)^2} \right)^{\frac{1}{2\beta}} t^{-\frac{1}{2\beta}}$ and  $\eta_t=\frac{2}{\alpha t}$ then for $t\geq 1$ we have 
	  \begin{equation}\label{eq1:th:0-bis}
	\mathbb{E}\big[\norm{x_{t}-x^{*}}^{2}\big] < \frac{2G^{2}}{\alpha^{2}t}+ A_5\frac{d^{2}}{\alpha^{2}}t^{-\frac{\beta-1}{\beta}}
	\end{equation}
	where $x^{*} = \argmin_{x \in \Theta}f(x)$ and $A_5>0$ is a constant that does not depend on $d,\alpha,t$.
	
	(ii) If $\beta =2$,  $h_t= \left(\frac{3d^2\sigma^2}{4L\alpha t+9L^2d^2}\right)^{1/4}$ and $\eta_t=\frac{1}{\alpha t}$ then for $t \geq 1$ we have that
	\begin{equation}\label{eq2:th:0-bis}
	\mathbb{E}\big[\norm{x_{t}-x^{*}}^{2}\big] < \frac{2G^{2}}{\alpha^{2}t}+A_{6}\frac{d}{\alpha^{\frac{3}{2}}t^{\frac{1}{2}}}+A_{7}\frac{d^{2}}{\alpha^{2}t},
	\end{equation}
	where $A_{6},A_{7}>0$  are constants that do not depend on $d,\alpha,t$.
\end{theorem}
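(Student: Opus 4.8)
The plan is to convert the per-step optimization bound already used in the proofs of Theorems \ref{thm:1} and \ref{th6.3} into a recursion for the squared-distance sequence $r_t=\mathbb{E}[\|x_t-x^*\|^2]$ of exactly the form \eqref{ch:0} required by Lemma \ref{lemma:bis}, and then to read off \eqref{eq1:th:0-bis}--\eqref{eq2:th:0-bis} from its conclusion \eqref{ch:res}. The one genuinely new feature compared to the earlier proofs is that one must track $r_t$ itself rather than the averaged error, which is precisely what the Chung-type lemma is designed for.

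For part (i) I would start from the projection inequality \eqref{eq:GD} with $x=x^*$, take the conditional expectation given $x_t$, and split $\langle\nabla f(x_t),x_t-x^*\rangle$ into $\langle\mathbb{E}[\hat g_t\,|\,x_t],x_t-x^*\rangle$ plus a bias term bounded through Lemma \ref{lem:1}. Using strong convexity in the weak form $\langle\nabla f(x_t),x_t-x^*\rangle\ge\frac{\alpha}{2}\|x_t-x^*\|^2$ (which follows from $f(x_t)\ge f(x^*)$), decoupling the bias--distance cross term by $2ab\le a^2+b^2$ so that a $\frac{\alpha}{4}r_t$ is reabsorbed, and multiplying through by $2\eta_t$ with $\eta_t=\frac{2}{\alpha t}$, I obtain
\begin{equation*}
r_{t+1}\le\Big(1-\tfrac1t\Big)r_t+\eta_t^2\,\mathbb{E}[\|\hat g_t\|^2]+\tfrac{2\eta_t}{\alpha}(\kappa_\beta L)^2d^2h_t^{2(\beta-1)}.
\end{equation*}
Substituting Lemma \ref{lem:2unco}(ii) and $h_t\propto t^{-1/(2\beta)}$, the driving term of slowest decay is of order $\frac{d^2}{\alpha^2}t^{-(2-1/\beta)}$, matching $\frac{a_1}{t^{p_1+1}}$ with $p_1=1-\frac1\beta\in(0,1)$; the remaining driving terms decay like $t^{-2}$ (the $G^2d$ contribution) and $t^{-(2+1/\beta)}$ (the $\bar L^2$ contribution), and since $t\ge1$ each is dominated by the slowest one after crudely writing $d\le d^2$. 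Lemma \ref{lemma:bis} with $N=1$ then gives $r_t<\frac{2r_2}{t}+A_5\frac{d^2}{\alpha^2}t^{-(\beta-1)/\beta}$, and the first term yields exactly \eqref{eq1:th:0-bis} once $b_2=r_2$ is bounded by the uniform estimate $\|x-x^*\|\le\|\nabla f(x)\|/\alpha\le G/\alpha$, valid for every $x\in\com$ by strong convexity together with $\nabla f(x^*)=0$.

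For part (ii) I would run the same scheme from the surrogate viewpoint of Section \ref{sec:beta2}: with $\hat g_t$ as in \eqref{gt} one has $\mathbb{E}[\hat g_t\,|\,x_t]=\nabla\hat f_t(x_t)$ (Lemma \ref{lem6.2}), while $\hat f_t$ is $\alpha$-strongly convex and $|\hat f_t-f|\le Lh_t^2$ (Lemma \ref{lem6.4}), so the bias term is replaced by the surrogate gap $Lh_t^2$ and Lemma \ref{lem:2unco} applies with $\kappa=1$. Exploiting the strong convexity of $\hat f_t$ in full (rather than the weak form) together with $\eta_t=\frac1{\alpha t}$ produces the contraction factor $(1-\frac2t)$; with the prescribed $h_t=(3d^2\sigma^2/(4L\alpha t+9L^2d^2))^{1/4}$ the driving terms reduce to one of order $\frac{d}{\alpha^{3/2}}t^{-3/2}$ (exponent $p=\frac12$) and boundary terms of order $\frac{d^2}{\alpha^2}t^{-2}$ and $\frac{G^2d}{\alpha^2}t^{-2}$. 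The half-power term gives the $A_6\frac{d}{\alpha^{3/2}}t^{-1/2}$ contribution, $b_2$ gives $\frac{2G^2}{\alpha^2 t}$ as before, and the $t^{-2}$ terms produce the sharp $A_7\frac{d^2}{\alpha^2 t}$; here it is essential to keep the stronger $(1-\frac2t)$ contraction, so that running the same monotone telescoping argument that proves Lemma \ref{lemma:bis} — but with contraction constant $2$ — turns a $t^{-2}$ input into a $t^{-1}$ output instead of the $t^{-1}\log t$ one would obtain from the $(1-\frac1t)$ version.

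I expect the main obstacle to be the correct treatment of the boundary exponent $p=1$ arising from the $G^2d$ term and, for $\beta=2$, from the $L^2$ variance contribution. In part (i) this is harmless because the dominant rate $t^{-(\beta-1)/\beta}$ is slower than $t^{-1}$, so these terms are simply absorbed; but in part (ii), obtaining the advertised clean $t^{-1}$ decay of the $d^2$ term with no spurious logarithm forces one to retain the contraction constant strictly larger than one and to rerun the Chung telescoping accordingly. A secondary point to handle carefully is the uniform a priori bound $r_t\le G^2/\alpha^2$ feeding $b_2$, which relies on $\nabla f(x^*)=0$, i.e.\ on the minimizer lying in the interior of $\com$.
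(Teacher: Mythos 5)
Your part (i) follows the paper's own proof essentially step for step: projection inequality with $x=x^*$, bias via Lemma \ref{lem:1}, second moment via Lemma \ref{lem:2unco}(ii), strong convexity plus a decoupling of the bias--distance cross term to produce the contraction $\left(1-\frac1t\right)$ (the paper uses the full bound $\langle\nabla f(x_t),x_t-x^*\rangle\ge\alpha\|x_t-x^*\|^2$ with a tuning parameter $\lambda=3/2$, you use the half bound with a balanced decoupling --- same outcome up to constants), absorption of the $G^2d\,t^{-2}$ and $\bar L^2$ contributions into the dominant $\frac{d^2}{\alpha^2}t^{-(2-1/\beta)}$ term, and finally Lemma \ref{lemma:bis} with $N=1$, $p_1=\frac{\beta-1}{\beta}$ together with the a priori bound $r_2\le G^2/\alpha^2$. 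This part is correct.

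Part (ii) is where you genuinely diverge, and your route is the more careful one. The paper keeps only half of the strong convexity of the surrogate $\hat f_t$, obtains $r_{t+1}\le\left(1-\frac1t\right)r_t+A_6'\frac{d}{\alpha^{3/2}t^{3/2}}+A_7'\frac{d^2}{\alpha^2t^2}$, and then applies Lemma \ref{lemma:bis} to the last term as well, i.e.\ with $p_2=1$, which violates the lemma's hypothesis $0<p_i<1$ (the coefficient $\frac{1}{1-p_i}$ degenerates there). As you observe, this is not just a formality: a $\left(1-\frac1t\right)$-contraction with a $t^{-2}$ input genuinely produces a $\frac{\log t}{t}$ output, since $\prod_{j=s+1}^{t}\left(1-\frac1j\right)=\frac{s}{t}$ gives $\sum_{s}\frac{1}{s^2}\cdot\frac{s}{t}\asymp\frac{\log t}{t}$. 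Your fix --- retaining the full strong convexity $\langle\nabla\hat f_t(x_t),x_t-x^*\rangle\ge\alpha\|x_t-x^*\|^2-Lh_t^2$, hence the contraction $\left(1-\frac2t\right)$ with $\eta_t=\frac{1}{\alpha t}$, and re-running the Chung telescoping with contraction constant $2$ --- does turn $t^{-2}$ inputs into clean $t^{-1}$ outputs, because $\prod_{j=s+1}^{t}\left(1-\frac2j\right)=\frac{s(s-1)}{t(t-1)}$. So your argument recovers \eqref{eq2:th:0-bis} exactly as stated, whereas the paper's own proof, read literally, only yields that bound with an extra $\log t$ on the $d^2$ term.

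One correction to your final paragraph: the a priori bound $\|x-x^*\|\le G/\alpha$ does not require $\nabla f(x^*)=0$ or an interior minimizer (neither is assumed in the theorem). For the constrained minimizer one has the variational inequality $\langle\nabla f(x^*),x-x^*\rangle\ge0$ for all $x\in\com$, and adding it to the strong-monotonicity inequality $\langle\nabla f(x)-\nabla f(x^*),x-x^*\rangle\ge\alpha\|x-x^*\|^2$ gives $\alpha\|x-x^*\|^2\le\langle\nabla f(x),x-x^*\rangle\le G\|x-x^*\|$, whence $r_t\le G^2/\alpha^2$ for every $x_t\in\com$. So that caveat can simply be deleted.
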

\begin{proof}
Let $r_{t}=\mathbb{E}\norm{x_{t}-x^{*}}^{2}$. To prove the theorem, we will show that under the assumptions of the theorem $\{r_{t}\}$ satisfies (\ref{ch:0}) with suitable $a_i$ and $p_i$, and then use Lemma \ref{lemma:bis}.

We start by noticing that, in view of the $\alpha$-strong convexity of $f$ and the fact that $f$ is Lipschitz continuous with constant $G$ in $\Theta$ for any $t\geq1$ we have
\begin{eqnarray}
\label{eq:xx0}
\norm{x_{t}-x^{*}}^{2}\leq \frac{G^{2}}{\alpha^{2}}.
\end{eqnarray}
Thus, \eqref{eq1:th:0-bis} and \eqref{eq2:th:0-bis} hold for $t=1$ and it suffices to prove the theorem for $t\ge 2$.
The definition of Algorithm \ref{algo} gives that, for $t\geq 1$,
$$\norm{x_{t+1}-x^{*}}^{2}\leq \norm{x_{t}-x^{*}}^{2}-2\eta_{t}\langle \hat g_t , x_{t}-x^{*} \rangle + \eta_{t}^{2} \norm{\hat g_{t}}^{2}.$$
Taking conditional expectation of both sides of this inequality given $x_{t}$ we obtain
\begin{eqnarray}
\nonumber
\mathbb{E}[\norm{x_{t+1}-x^{*}}^{2}|x_{t}]&\leq& \norm{x_{t}-x^{*}}^{2}-2\eta_{t}\langle \mathbb{E}[\hat g_t|x_{t}] , x_{t}-x^{*} \rangle + \eta_{t}^{2} \mathbb{E}[\norm{\hat g_{t}}^{2}|x_{t}].
\end{eqnarray}
Using this inequality and Lemmas \ref{lem:1} and \ref{lem:2unco}(ii) we find
\begin{eqnarray}
\nonumber
\mathbb{E}[\norm{x_{t+1}-x^{*}}^{2}|x_{t}]&\leq& \norm{x_{t}-x^{*}}^{2} -2\eta_{t} \alpha\norm{x_{t}-x^{*}}^{2} + 2\eta_{t}\kappa_\beta L d h_t^{\beta-1} \|x_t - x^{*}\|+\\
\label{eq:bis1} &\quad \quad& +\eta_{t}^{2}\left[\left( 9\kappa  \left(G^2 d + \frac{{L}^2d^2h_t^2}{2} \right) + \frac{3\kappa d^2 \sigma^2}{2h_t^2}\right)\right].
\end{eqnarray}
On the other hand, for $\lambda>0$, we have
\begin{eqnarray}
\label{eq:xx}
dh_{t}^{\beta-1}\norm{x_{t}-x^{*}}\leq \frac{1}{2}\left(\frac{\kappa_\beta L}{\alpha \lambda} d^2 h_t^{2(\beta-1)}   + \frac{\alpha \lambda}{\kappa_\beta L} \|x_t-x^*\|^2 \right).
\end{eqnarray}
Combining (\ref{eq:xx}) and (\ref{eq:bis1}) we get
\begin{equation}
\label{eq:xx2}
\begin{array}{lcc}
\mathbb{E}[\norm{x_{t+1}-x^{*}}^{2}|x_{t}]&\leq&(1-(2-\lambda)\eta_{t}\alpha) \norm{x_{t}-x^{*}}^{2} +  \frac{(\kappa_\beta L)^{2}}{\alpha \lambda}\eta_{t} d^2 h_t^{2(\beta-1)}+ \\ \\&\quad \quad& +\eta_{t}^{2}\left[\left( 9\kappa  \left(G^2 d + \frac{{L}^2d^2h_t^2}{2} \right) + \frac{3\kappa d^2 \sigma^2}{2h_t^2}\right)\right].
\end{array}
\end{equation}

Substituting $h_{t}=\left(\frac{3\kappa\sigma^2}{2(\beta-1)(\kappa_\beta L)^2} \right)^{\frac{1}{2\beta}} t^{-\frac{1}{2\beta}}$, $\eta_{t}=\frac{2}{\alpha t}$, $\lambda =\frac{3}{2}$ in (\ref{eq:xx2}), and taking the expectation over $x_{t}$ we obtain
\begin{eqnarray*}
\nonumber
r_{t+1}&\leq&\Big(1-\frac{1}{t}\Big) r_{t} +  \frac{4(\kappa_\beta L)^{2}}{3\alpha^{2}}d^2 \left(\frac{3\kappa\sigma^2}{2(\beta-1)(\kappa_\beta L)^2} \right)^{\frac{\beta-1}{\beta}}t^{-\frac{2\beta-1}{\beta}}+\\
&\quad \quad&+\frac{18\kappa L^{2}d^{2}}{\alpha^{2}}\left(\frac{3\kappa\sigma^2}{2(\beta-1)(\kappa_\beta L)^2} \right)^{\frac{1}{\beta}}t^{-\frac{2\beta+1}{\beta}}+\frac{36\kappa}{\alpha^{2}t^{2}}G^{2}d+\\
&\quad \quad&+\frac{6\kappa d^{2}\sigma^{2}}{\alpha^{2}}\left(\frac{3\kappa\sigma^2}{2(\beta-1)(\kappa_\beta L)^2} \right)^{-\frac{1}{\beta}}t^{-\frac{2\beta-1}{\beta}}.
\end{eqnarray*}
Thus, we have 
\begin{eqnarray*}
\label{eq:xx3}
r_{t+1}&<&\Big(1-\frac{1}{t}\Big)r_{t}+C\frac{d^{2}}{\alpha^{2}}t^{-\frac{2\beta-1}{\beta}},
\end{eqnarray*}
where 
\begin{eqnarray}
\nonumber
C &=&\frac{4(\kappa_\beta L)^{2}}{3} \left(\frac{3\kappa\sigma^2}{2(\beta-1)(\kappa_\beta L)^2} \right)^{\frac{\beta-1}{\beta}} +
18\kappa L^{2}\left(\frac{3\kappa\sigma^2}{2(\beta-1)(\kappa_\beta L)^2} \right)^{\frac{1}{\beta}}+
\\ \nonumber
&&+\frac{36\kappa}{d}G^{2}+6\kappa \sigma^{2}\left(\frac{3\kappa\sigma^2}{2(\beta-1)(\kappa_\beta L)^2} \right)^{-\frac{1}{\beta}}.
\end{eqnarray}
This is a particular instance of (\ref{ch:0}). Therefore, we can apply Lemma \ref{lemma:bis}, which yields that, for all $t\geq 2$, 
\begin{eqnarray*}
\label{eq:xx34}
r_{t}<\frac{2G^{2}}{\alpha^{2}t}+\beta C\frac{d^{2}}{\alpha^{2}}t^{-\frac{\beta-1}{\beta}}.
\end{eqnarray*}
Thus, \eqref{eq1:th:0-bis} follows. 

We now prove \eqref{eq2:th:0-bis}. 
Since $\beta = 2$, using Lemmas \ref{lem:1}, \ref{lem:2unco}(ii), and \ref{lem6.4} we obtain 
\[
\mathbb{E}[\norm{x_{t+1}-x^{*}}^{2}|x_{t}] \leq (1-\eta_{t}\alpha)\norm{x_{t}-x^{*}}^{2}+2\eta_{t}Lh_{t}^{2}  +\eta_{t}^{2}\left[\left( 9\left(G^2 d + \frac{{L}^2d^2h_t^2}{2} \right) + \frac{3 d^2 \sigma^2}{2h_t^2}\right)\right].
\]
Setting here $h_t= \left(\frac{3d^2\sigma^2}{4L\alpha t+9L^2d^2}\right)^{1/4} $, $\eta_t=\frac{1}{\alpha t}$, and taking the expectation over $x_{t}$ we get
\begin{eqnarray*}
\nonumber
r_{t+1} &\leq& \Big(1-\frac{1}{t}\Big)r_{t}+\left(\frac{(4L\alpha t+9{L}^2d^2)^{1/2}}{\alpha^{2}} \right)\frac{\sqrt{3}d\sigma}{t^{2}} + \frac{9G^{2}d}{\alpha^{2}t^{2}}
\\\label{eq:2xx}&\leq&\Big(1-\frac{1}{t}\Big)r_{t}+A_{6}'\frac{d}{\alpha^{\frac{3}{2}}t^{\frac{3}{2}}}+A_{7}'\frac{d^{2}}{\alpha^{2}t^{2}},
\end{eqnarray*}
where $A_{6}'=2\sqrt{3L}\sigma$ and $A_{7}'=3\sqrt{3L}\sigma+\frac{9{G}^{2}}{d}$.
Applying Lemma \ref{lemma:bis} for $t\geq 2$ we get
\begin{eqnarray}
\nonumber
r_{t} < \frac{2G^{2}}{\alpha^{2}t}+2A_{6}'\frac{d}{\alpha^{\frac{3}{2}}t^{\frac{1}{2}}}+2A_{7}'\frac{d^{2}}{\alpha^{2}t}.
\end{eqnarray}
\end{proof}
Consider the estimator 
\begin{eqnarray}
\label{est:s}
\hat{x}_{T} = \frac{1}{T-\floor*{{T}/{2}}}\sum_{t=\floor{{T}/{2}}+1}^{T}x_{t}.
\end{eqnarray}
The following two theorems provide bounds on the optimization error of this estimator. 
\begin{theorem}
\label{thm:1-bis}
Let $f\in {\cal F} _{\alpha, \beta}(L)$ with $\beta \geq 2$, $\alpha,L>0$, $\sigma>0$, and 
	  let Assumptions \ref{ass1} and \ref{ass:lip} hold. 
	  Consider Algorithm \ref{algo} where $\com$ is a convex compact subset  of $\mathbb{R}^d$ and assume that $\max_{x\in\com}\|\nabla f(x)\|\le G $.
If $h_t = \left(\frac{3\kappa\sigma^2}{2(\beta-1)(\kappa_\beta L)^2} \right)^{\frac{1}{2\beta}} t^{-\frac{1}{2\beta}}$ and $\eta_t=\frac{2}{\alpha t}$ then  the optimization error of the estimator  (\ref{est:s}) satisfies 
	 \begin{eqnarray}
\nonumber
 \mathbb{E} [ f(\hat{x}_{T}) - f(x^{*})]
  &\leq&  \min\left(GB, \frac{1}{\alpha}\bigg(d^2
 \Big(\frac{A_1'}{T^{\frac{\beta-1}{\beta}}} +\frac{A_2'}{T} \Big) +\frac{A_{3}'d}{T}\bigg)\right),
  \end{eqnarray}
  where $x^{*} = \argmin_{x \in \Theta}f(x)$. Here $A_{1}',A_{2}'$ and $A_{3}'$ are  positive constants that do not depend on $d,\alpha,T$, and $B$ is the Euclidean diameter of $\com$.
\end{theorem}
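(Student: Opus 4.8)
The plan is to combine the per-step inequality already established in the proof of Theorem~\ref{thm:1} with the explicit control of $\mathbb{E}\big[\norm{x_t-x^*}^2\big]$ provided by Theorem~\ref{th:0-bis}(i), exploiting the fact that averaging over the second half of the trajectory replaces the divergent harmonic sum $\sum_{t=1}^T t^{-1}$ by a bounded tail sum. First, by the convexity of $f$ and Jensen's inequality applied to the estimator~\eqref{est:s},
\[
\mathbb{E}[f(\hat x_T)-f(x^*)]\le \frac{1}{T-\floor{T/2}}\sum_{t=\floor{T/2}+1}^T \mathbb{E}[f(x_t)-f(x^*)],
\]
so it suffices to control the partial sum of per-step errors over the second half. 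For this I would reuse inequality~\eqref{eq:A} from the proof of Theorem~\ref{thm:1} with the choice $x=x^*$ and $r_t=\mathbb{E}\norm{x_t-x^*}^2$, namely
\[
\mathbb{E}[f(x_t)-f(x^*)]\le \frac{r_t-r_{t+1}}{2\eta_t}-\frac{\alpha}{4}r_t+(\kappa_\beta L)^2\frac{d^2}{\alpha}h_t^{2(\beta-1)}+\frac{\eta_t}{2}\mathbb{E}[\norm{\hat g_t}^2].
\]

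With $\eta_t=2/(\alpha t)$ the ``difference'' part telescopes: summing $\frac{\alpha}{4}\big[(t-1)r_t-t\,r_{t+1}\big]$ over $t\in\{\floor{T/2}+1,\dots,T\}$ collapses to $\frac{\alpha}{4}\big[\floor{T/2}\,r_{\floor{T/2}+1}-T\,r_{T+1}\big]$, which is at most $\frac{\alpha}{4}\floor{T/2}\,r_{\floor{T/2}+1}$. Inserting the bound $r_{\floor{T/2}+1}<\frac{2G^2}{\alpha^2\floor{T/2}}+A_5\frac{d^2}{\alpha^2}\floor{T/2}^{-(\beta-1)/\beta}$ from~\eqref{eq1:th:0-bis}, the factor $\floor{T/2}$ exactly cancels the $1/\floor{T/2}$ inside the first summand, turning this telescoped quantity into a contribution of order $\frac{G^2}{\alpha}+\frac{d^2}{\alpha}T^{1/\beta}$ with \emph{no} logarithm.

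The residual bias sum $\sum_t(\kappa_\beta L)^2\frac{d^2}{\alpha}h_t^{2(\beta-1)}$ and the stochastic sum $\sum_t\frac{\eta_t}{2}\mathbb{E}[\norm{\hat g_t}^2]$ are handled exactly as in Theorem~\ref{thm:1} via Lemma~\ref{lem:2unco}(ii) and the choice $h_t$ proportional to $t^{-1/(2\beta)}$: the bias and the $d^2\sigma^2/h_t^2$ stochastic term each contribute a sum of order $T^{1/\beta}$, while the $d^2 h_t^2$ term contributes $O(d^2/\alpha)$. The decisive difference from Theorem~\ref{thm:1} is the bounded-gradient contribution $\sum_{t=\floor{T/2}+1}^T \frac{9\kappa G^2 d}{\alpha t}$, which now equals $\frac{9\kappa G^2 d}{\alpha}\big(\sum_{t=\floor{T/2}+1}^T t^{-1}\big)=O(d/\alpha)$ because $\sum_{t=\floor{T/2}+1}^T t^{-1}$ is bounded by a numerical constant close to $\log 2$, rather than the $O((d/\alpha)\log T)$ produced by summing from $1$. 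Collecting all pieces, the partial sum is of order $\frac{G^2}{\alpha}+\frac{d^2}{\alpha}T^{1/\beta}+\frac{d^2}{\alpha}+\frac{d}{\alpha}$; dividing by $T-\floor{T/2}$, which is comparable to $T$, yields the three announced terms $d^2 A_1' T^{-(\beta-1)/\beta}$, $d^2 A_2' T^{-1}$ and $A_3' d\, T^{-1}$, all divided by $\alpha$, with constants independent of $d,\alpha,T$. The alternative bound $GB$ follows from the elementary estimate $f(x_t)-f(x^*)\le\langle\nabla f(x_t),x_t-x^*\rangle\le G\norm{x_t-x^*}\le GB$, using $\max_{x\in\com}\norm{\nabla f(x)}\le G$ and the diameter $B$ of $\com$, and then averaging.

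The only delicate point is the interplay between the telescoping step and the bound~\eqref{eq1:th:0-bis}: one must check that the factor $\floor{T/2}$ multiplying $r_{\floor{T/2}+1}$ precisely offsets the leading $1/\floor{T/2}$ term of~\eqref{eq1:th:0-bis}, so that the $G^2$ contribution stays of order $G^2/\alpha$ (hence $G^2/(\alpha T)$ after normalization) instead of growing logarithmically, and that the remaining summand of~\eqref{eq1:th:0-bis} produces exactly the optimal $T^{1/\beta}$ order. Everything else is a routine reprise of the computation carried out in Theorem~\ref{thm:1}.
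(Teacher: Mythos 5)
Your proposal is correct and follows essentially the same route as the paper's own proof: sum the per-step inequality \eqref{eq:A} over the second half of the trajectory, bound the leftover telescoping boundary term $\sim\alpha\floor{T/2}\,r_{\floor{T/2}+1}$ via Theorem \ref{th:0-bis}(i), exploit that the tail harmonic sum $\sum_{t=\floor{T/2}+1}^{T}t^{-1}$ is bounded by a constant, and finish with Jensen and the trivial $GB$ bound. The only differences are cosmetic (a factor $\alpha/4$ versus the paper's looser $\alpha/2$ in the boundary term, and evaluating \eqref{eq1:th:0-bis} at $\floor{T/2}$ rather than $\floor{T/2}+1$), neither of which affects validity or the stated constants' independence of $d,\alpha,T$.
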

\begin{proof}
With the same steps as in the proof of Theorem \ref{thm:1} (see (\ref{eq:lll})) but taking now the sum over $t=\floor{{T}/{2}}+1,\dots,T$ rather than over $t=1,\dots,T$ we obtain
\begin{align*}
\nonumber
 \sum_{t=\floor{{T}/{2}}+1}^T \mathbb{E} [ f(x_t) - f(x^{*})]
  &\leq r_{\floor{{T}/{2}}+1}\frac{\floor{{T}/{2}}\alpha}{2}+ \frac{1}{\alpha} \sum_{t=\floor{{T}/{2}}+1}^T\Big( (\kappa_\beta L)^2 d^2 h_t^{2(\beta-1)} +\\\label{eq:ni}&\quad \quad  +\frac{1}{t}\Big[9\kappa
  \Big(G^2 d + \frac{{\bar L}^2d^2h_t^2}{8} \Big) + \frac{3\kappa d^2 \sigma^2}{2h_t^2}\Big] \Big)
  \\
  &\leq r_{\floor{{T}/{2}}+1}\frac{\floor{{T}/{2}}\alpha}{2}
  +\frac{9\kappa G^2 d }{\alpha} \sum_{t=\floor{{T}/{2}}+1}^T \frac{1}{t}
  \\ &\quad \quad
  + \frac{1}{\alpha} \sum_{t=1}^T \Big( (\kappa_\beta L)^2 d^2 h_t^{2(\beta-1)} + \frac{{\bar L}^2d^2h_t^2}{8t}
  + \frac{3\kappa d^2 \sigma^2}{2h_t^2 t}\Big).
\end{align*}
For the last sum here, we use 
exactly the same bound as in the proof of Theorem \ref{thm:1}. Moreover, it follows from Theorem \ref{th:0-bis} that 
$$r_{\floor*{{T}/{2}}+1} < \frac{4G^{2}}{\alpha^{2}T}+A_{5}'\frac{d^{2}}{\alpha^{2}}T^{-\frac{\beta-1}{\beta}},$$
where $A_{5}' = 2^{(\beta-1)/\beta}A_5$. 
Combining these remarks and using the fact that $\sum_{t=\floor*{{T}/{2}}+1}^{T} \frac{1}{t} \le \log(T/\floor*{{T}/{2}}) \le 2$ for all $T\ge 2$ (recall that we assume $T\ge 2$ throughout the paper), as well as the the convexity of $f$ we get
\begin{eqnarray}
\nonumber
  \mathbb{E} [ f(\hat{x}_{T}) - f(x^{*})]
  &\leq&  \frac{1}{\alpha}\Bigg(
d^2 \Big(\frac{A_1'}{T^{\frac{\beta-1}{\beta}}}+\frac{A_{2}'}{T} \Big) +\frac{A_{3}'d}{T}{\Bigg)},
  \end{eqnarray}
 where $A_{1}'={2A_{1}+\frac{A_{5}'}{2}}$, 
$A_2'=2\bar c \bar{L}^{2}(\sigma/L)^{\frac{2}{\beta}}$
with constant $\bar c$ as in Theorem \ref{thm:1} and 
 $A_{3}'=2G^2 (18 \kappa +1/d)$. 
On the other hand we have the straightforward bound
\begin{equation}
\nonumber
\mathbb{E} [ f(\hat{x}_{T}) - f(x^{*})] \le GB.
\end{equation}
\end{proof}
\begin{theorem}
\label{th6.3-bis}
Let $f\in {\cal F} _{\alpha, 2}(L)$ with $\alpha,L>0$, $\sigma>0$, and 
	  let Assumption \ref{ass1} hold. 
	  Consider the version of Algorithm \ref{algo} as in Theorem \ref{th6.3} where $\com$ is a convex compact subset  of $\mathbb{R}^d$ and {assume that $\max_{x\in\com}\|\nabla f(x)\|\le G $}.  If $h_{t}= \left(\frac{3d^2\sigma^2}{4L\alpha t+9L^2d^2}\right)^{1/4}$ and  $\eta_t=\frac{1}{\alpha t}$ then the optimization error of the estimator (\ref{est:s}) satisfies
	  \begin{equation}
	\label{eq0:th6.3-bis}
 \mathbb{E} [ f(\hat{x}_{T}) - f(x^{*})]
	\le 
	\min\left(GB, A_{8}
\frac{d}{\sqrt{\alpha T}} + A_{9}\frac{d^{2}}{\alpha T}\right),
	\end{equation}
	where $x^{*} = \argmin_{x \in \Theta}f(x)$. Here $A_{8}$ and $A_{9}$ are  positive constants that do not depend on $d,\alpha,T$, and $B$ is the Euclidean diameter of $\com$.
\end{theorem}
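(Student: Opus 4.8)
The plan is to mimic the tail-averaging argument of Theorem \ref{thm:1-bis}, but to replace the kernel-based gradient estimator by the surrogate-function machinery used in the proof of Theorem \ref{th6.3}. Writing $T_1 = \floor{T/2}$ and $r_t = \mathbb{E}\norm{x_t - x^*}^2$, I would start from the per-step inequality \eqref{ef5a:th6.3}, that is $\mathbb{E}[f(x_t) - f(x^*)] \le L h_t^2 + \tfrac{r_t - r_{t+1}}{2\eta_t} - \tfrac{\alpha}{2}r_t + \tfrac{\eta_t}{2}\mathbb{E}\|\hat g_t\|^2$, which holds for the estimator \eqref{gt} because the surrogate $\hat f_t$ is $\alpha$-strongly convex (Lemma \ref{lem6.4}), satisfies $\hat f_t \ge f$ (Lemma \ref{lem6.2}(ii)) and $|\hat f_t - f| \le L h_t^2$ (inequality \eqref{lem6.4_1}). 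The only structural change from Theorem \ref{th6.3} is that I sum over $t = T_1+1, \dots, T$ instead of $t = 1, \dots, T$.

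With $\eta_t = 1/(\alpha t)$ the descent part telescopes exactly as in the proof of Theorem \ref{thm:2}: using $\frac{1}{2\eta_t} = \frac{\alpha t}{2}$ and the identity $\sum_{t=T_1+1}^T\big[(r_t - r_{t+1})t - r_t\big] \le T_1 r_{T_1+1}$, I would obtain
\[
\sum_{t=T_1+1}^T \mathbb{E}[f(x_t) - f(x^*)] \le \frac{\alpha T_1}{2}\, r_{T_1+1} + \sum_{t=T_1+1}^T\Big(L h_t^2 + \frac{1}{2\alpha t}\,\mathbb{E}\|\hat g_t\|^2\Big).
\]
For the remaining sum I would insert the variance bound $\mathbb{E}\|\hat g_t\|^2 \le 9\big(G^2 d + L^2 d^2 h_t^2/2\big) + 3 d^2\sigma^2/(2 h_t^2)$ (the $\kappa=1$ specialization of Lemma \ref{lem:2unco} used in Theorem \ref{th6.3}) together with the optimal choice $h_t = \big(3 d^2\sigma^2/(4L\alpha t + 9 L^2 d^2)\big)^{1/4}$. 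This is verbatim the calculation that produced \eqref{eq1:th6.3}; the crucial difference is that the harmonic-type sum now runs only over the tail, so $\sum_{t=T_1+1}^T 1/t \le \log(T/T_1) \le \log 2$, which converts the $\frac{d^2}{\alpha}\log T$ contribution of Theorem \ref{th6.3} into a mere $\frac{d^2}{\alpha}$ term, while the leading stochastic contribution stays of order $\sqrt{L}\,\sigma\, d\,\sqrt{T/\alpha}$.

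The crux, and where I expect the main difficulty to sit, is the boundary term $\frac{\alpha T_1}{2} r_{T_1+1}$. The naive bound $r_{T_1+1} \le G^2/\alpha^2$ coming from strong convexity and $G$-Lipschitzness is useless here, since it would leave a non-vanishing $O(G^2/\alpha)$ contribution after division by $T - T_1$. Instead I would invoke the non-asymptotic second-moment estimate \eqref{eq2:th:0-bis} of Theorem \ref{th:0-bis}(ii)---itself a consequence of the version of Chung's lemma in Lemma \ref{lemma:bis}---which gives $r_{T_1+1} \le \frac{2G^2}{\alpha^2 T_1} + A_6\frac{d}{\alpha^{3/2}\sqrt{T_1}} + A_7\frac{d^2}{\alpha^2 T_1}$. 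Multiplying by $\frac{\alpha T_1}{2}$ produces terms of order $\frac{G^2}{\alpha}$, $\frac{d\sqrt{T_1}}{\sqrt{\alpha}}$ and $\frac{d^2}{\alpha}$.

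Finally I would divide the whole display by $T - T_1 \ge T/2$ and apply Jensen's inequality (convexity of $f$) to pass from $\frac{1}{T-T_1}\sum_{t=T_1+1}^T f(x_t)$ to $f(\hat x_T)$, where $\hat x_T$ is the estimator \eqref{est:s}. Using $T_1 \asymp T$, every contribution then collapses either to the $\frac{d}{\sqrt{\alpha T}}$ scale (the leading stochastic term and the $\frac{d\sqrt{T_1}}{\sqrt{\alpha}}$ boundary term) or to the $\frac{d^2}{\alpha T}$ scale (all other terms, absorbing the $G^2/\alpha$ piece into $d^2$ via $d \ge 1$), which gives the bound $A_8\frac{d}{\sqrt{\alpha T}} + A_9\frac{d^2}{\alpha T}$. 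The alternative bound $GB$ follows from $\norm{\hat x_T - x^*} \le B$ and the $G$-Lipschitzness of $f$ on $\com$, and taking the minimum of the two yields \eqref{eq0:th6.3-bis}.
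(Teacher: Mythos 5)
Your proposal is correct and follows essentially the same route as the paper's own proof: tail-averaging over $t=\floor{T/2}+1,\dots,T$ of the per-step surrogate inequality from the proof of Theorem \ref{th6.3}, the $\kappa=1$ variance bound with the prescribed $h_t$ so that the tail harmonic sum $\sum_{t>\floor{T/2}} 1/t \le \log 2$ kills the $\log T$ factor, control of the boundary term $\frac{\alpha \floor{T/2}}{2}\,r_{\floor{T/2}+1}$ via Theorem \ref{th:0-bis}(ii), and finally Jensen's inequality together with the trivial $GB$ bound. All the key ingredients (the telescoping with $\eta_t = 1/(\alpha t)$, the scales $d/\sqrt{\alpha T}$ and $d^2/(\alpha T)$ of the resulting terms, and the absorption of the $G^2/\alpha$ piece) coincide with the paper's argument.
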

\begin{proof}
Arguing as in the proof of Theorem \ref{th6.3} 
but taking the sum over $\floor*{{T}/{2}}+1,\dots,T$ rather than over $1,\dots,T$ we obtain
\begin{align*}
\label{ex:y2}
 \sum_{t=\floor*{{T}/{2}}+1}^{T} \mathbb{E}\big[f(x_t) - f(x^{*})\big]  
&\le r_{\floor*{{T}/{2}}+1}\frac{\floor*{{T}/{2}}\alpha}{2}+
\sum_{t=\floor*{{T}/{2}}+1}^{T} \Big[ \Big( L +  \frac{9L^2d^2}{4\alpha t} \Big)h_t^2 
+ \frac{3 d^2 \sigma^2}{4h_t^2\alpha t}+
 \frac{9G^2d}{2\alpha t} \Big]
 \\
&\le 
 r_{\floor*{{T}/{2}}+1}\frac{\floor*{{T}/{2}}\alpha}{2}+\sum_{t=\floor*{{T}/{2}}+1}^{T}  \Big[
\sqrt{3}\frac{d\sigma\sqrt{L}}{\sqrt{\alpha t}} + \frac{3\sqrt{3}Ld^2\sigma}{2\alpha t}  + \frac{9G^2d}{2\alpha t} \Big]
\\
&\le 
 r_{\floor*{{T}/{2}}+1}\frac{\floor*{{T}/{2}}\alpha}{2}
 + 2\sqrt{3L}\sigma \frac{d}{\sqrt{\alpha }}\sqrt{T}
 +\frac{3d}{2\alpha }(
\sqrt{3} Ld\sigma + {3}G^2 )\sum_{t=\floor*{{T}/{2}}+1}^{T} \frac{1}{t}
\\
&\le 
 r_{\floor*{{T}/{2}}+1}\frac{\floor*{{T}/{2}}\alpha}{2}
 + 2\sqrt{3L}\sigma \frac{d}{\sqrt{\alpha }}\sqrt{T}
 +\frac{3d}{\alpha }(
\sqrt{3} Ld\sigma + {3}G^2 ),
\end{align*}
where we have used the inequality $\sum_{t=\floor*{{T}/{2}}+1}^{T} \frac{1}{t} \le \log(T/\floor*{{T}/{2}}) \le 2$ for all $T\ge 2$ (recall that we assume $T\ge 2$ throughout the paper). 
It follows from Theorem \ref{th:0-bis}, that
\begin{equation*}
\label{eq:yt}
	r_{\floor*{{T}/{2}}+1} < \frac{4G^{2}}{\alpha^{2}T}+\sqrt{2}A_{6}\frac{d}{\alpha^{\frac{3}{2}}T^{\frac{1}{2}}}+2A_{7}\frac{d^{2}}{\alpha^{2}T}.
	\end{equation*}
Combining the last two displays yields
\begin{align}
 \sum_{t=\floor*{{T}/{2}}+1}^{T}
 \mathbb{E}\big[f(x_t) - f(x^{*})\big]  
&\le \frac{G^{2}}{\alpha }+A_{6}\frac{d}{2\sqrt{2}\sqrt{\alpha}}\sqrt{T}+A_{7}\frac{d^{2}}{2\alpha}+
2\sqrt{3L}\sigma \frac{d}{\sqrt{\alpha }}\sqrt{T}+\frac{3d}{\alpha }(
\sqrt{3} Ld\sigma + {3}G^2 ) . \nonumber
\end{align}
From this inequality, using the fact that $f$ is a convex function, we obtain
\begin{align}
\mathbb{E} [f(\hat{x}_{T})-f(x^{*})] 
&\le A_{8}\frac{d}{\sqrt{\alpha T}}+A_{9}\frac{d^{2}}{\alpha T},
\nonumber
\end{align}
where $A_8= \frac{A_{6}}{\sqrt{2}} + 4\sqrt{3L}\sigma 
$ and $A_9=A_7 + 2(3\sqrt{3} L\sigma  + {(9 d +1)G^2}/{d^2} )$.
\end{proof}
\begin{theorem}
\label{th6.3-bisx}
	Let $f\in {\cal F} _{\alpha, 2}(L)$ with $\alpha,L>0$, and let Assumption \ref{ass1} hold. Consider the version of Algorithm \ref{algo} as in Theorem \ref{th6.3} where $\com$ is a convex compact subset  of $\mathbb{R}^d$, and $h_t = \left(\frac{3d^2\sigma^2}{4L\alpha t} \right)^{\frac{1}{4}} $, $\eta_t=\frac{1}{\alpha t}$. If $f$ is Lipschitz continuous with Lipschitz constant $G$ on the Euclidean $h_1$-neighborhood of $\com$, then for $\sigma>0$ we have the following bound for the cumulative regret:
	  \begin{equation}
	\label{eq0:th6.3-bisx}
	\forall x\in \com: \ \sum_{t=1}^T \mathbb{E} [ f(x_t) - f(x)]
	\le 
	\min\left(GBT, 2\sqrt{3L}\sigma
\frac{d}{\sqrt{\alpha}} \sqrt{T}+ \frac{C^*G^2}{2}\frac{d}{\alpha} (1+\log T)\right),
	\end{equation}
	where $B$ is the Euclidean diameter of $\com$. 
	
	If $\sigma=0$, then the cumulative regret for any $h_t$ chosen small enough and $\eta_t=\frac{1}{\alpha t}$ satisfies
		\begin{equation*}
	\label{eq:b}
	\forall x\in \com: \ \sum_{t=1}^T \mathbb{E} [ f(x_t) - f(x)]\le 
		\min\left(GBT, C^* G^2  \,\frac{ d}{\alpha} (1+\log T) \right)
	\end{equation*}
\end{theorem}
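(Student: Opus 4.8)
The plan is to follow the proof of Theorem~\ref{th6.3} line by line, changing only the bound used for the second moment of the gradient estimator. All of the improvement from $\frac{d^2}{\alpha}\log T$ down to $\frac{d}{\alpha}\log T$ will come from replacing Lemma~\ref{lem:2unco}(ii) by Lemma~\ref{lem:2}: the latter, proved through a concentration-on-the-sphere argument, bounds $\mathbb{E}[\|\hat g_t\|^2\,|\,x_t]$ by $C^*G^2 d + \frac{3d^2\sigma^2}{2h_t^2}$ (with $\kappa=1$ for the kernel-free estimator \eqref{gt}, exactly as Lemma~\ref{lem:2unco} was adapted inside Theorem~\ref{th6.3}), and crucially contains \emph{no} $\frac{L^2 d^2 h_t^2}{2}$ term. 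It is precisely that missing $d^2 h_t^2$ contribution which generated the $\frac{d^2}{\alpha}\log T$ term in Theorem~\ref{th6.3}.

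First I would reproduce the surrogate-function reduction. Because $f\in\mathcal{F}_{\alpha,2}(L)$ is $\alpha$-strongly convex, Lemma~\ref{lem6.4} makes $\hat f_t$ $\alpha$-strongly convex, Lemma~\ref{lem6.2}(ii)--(iii) give $\hat f_t\ge f$ and $\mathbb{E}[\hat g_t\,|\,x_t]=\nabla\hat f_t(x_t)$, while \eqref{lem6.4_1} gives $|\hat f_t(x)-f(x)|\le L h_t^2$. Combining these with the projection inequality \eqref{eq:GD} exactly as in Theorem~\ref{th6.3}, then summing over $t$ with $\eta_t=\frac{1}{\alpha t}$ so that the strong-convexity terms telescope away, yields
\[
\sum_{t=1}^T \mathbb{E}[f(x_t)-f(x)] \le \sum_{t=1}^T\Big(L h_t^2 + \tfrac{1}{2\alpha t}\,\mathbb{E}[\|\hat g_t\|^2]\Big).
\]

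Next I would insert Lemma~\ref{lem:2}. It requires $f$ to be $G$-Lipschitz on the Euclidean $h_t$-neighborhood of $\com$; since the prescribed $h_t$ decreases in $t$, the $h_1$-neighborhood contains every $h_t$-neighborhood, so the single hypothesis of the theorem covers all $t$. This gives
\[
\sum_{t=1}^T \mathbb{E}[f(x_t)-f(x)] \le \sum_{t=1}^T\Big(L h_t^2 + \tfrac{3 d^2\sigma^2}{4\alpha t\, h_t^2} + \tfrac{C^*G^2 d}{2\alpha t}\Big).
\]
For each $t$ the pair $L h_t^2+\frac{3d^2\sigma^2}{4\alpha t h_t^2}$ is minimized exactly by the prescribed $h_t=(3d^2\sigma^2/(4L\alpha t))^{1/4}$, with minimum value $\sqrt{3L}\,\sigma\, d/\sqrt{\alpha t}$; summing via $\sum_{t=1}^T t^{-1/2}\le 2\sqrt T$ yields the leading term $2\sqrt{3L}\,\sigma\,\frac{d}{\sqrt\alpha}\sqrt T$, and $\sum_{t=1}^T t^{-1}\le 1+\log T$ turns the last term into $\frac{C^*G^2}{2}\frac{d}{\alpha}(1+\log T)$. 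The minimum with $GBT$ comes from the trivial estimate $f(x_t)-f(x)\le G\|x_t-x\|\le GB$, valid because $f$ is $G$-Lipschitz on $\com$ and $B$ is its diameter. For $\sigma=0$ the stochastic term vanishes and one takes $h_t$ small enough that $\sum_t L h_t^2\le \frac{C^*G^2}{2}\frac{d}{\alpha}(1+\log T)$, which is absorbed into the stated $C^*G^2\frac{d}{\alpha}(1+\log T)$.

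The only genuinely new ingredient, and the step I expect to matter most, is the switch to the sharper variance bound of Lemma~\ref{lem:2}, legitimized by the stronger neighborhood-Lipschitz hypothesis. Its payoff is that the optimal bandwidth no longer carries the additive $9L^2 d^2$ correction inside its denominator (as it did in Theorem~\ref{th6.3}); hence the square-root expansion of the regret no longer splits off a $\frac{d^2}{\alpha}\log T$ piece, and only the genuinely dimension-linear $G^2 d$ term survives in the logarithmic part. Everything else is a verbatim repetition of the computation in the proof of Theorem~\ref{th6.3}.
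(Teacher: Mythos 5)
Your proposal is correct and follows essentially the same route as the paper: the paper's proof likewise repeats the surrogate-function argument of Theorem~\ref{th6.3} and swaps Lemma~\ref{lem:2unco}(ii) for Lemma~\ref{lem:2} (with $\kappa=1$ since $K\equiv 1$), then minimizes $Lh_t^2 + \frac{3d^2\sigma^2}{4\alpha t h_t^2}$ at the prescribed $h_t$ and sums. Your observation that the $h_1$-neighborhood hypothesis covers all later steps because $h_t$ is decreasing is a detail the paper leaves implicit, and you have it right.
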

\begin{proof}
The  argument is analogous to the proof of Theorem \ref{th6.3}. The difference is only in the bound on $
\mathbb{E} [\|{\hat g}_t \|^2 ]$. To evaluate this term, we now use Lemma \ref{lem:2} (noticing that when $K(\cdot)\equiv 1$ this lemma is satisfied with $\kappa=1$). This yields
\begin{align}\label{eq1:th6.3-bis}
\forall x\in \Theta: \qquad \mathbb{E} \sum_{t=1}^{T} \big(f(x_t) - f(x)\big)  
&\le \sum_{t=1}^{T} \Big[ Lh_t^2 + \frac{1}{2\alpha t}\Big( C^*G^2d +  \frac{3 d^2 \sigma^2}{2h_t^2}\Big)
 \Big]. 
\end{align}
The chosen value $h_t = \left(\frac{3d^2\sigma^2}{4L\alpha t} \right)^{\frac{1}{4}} $ minimizes the r.h.s. and together with \eqref{ef8:th6.3} yields 
\eqref{eq0:th6.3-bisx}. The remaining part of the proof follows the same lines as in Theorem \ref{thm:1}.
\end{proof}
\end{appendices}
\end{document}